\newif\ifwithcomments
\newcommand{\Vikas}[1]{{\color{red} Vikas: #1}}
\newcommand{\Vikas}[1]{}
\newcommand{\Jiyan}[1]{{\color{green} Jiyan: #1}}
\newcommand{\Jiyan}[1]{}
\newcommand{\Haim}[1]{{\color{blue} Haim: #1}}
\newcommand{\Haim}[1]{}
\newcommand{\Michael}[1]{{\color{cyan} Michael: #1}}
\newcommand{\Michael}[1]{}
\DeclareMathOperator{\argmin}{arg\,min}
\DeclareMathOperator{\Real}{Re}
\DeclareMathOperator{\erf}{erf}
\DeclareMathOperator{\rect}{rect}
\DeclareMathOperator{\Vol}{Vol}
\DeclareMathOperator{\disr}{disr}
\DeclareMathOperator{\diag}{diag}
\DeclareMathOperator{\sinc}{sinc}
\newlength{\defbaselineskip}
\begin{document}
\newtheorem{theorem}{Theorem}
\newtheorem{conjecture}[theorem]{Conjecture}
\newtheorem{definition}[theorem]{Definition}
\newtheorem{lemma}[theorem]{Lemma}
\newtheorem{proposition}[theorem]{Proposition}
\newtheorem{corollary}[theorem]{Corollary}
\newtheorem{claim}[theorem]{Claim}
\newtheorem{fact}[theorem]{Fact}
\newtheorem{openprob}[theorem]{Open Problem}
\newtheorem{remark}[theorem]{Remark}
\newtheorem{exmp}[theorem]{Example}
\newtheorem{apdxlemma}{Lemma}

\def\X{\mathcal{X}}
\def\Y{\mathcal{Y}}
\def\P{\mathcal{P}}
\def\H{\mathcal{H}}
\def\E{\mathcal{E}}
\def\F{\mathcal{F}}
\def\N{\mathcal{N}}
\def\B{\mathbb{B}}

\newcommand{\ExpectSub}[2]{\mbox{}{\mathbb{E}}_{#1}\left[#2\right]}
\newcommand{\Expect}[1]{\mbox{}{\mathbb{E}}\left[#1\right]}

\def\reals{\mathbb{R}}
\def\complex{\mathbb{C}}
\def\Expectation{\mathbb{E}}
\newcommand*\conj[1]{\overline{#1}}

\newcommand{\vv}[1] {\mathbf{#1}}
\def\b{\vv{b}}
\def\u{\vv{u}}
\def\v{\vv{v}}
\def\t{\vv{t}}
\def\x{\vv{x}}
\def\y{\vv{y}}
\def\w{\vv{w}}
\def\z{\vv{z}}
\newcommand{\ww}[1] {\boldsymbol{#1}}
\def\wdelta{\ww{\delta}}
\def\womega{\ww{\omega}}
\def\wphi{\ww{\phi}}
\def\Dbox{D^{\Box}}
\def\Dboxb{D^{\Box \b}}
\newtheorem{defn}{Definition}
\newtheorem{thm}{Theorem}

\title{Quasi-Monte Carlo Feature Maps for Shift-Invariant Kernels
\thanks{A short version of this paper has been presented in ICML 2014.}
}
       
\author{
       Haim Avron\thanks{Equal contributors} 
       \thanks{
       Mathematical Sciences \& Analytics,
       IBM T.J. Watson Research Center,
       Yorktown Heights, NY 10598, USA.
       Email: haimav@us.ibm.com }
       \and
       Vikas Sindhwani\footnotemark[2]
       \thanks{
        Google Research,
       New York, NY 10011, USA.
       Email: sindhwani@google.com }
       \and
       Jiyan Yang\footnotemark[2]
        \thanks{
        Institute for Computational and Mathematical Engineering,
       Stanford University,
       Stanford, CA 94305, USA.
       Email: jiyan@stanford.edu }
       \and
       Michael W. Mahoney 
       \thanks{
        International Computer Science Institute and Department of Statistics,
       University of California at Berkeley,
       Berkeley, CA 94720, USA.
       Email: mmahoney@stat.berkeley.edu }
       }

\date{}
\maketitle

\begin{abstract}%
We consider the problem of improving the efficiency of randomized Fourier
feature maps to accelerate training and testing speed of
kernel methods on large datasets. These approximate feature maps
arise as Monte Carlo approximations to integral representations of
shift-invariant kernel functions (e.g., Gaussian kernel). In this paper, we
propose to use {\it Quasi-Monte Carlo} (QMC) approximations instead,
where the relevant integrands are evaluated on a low-discrepancy sequence of
points as opposed to random point sets as in the Monte Carlo approach. We derive
a new discrepancy measure called {\it box discrepancy} based on theoretical
characterizations of the integration error with respect to a given sequence. We
then propose to learn QMC sequences adapted to our setting based on explicit box
discrepancy minimization. Our theoretical analyses are complemented with
empirical results that demonstrate the effectiveness of classical and adaptive QMC
techniques for this problem.
\end{abstract}

\section{Introduction}
Kernel methods~\citep{KernelsBook,WahbaBook,CS01} offer a comprehensive suite of mathematically
 well-founded non-parametric modeling techniques for a wide range of 
 problems in machine learning. These include  nonlinear classification, regression, clustering, semi-supervised learning~\citep{MR}, time-series analysis~\citep{KernelsTimeSeries},
 sequence modeling~\citep{RKHSHMM}, dynamical systems~\citep{RKHSPSR}, 
 hypothesis testing~\citep{KernelsHypothesisTesting}, 
 causal modeling~\citep{KernelsCausality} and many more.

The central object of kernel methods is a kernel function $k: \X \times \X \to \reals$
 defined on an input domain $\X \subset \reals^d$ \footnote{In fact, $\X$ can be a rather
 general set. However, in this paper it is restricted to being a subset of $\reals^d$.}.
 The kernel $k$ is (non-uniquely) associated with an
embedding of the input space into a high-dimensional Hilbert space $\H$ (with
inner product $\langle \cdot, \cdot \rangle_{\H}$) via a feature map, $\Psi:
\X \to \H$, such that $$k(\vv{x}, \vv{z}) = \langle \Psi(\vv{x}),
\Psi(\vv{z})\rangle_{\H}~.$$ Standard regularized linear statistical models
in $\H$ then provide non-linear inference with respect to the original input
representation. The algorithmic basis of such constructions are classical
Representer Theorems~\citep{WahbaBook,KernelsBook} that guarantee
finite-dimensional solutions of associated optimization problems, even if $\H$ is infinite-dimensional.

However, there is a steep price of these elegant generalizations in terms of scalability. Consider,
for example, least squares regression given $n$ data points $\{(\vv{x}_i, y_i)\}_{i=1}^n$ and
assume that $n \gg d$. The complexity of linear regression training using
standard least squares solvers is $O(n d^2)$, with $O(nd)$ memory requirements,
and $O(d)$ prediction speed on a test point. Its kernel-based nonlinear
counterpart, however, requires solving a linear system involving the Gram matrix of the kernel
function (defined by $\vv{K}_{ij} = k(\vv{x}_i, \vv{x}_j)$). In general, this
incurs $O(n^3 + n^2d)$ complexity for training, $O(n^2)$ memory requirements,
and $O(nd)$ prediction time for a single test point -- none of which are
particularly appealing in ``big data'' settings. Similar conclusions apply to other algorithms
such as Kernel PCA.

This is rather unfortunate, since non-parametric models, such as the ones produced by kernel methods,
are particularly appealing in a big data settings as they can adapt to the full complexity of the underlying domain,
as uncovered by increasing dataset sizes. It is well-known that imposing strong structural constraints upfront
for the purpose of allowing an efficient solution (in the above example: a linear hypothesis space) often
limits, both theoretically and empirically, the potential to deliver value on large amounts of data.
Thus, as big data becomes pervasive across a number of  application domains,
it has become necessary to be able to develop highly scalable algorithms for kernel methods.

Recent years have seen intensive research on improving the scalability of kernel methods;
we review some recent progress in the next section.
In this paper, we revisit one of the most successful techniques, namely the randomized construction
of a family of low-dimensional approximate feature maps proposed by~\citet{RahimiRecht}.
These randomized feature maps, $\hat{\Psi}: \X \to \complex^s$, provide low-distortion approximations
for (complex-valued) kernel functions $k: \X \times \X \to \complex$:
\begin{equation}\label{eq:approx_featuremap}
k(\vv{x}, \vv{z}) \approx \langle \hat{\Psi}(\vv{x}),
\hat{\Psi}(\vv{z}) \rangle_{\complex^s}
\end{equation} where $\complex^s$ denotes the
space of $s$-dimensional complex numbers with the inner product, $\langle
\alpha, \beta\rangle_{\complex^s} = \sum_{i=1}^s \alpha_i \beta_i^*$, with
$z^*$ denoting the conjugate of the complex number $z$ (though~\citet{RahimiRecht} also
define real-valued feature maps for real-valued kernels, our technical exposition is
simplified by adopting the generality of complex-valued features).
The mapping $\hat{\Psi}(\cdot)$ is now applied to each of the
data points, to obtain a randomized feature representation of the data. We then apply a simple linear
method to these random features. That is, if our data is $\{(\vv{x}_i, y_i)\}_{i=1}^n$ we
learn on $\{(\z_i, y_i)\}_{i=1}^n$ where $\z_i = \hat{\Psi}(\x_i)$.
As long as $s$ is sufficiently smaller than $n$, this leads to more scalable solutions,
e.g., for regression we get back to $O(n s^2)$ training and $O(sd)$ prediction time, with
$O(ns)$ memory requirements. This technique is immensely successful, and has been used
in recent years to obtain state-of-the-art accuracies for some classical datasets~\citep{KernelTIMIT, DaiEtAl14, KernelsHilbertJSM, LuEtAl14}.

The starting point of~\citet{RahimiRecht}
is a celebrated result that characterizes the class of positive definite functions:
\begin{definition} A  function $g:\reals^d \mapsto \complex$ is a {\em positive definite function}
if for any set of $m$ points,
$\vv{x}_1\ldots \vv{x}_m \in \reals^d$, the $m\times m$ matrix $\vv{A}$ defined by $\vv{A}_{ij} = g(\vv{x}_i -
\vv{x}_j)$ is positive semi-definite.
\end{definition}
\begin{theorem}[\citet{Bochner1933}] A complex-valued function $g:\reals^d
\mapsto \complex$ is positive definite if and only if it is the Fourier
Transform of a finite non-negative Borel measure $\mu$ on $\reals^d$, i.e.,
$$
g(\vv{x}) = \hat{\mu}(\vv{x}) = \int_{\reals^d} e^{-i \vv{x}^T \vv{w}}
d\mu(\vv{w}), ~~~~\forall \vv{x} \in \reals^d~.
$$
\end{theorem}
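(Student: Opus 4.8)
The plan is to prove the two implications separately, treating sufficiency (Fourier transform of a measure implies positive definite) as the routine direction and necessity (positive definite implies Fourier transform of a measure) as the substantive one. Throughout I keep the paper's convention $g(\vv{x})=\int_{\reals^d}e^{-i\vv{x}^T\vv{w}}\,d\mu(\vv{w})$.

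For the sufficiency direction I would simply substitute the integral representation into the defining quadratic form. Fix points $\vv{x}_1,\dots,\vv{x}_m$ and coefficients $c_1,\dots,c_m\in\complex$; interchanging the finite sum with the integral (legitimate since $\mu$ is finite) gives
$$\sum_{j,k} c_j \overline{c_k}\, g(\vv{x}_j-\vv{x}_k) = \int_{\reals^d} \Big|\sum_{j} c_j e^{-i\vv{x}_j^T\vv{w}}\Big|^2 d\mu(\vv{w}) \ge 0,$$
where non-negativity is forced by $\mu\ge 0$. Since $\vv{A}_{jk}=g(\vv{x}_j-\vv{x}_k)$ is also Hermitian (as $g(-\vv{x})=\overline{g(\vv{x})}$, read off from the integral using that $\mu$ is real), it is positive semi-definite. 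I note this direction also shows $g$ is uniformly continuous, since $\hat\mu$ always is for finite $\mu$; consequently the converse should be read under the standing assumption that $g$ is continuous, which is the correct hypothesis for Bochner's theorem and holds for every kernel of interest here.

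For the necessity direction the plan is to realize $\mu$ as a weak-$*$ limit of explicit non-negative densities. First I would record the elementary constraints that positive definiteness imposes by testing $1\times1$ and $2\times2$ matrices: $g(0)\ge 0$, $g(-\vv{x})=\overline{g(\vv{x})}$, and $|g(\vv{x})|\le g(0)$, so $g$ is bounded; after normalizing $g(0)=1$ we seek a probability measure. I would then form, for each $T>0$, the candidate density
$$f_T(\vv{w}) = \frac{1}{(2\pi)^d (2T)^d}\int_{[-T,T]^d}\!\int_{[-T,T]^d} g(\vv{u}-\vv{v})\, e^{i(\vv{u}-\vv{v})^T\vv{w}}\, d\vv{u}\, d\vv{v}.$$
The crucial observation is that $f_T(\vv{w})\ge 0$: the double integral is a limit of Riemann sums of the form $\sum_{j,k} a_j\overline{a_k}\, g(\vv{x}_j-\vv{x}_k)$ with $a_j=e^{i\vv{x}_j^T\vv{w}}\,\Delta$, each of which is non-negative by positive definiteness, and continuity of $g$ makes the passage to the limit valid.

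It remains to extract the measure and verify it reproduces $g$. A direct Fej\'er-kernel manipulation shows that the Fourier transform of $\mu_T := f_T\, d\vv{w}$ is $g$ damped by a triangular weight,
$$\int_{\reals^d} e^{-i\vv{x}^T\vv{w}} f_T(\vv{w})\, d\vv{w} = g(\vv{x})\prod_{i=1}^d\Big(1-\tfrac{|x_i|}{2T}\Big)_+,$$
so in particular $\mu_T$ has total mass $g(0)=1$ and its Fourier transform converges pointwise to $g$ as $T\to\infty$. Since the $\mu_T$ have uniformly bounded mass, Banach--Alaoglu (equivalently Helly's selection theorem) furnishes a subsequence $\mu_{T_k}$ converging weak-$*$ to some finite non-negative $\mu$; passing to the limit in the displayed identity gives $\hat\mu(\vv{x})=g(\vv{x})$. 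The main obstacle is precisely this last limit: weak-$*$ convergence alone does not test against the unbounded frequency domain, so one must rule out mass escaping to infinity (tightness) to guarantee $\hat\mu$ equals $g$ rather than a sub-probability version of it. Tightness is exactly where continuity of $g$ at the origin enters, and carrying this through rigorously — or, alternatively, replacing $f_T$ by the cleaner Gaussian multiplier $g(\vv{x})e^{-\varepsilon\|\vv{x}\|^2/2}$ and letting $\varepsilon\downarrow 0$ — is the technical heart of the argument.
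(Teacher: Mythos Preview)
The paper does not supply its own proof of this statement: Bochner's theorem is quoted as a classical result from \citet{Bochner1933} and is used only as background for the construction of random Fourier features. There is therefore no proof in the paper to compare your proposal against.

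That said, your sketch is a sound outline of a standard proof. The sufficiency direction is complete as written. For necessity, the Fej\'er-type averaging you describe is the classical route; your identification of tightness as the crux, and of continuity of $g$ at the origin as the hypothesis that supplies it, is exactly right. One small point worth making explicit before invoking the displayed Fourier-transform identity for $f_T$: you should first argue that $f_T$ is integrable, since a priori it is only known to be non-negative. This follows from the same Fej\'er computation (evaluated at $\vv{x}=0$), which gives $\int f_T\,d\vv{w}=g(0)$, but the logical order matters. Your alternative via a Gaussian mollifier is also standard and arguably cleaner for handling the tightness issue.
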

Without loss of generality, we assume henceforth that $\mu(\cdot)$ is a
probability measure with associated probability density function $p(\cdot)$.

A kernel function $k:\reals^d \times \reals^d\mapsto \complex$ on $\reals^d$ is
called {\em shift-invariant} if $k(\vv{x}, \vv{z}) = g(\vv{x} - \vv{z})$,   for some
 positive definite function $g:\reals^d \mapsto \complex$.
Bochner's theorem implies that a scaled shift-invariant kernel can therefore be put
into one-to-one correspondence with a density $p(\cdot)$ such that,
\begin{equation}
k(\vv{x}, \vv{z}) = g(\vv{x} - \vv{z}) = \int_{\reals^d} e^{-i
(\vv{x}-\vv{z})^T
\vv{w}} p(\vv{w}) d\vv{w}~. \label{eq:bochner}
\end{equation}
For the most notable member of the shift-invariant family of kernels --
the Gaussian kernel: $$k(\vv{x}, \vv{z}) = e^{-\frac{\|\vv{x} -
\vv{z}\|^2_2}{2\sigma^2}},$$ the associated density is again Gaussian $\N(0,
\sigma^{-2}\vv{I}_d)$.

The integral representation of the kernel~\eqref{eq:bochner} may be
approximated as follows:
\begin{eqnarray*}
k(\vv{x}, \vv{z}) &=& \int_{\reals^d} e^{-i(\vv{x}-\vv{z})^T  \vv{w}} p(\vv{w}) d\vv{w}\label{eq:ourintegral} \\
&\approx &  \frac{1}{s}\sum_{j=1}^s e^{-i (\vv{x}-\vv{z})^T \vv{w}_s}
\label{eq:approx}\\
& = & \langle \hat{\Psi}_S(\vv{x}),
\hat{\Psi}_S(\vv{z}) \rangle_{\complex^s}~, \label{eq:innerproduct}
\end{eqnarray*} through the feature map,
\begin{equation}
\label{eq:features}
\hat{\Psi}_S(\vv{x}) = \frac{1}{\sqrt{s}}\left[ e^{-i \vv{x}^T \vv{w}_1}\ldots
e^{-i \vv{x}^T \vv{w}_s} \right] \in \complex^s~.
\end{equation}
The subscript $S$ denotes dependence of the feature map on the sequence
$S = \{\w_1,\ldots, \w_s\}$.

The goal of this work is to improve the convergence behavior of this approximation,
so that a smaller $s$ can be used to get the same quality of approximation to the
kernel function. This is motivated by recent work that showed that in order
to obtain state-of-the-art accuracy on some important datasets, a very large number of random features 
is needed~\citep{KernelTIMIT, KernelsHilbertJSM}. 

Our point of departure from the work
of~\citet{RahimiRecht} is the simple observation that when
$\w_1,\dots,\w_s$ are  are drawn from
the distribution defined by the density function $p(\cdot)$,
the approximation in~\eqref{eq:approx} may be viewed as a standard {\em
Monte Carlo} (MC) approximation to the integral representation of the kernel.
Instead of using plain MC approximation, we propose to use the low-discrepancy properties of
{\em Quasi-Monte Carlo} (QMC) sequences to reduce the integration error in approximations
of the form~\eqref{eq:approx}. A self-contained overview of Quasi-Monte Carlo
techniques for high-dimensional integration problems is provided in
Section~\ref{sec:pre}. In Section~\ref{sec:qmc_algos}, we describe how QMC techniques
apply to our setting.

We then proceed to apply an average case theoretical analysis of the
integration error for any given sequence $S$ (Section~\ref{sec:qmc_theory}). This bound motivates
an optimization problem over the sequence $S$ whose minimizer provides {\em adaptive QMC}
sequences fine tuned to our kernels (Section~\ref{sec:qmc_adaptive}).

Finally, empirical results (Section~\ref{sec:empirical}) clearly demonstrate the
superiority of QMC techniques over the MC feature maps~\citep{RahimiRecht}, the correctness of our
theoretical analysis and the potential value of adaptive QMC techniques for large-scale kernel methods.

\section{Preliminaries}\label{sec:pre}

\subsection{Notation}
We use $i$ both for subscript and for denoting $\sqrt{-1}$, relying on the context to distinguish between the two.
We use $y, z, \ldots$ to denote scalars. We use $\w, \t, \x \ldots$ to denote vectors,
and use $w_i$ to denote the $i$-th coordinate of vectors $\w$.
Furthermore, in a sequence of vectors, we use $\w_i$ to denote the $i$-th element of the sequence
and use $w_{ij}$ to denote the $j$-th coordinate of vector $\w_i$.
Given $\x_1, \ldots, \x_n$, the Gram matrix is defined as $\vv{K} \in \reals^{n \times n}$
where $\vv{K}_{ij} = k(\x_i, \x_j)$ for $i,j = 1,\ldots, n$.
We denote the error function by $\erf(\cdot)$, i.e.,  $\erf(z)=\int^z_0 e^{-z^2} dz$ for $z \in \complex$; see \citet{Wei94} and \citet{Mori83} for more details.

In ``\emph{MC sequence}'' we mean points drawn randomly either from the unit cube or certain
distribution that will be clear from the text.
For ``\emph{QMC sequence}'' we mean a deterministic sequence designed to reduce the
integration error. Typically, it will be a low-discrepancy sequence on the unit cube.

It is also useful to recall the definition of Reproducing Kernel Hilbert Space (RKHS).
\begin{definition}[Reproducing Kernel Hilbert Space~\citep{BerlinetAgnanBook}]
\label{def:rkhs}
A {\em reproducing kernel Hilbert space (RKHS)} is a Hilbert Space $\H:\X \to
\complex$ that possesses a reproducing kernel, i.e., a function $h : \X \times \X \to \complex
$ for which the following hold for all $\x \in \X$ and $f \in \H$:
\begin{itemize}
\item  $h(\vv{x}, \cdot) \in \H$
\item $\langle f, h(\vv{x}, \cdot )\rangle_{\H}  = f (\x)$ ({\em Reproducing Property})
\end{itemize}
\end{definition}

Equivalently, RKHSs are Hilbert spaces with bounded, continuous evaluation functionals. Informally, they are Hilbert spaces
with the nice property that if two functions $f,g\in \H$ are close in the sense of the distance derived from the norm in $\H$
(i.e., $\|f-g\|_{\H}$ is small), then their values $f(\x), g(\x)$ are also close for all $\x \in \X$; in other words, the norm
controls the pointwise behavior of functions in $\H$~\citep{BerlinetAgnanBook}.

\subsection{Related Work}\label{sec:related}
In this section we discuss related work on scalable kernel methods. Relevant work on QMC methods is discussed
in the next subsection.

Scalability has long been identified as a key challenge associated with deploying kernel methods in practice. One dominant line
of work constructs low-rank approximations of the Gram matrix, either using data-oblivious randomized feature maps to
approximate the kernel function, or using sampling techniques such as the classical Nystr\"{o}m method~\citep{WG01}. In its
vanilla version, the latter approach - Nystr\"{o}m method - samples points from the dataset, computes the columns of the Gram matrix that
corresponds to the sampled data points, and uses this partial computation of the Gram matrix to construct an approximation to
the entire Gram matrix. More elaborate techniques exist, both randomized and deterministic; see~\citet{GM13} for a thorough
treatment.

More relevant to our work is the randomized feature mapping approach. Pioneered by the seminal paper
of~\citet{RahimiRecht}, the core idea is to construct, for a given kernel on a data domain $\X$, a transformation  $\hat{\Psi}: \X \to \complex^s$ such that $k(\x, \z) \approx \langle \hat{\Psi}(\x), \hat{\Psi}(\z)
\rangle_{\complex^s}$. Invoking Bochner's theorem, a classical result in harmonic analysis, Rahimi and Recht show how to
construct a randomized feature map for shift-invariant kernels, i.e., kernels that can be written $k(\x,\z) = g(\x - \y)$ for
some positive definite function $g(\cdot)$.

Subsequently, there has been considerable effort given to extending this technique to other classes of
kernels. 
\citet{LIS10} use Bochner's theorem to provide random features to the wider class of
group-invariant kernels. \citet{MB09} suggested random features for the intersection kernel
$k(\x, \z) = \sum^d_{i=1}\min(x_i, z_i)$.
\citet{VZ11} developed feature maps for $\gamma$-homogeneous kernels. \citet{SVJZ10}
developed feature maps for generalized RBF kernels $k(\x,\z)=g(D(\x,\z)^2)$ where $g(\cdot)$ is a positive
definite function, and $D(\cdot,\cdot)$ is a distance metric. \citet{KK12} suggested feature maps
for dot-product kernels. The feature maps are based on the Maclaurin expansion, which is guaranteed
to be non-negative due to a classical result of~\citet{Sch42}. \citet{PP13} suggested feature maps
for the polynomial kernels. Their construction leverages known techniques from sketching theory. It can also
be shown that their feature map is an {\em oblivious subspace embedding}, and this observation provides stronger
theoretical guarantees than point-wise error bounds prevalent in the feature map literature~\citep{AHW14}.
By invoking a variant of Bochner's theorem that replaces the Fourier transform with the Laplace
transform,~\citet{YSFAM14} obtained randomized feature maps for semigroup kernels on histograms.
We note that while the original feature maps suggested by Rahimi and Recht were randomized,
some of the aforementioned maps are deterministic.

Our work is more in-line with recent efforts on scaling up the random features, so that learning and
prediction can be done faster. \citet{LSS13} return to the original construction of~\citet{RahimiRecht},
and devise a clever distribution of random samples $\w_1, \w_2, \dots, \w_s$ that is structured so that
the generation of random features can be done much faster. They showed that only a very limited concession
in term of convergence rate is made. \citet{HGXD14}, working on the polynomial kernel,
suggest first generating a very large amount of random features,
and then applying them a low-distortion embedding based the Fast Johnson-Lindenstruass Transform, so
the make the final size of the mapped vector rather small. In contrast, our work tries to design
$\w_1, \dots, \w_s$ so that less features will be necessary to get the same quality of kernel approximation.

Several other scalable approaches for large-scale kernel methods have been suggested over the years, starting from approaches
such as chunking and decomposition methods proposed in the early days of SVM optimization literature. \citet{RD06}
use an improved fast Gauss transform for large scale Gaussian Process regression. There are also approaches that are more
specific to the objective function at hand, e.g., \citet{KCD06} builds a kernel expansion greedily to optimize the SVM
objective function. Another well known approach is the Core Vector Machines~\citep{CVM05} which draws on approximation
algorithms from computational geometry to scale up a class of kernel methods that can be reformulated in terms of the minimum
enclosing ball problem.

For a broader discussion of these methods, and others, see~\citet{KernelScalabilityBook}.

\subsection{Quasi-Monte Carlo Techniques: an Overview} \label{sec:qmc_review}
In this section we provide a self-contained overview of Quasi-Monte Carlo (QMC) techniques. For brevity,
we restrict our discussion to background that is necessary for understanding subsequent sections.
We refer the interested reader to the excellent reviews by~\citet{Caflisch98} and~\citet{DKS13}, and the recent book~\citet{QMCBook}
for a much more detailed exposition.

Consider the task of computing an approximation of the following integral
\begin{equation}
\label{eq:integral}
I_d[f] = \int_{[0,1]^d} f(\x)d\x~.
\end{equation}
One can observe that if $\x$ is a random vector uniformly distributed over $[0,1]^d$
then $I_d[f]=\Expect{f(\x)}$. An empirical approximation to the expected value
can be computed by drawing a random point set $S = \{\vv{w}_1,\ldots, \vv{w}_s\}$ independently from $[0,1]^d$, and computing:
$$
I_S[f]=\frac{1}{s}\sum_{\w \in S}f(\w)~.
$$
This is the Monte Carlo (MC) method.

Define the integration error with respect to the point set $S$ as,
\begin{equation*}\epsilon_S[f]=\lvert I_d(f) - I_S(f) \rvert~. \end{equation*}
When $S$ is drawn randomly, the Central Limit Theorem asserts that if $s=\lvert
S \rvert$ is large enough then $\epsilon_S[f] \approx \sigma[f] s^{-1/2}
\ww{\nu}$ where $\ww{\nu}$ is a standard normal random variable, and $\sigma[f]$
is the square-root of the variance of $f$; that is,
\begin{equation*}
\sigma^2[f] = \int_{[0,1]^d} \left( f(\x) - I_d(f) \right)^2 d\x~.
\end{equation*} In other words, the root mean square error of the Monte Carlo
method is,
\begin{equation}
\left(\ExpectSub{S}{\epsilon_S[f]^2}\right)^{1/2} \approx \sigma[f] s^{-1/2}.
\label{eq:MC_error}
\end{equation}
Therefore, the Monte Carlo method converges at a rate of $O(s^{-1/2})$.

The aim of QMC methods is to improve the convergence rate by using a
deterministic {\it low-discrepancy sequence} to construct $S$, instead of
randomly sampling points. The underlying intuition is illustrated in
Figure~\ref{fig:qmc_mc_seq}, where we plot a set of 1000 two-dimensional random points (left graph), and a set of 1000
two-dimensional points from a quasi-random sequence (Halton sequence; right
graph).
\begin{figure}[t]
  \centering \includegraphics[width=0.75\textwidth] {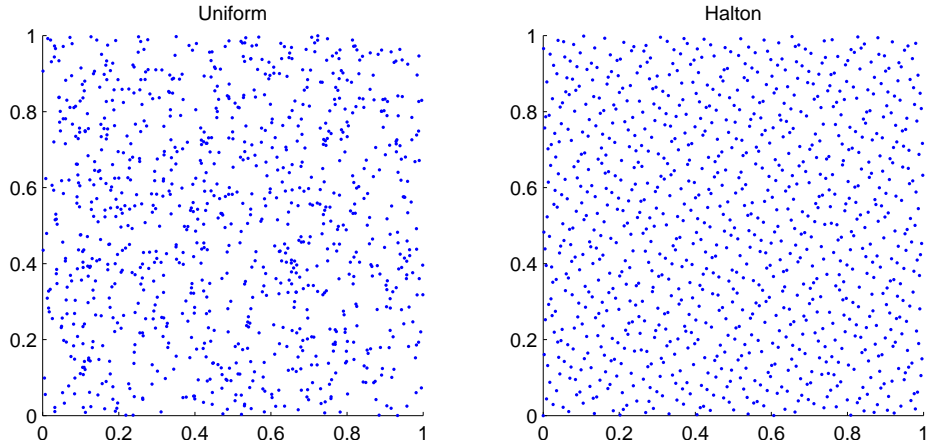}
  \caption{Comparison of MC and QMC sequences.}
   \label{fig:qmc_mc_seq}
\end{figure}
In the random sequence we see that there is an undesired clustering of points,
and as a consequence empty spaces. Clusters add little to the approximation of the
integral in those regions, while in the empty spaces the integrand is not
sampled. This lack of uniformity is due to the fact that Monte Carlo samples
are independent of each other. By carefully designing a sequence of
correlated points to avoid such clustering effects, QMC attempts to avoid this
phenomena, and thus provide faster convergence to the integral.

The theoretical apparatus for designing such sequences are inequalities of the
form $$
\epsilon_S(f) \leq D(S)V(f)~,
$$
in which $V(f)$ is a measure of the variation or difficulty of integrating $f(\cdot)$
and $D(S)$ is a sequence-dependent term that typically measures the {\em discrepancy}, or
degree of deviation from uniformity, of the sequence $S$. For example, the expected Monte Carlo
integration error decouples into a variance term, and $s^{-1/2}$ as
in~\eqref{eq:MC_error}.

A prototypical inequality of this sort is the following remarkable and classical result:
\begin{theorem}[Koksma-Hlawka inequality] \label{thm:kh} For any function $f$
with bounded variation, and sequence $S=\{\w_1, \ldots, \w_s\}$, the integration error is bounded
above as follows,
\begin{equation*} \epsilon_S[f] \leq D^\star(S)V_{HK}[f]~,
\end{equation*}
where $V_{HK}$ is the {\em Hardy-Krause variation of $f$}
(see~\citet{NiederreiterBook}), which is defined in terms of the following partial
derivatives,
\begin{equation}
V_{HK}[f] = \sum_{I\subset [d],I\neq \emptyset} \int_{[0,1]^{|I|}}\left|
\frac{\partial f}{\partial \vv{u}_I}\biggr\rvert_{u_j=1, j\notin I} \right|
d\vv{u}_I~,
\label{eq:Vhk}
\end{equation}
 and $D^\star$ is the {\em star
discrepancy} defined by
\begin{equation*}
\label{eq:stardisc}
D^\star(S)=\sup_{\x \in [0,1]^d}\lvert \disr_S (\x)\rvert~,
\end{equation*}
where $\disr_S$ is the {\em local discrepancy function}
$$
\disr_S(\x)=\Vol(J_{\x}) - \frac{\lvert \{ i~:~\w_i \in J_\x \}
\rvert}{s} $$
with $J_\x = [0,x_1) \times [0,x_2) \times \dots \times [0, x_d)$ with
$\Vol(J_\x)=\prod_{j=1}^d x_j$.
\end{theorem}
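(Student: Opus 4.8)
The plan is to reduce the bound to a single exact identity expressing the integration error $\epsilon_S[f]$ as an integral of the local discrepancy function $\disr_S$ against the mixed partial derivatives of $f$, and then to bound the two factors separately. First I would settle the one-dimensional case to fix intuition: writing the fundamental theorem of calculus as $f(x) = f(1) - \int_0^1 \mathbf{1}(x < t)\, f'(t)\,dt$ and applying the linear functional $I_1[\cdot] - I_S[\cdot]$, which annihilates constants, one obtains $I_1[f] - I_S[f] = -\int_0^1 \disr_S(t)\, f'(t)\,dt$, because $(I_1 - I_S)$ applied to the indicator $x \mapsto \mathbf{1}(x < t)$ equals $t - \tfrac{1}{s}|\{i : w_i < t\}| = \disr_S(t)$. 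Taking absolute values and pulling out $\sup_t |\disr_S(t)| = D^\star(S)$ gives the claim with $V_{HK}[f] = \int_0^1 |f'(t)|\,dt$.

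Second, I would lift this to $d$ dimensions by applying the one-dimensional identity coordinate by coordinate (either by induction on $d$ or by a tensor-product argument), producing the decomposition
$$
f(\x) = \sum_{I \subseteq [d]} (-1)^{|I|} \int_{[0,1]^{|I|}} \Big(\prod_{j \in I} \mathbf{1}(x_j < t_j)\Big)\, \frac{\partial f}{\partial \u_I}\Big|_{u_j = 1,\, j \notin I}\, d\t_I ,
$$
where the coordinates of $f$ outside $I$ are frozen at $1$. Applying $I_d[\cdot] - I_S[\cdot]$ and using Fubini's theorem to interchange integration over $\t_I$ with the functional, the empty-set term drops out (the functional kills the constant $f(\mathbf{1})$), and for each nonempty $I$ the functional acting on $\x \mapsto \prod_{j\in I}\mathbf{1}(x_j < t_j)$ produces exactly $\disr_S(\t^I)$, where $\t^I$ is the point with coordinates $t_j$ for $j \in I$ and $1$ for $j \notin I$. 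This is the key computation: the volume term $\prod_{j\in I} t_j$ matches $\Vol(J_{\t^I})$, and the counting term matches the box count defining $\disr_S$, since for points in $[0,1)^d$ freezing a coordinate at $1$ is equivalent to ignoring it in the origin-anchored box $J_{\t^I}$.

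Third, I would take absolute values in the resulting identity
$$
I_d[f] - I_S[f] = \sum_{\emptyset \neq I \subseteq [d]} (-1)^{|I|} \int_{[0,1]^{|I|}} \disr_S(\t^I)\, \frac{\partial f}{\partial \u_I}\Big|_{u_j=1,\, j\notin I}\, d\t_I ,
$$
bound $|\disr_S(\t^I)| \le \sup_{\x} |\disr_S(\x)| = D^\star(S)$ uniformly over all $I$, and recognize the remaining sum of integrals of $|\partial f / \partial \u_I|$ as precisely $V_{HK}[f]$ from~\eqref{eq:Vhk}. Combining the three steps closes the argument.

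The hard part will be twofold. Technically, the careful bookkeeping in the multivariate decomposition---tracking the $2^d$ subsets and verifying that freezing the off-$I$ coordinates at $1$ reproduces the origin-anchored box in the definition of $\disr_S$---is where sign and boundary errors creep in, and it must be carried out cleanly by induction. Conceptually, the derivation as sketched assumes $f$ is smooth enough for the mixed partials to exist, whereas the theorem only assumes bounded variation; closing this gap requires replacing the classical partials by the signed measures induced by the Hardy--Krause variation (equivalently, a Stieltjes integration-by-parts / density-and-limit argument), which is the genuinely delicate analytic step.
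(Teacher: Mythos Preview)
The paper does not prove Theorem~\ref{thm:kh}; it states the Koksma--Hlawka inequality as a classical result, citing Niederreiter's book, and uses it only as motivation for the notion of discrepancy before moving on to its own RKHS-based analysis. So there is no ``paper's own proof'' to compare against.

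That said, your proposal is the standard argument: you are reconstructing the Hlawka--Zaremba identity via iterated integration by parts, then bounding the discrepancy factor uniformly by $D^\star(S)$. The one-dimensional computation is correct, the tensor-product lift to the multivariate identity is the right mechanism, and your observation that freezing the off-$I$ coordinates at $1$ makes the corresponding half-open intervals $[0,1)$ vacuous constraints on points in $[0,1)^d$ is exactly the reason the anchored-box discrepancy appears. Your closing caveat about replacing classical partials by the signed measures defining the Hardy--Krause variation is also on point; that is where the full-generality proof lives, and it is handled in the references the paper cites rather than in the paper itself.
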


Given $\x$, the second term in $\disr_S(\x)$ is an estimate of the volume of $J_\x$, which will be accurate if the points in $S$ are uniform enough.
 $D^\star(S)$ measures the maximum difference between the actual volume of the subregion $J_\x$ and its estimate for all $\x$ in $[0,1]^d$.

An infinite sequence $\w_1, \w_2, \dots$ is defined to be a {\em low-discrepancy
sequence} if, as a function of $s$, $D^\star(\{ \w_1, \dots, \w_s\}) = O((\log
s)^d / s)$. Several constructions are know to be low-discrepancy sequences. One notable example is
the {\em Halton sequences}, which are defined as follows.  Let $p_1,\dots,p_d$ be the first $d$
prime numbers. The Halton sequence $\w_1, \w_2, \dots $ of dimension $d$ is defined by
$$
\w_i = (\phi_{p_1}(i),\dots,\phi_{p_d}(i))
$$
where for integers $i\geq0$ and $b\geq2$ we have
$$
\phi_b(i) = \sum^\infty_{a=1} i_a b^{-a}
$$
in which $i_0,i_1, \dots \in \{0,1,\dots,b-1\}$ is given by the unique decomposition
$$
i = \sum^\infty_{a=1} i_a b^{a - 1}~.
$$

It is outside the scope of this paper to describe all these constructions in detail. However
we mention that in addition to the Halton sequences, other notable members are {\em Sobol'
sequences}, {\em Faure sequences}, {\em Niederreiter sequences}, and more (see~\citet{DKS13}, Section $2$).
We also mention that it is conjectured that the $O((\log s)^d / s)$ rate for star discrepancy decay is optimal.

The classical QMC theory, which is based on the Koksma-Hlawka inequality and
low discrepancy sequences, thus achieves a convergence rate of $O((\log s)^d / s)$. While
this is asymptotically superior to $O(s^{-1/2})$ for a fixed $d$, it requires
$s$ to be exponential in $d$ for the improvement to manifest. As such, in the past QMC
methods were dismissed as unsuitable for very high-dimensional integration.

However, several authors noticed that QMC methods perform better than MC even for very high-dimensional
integration~\citep{QMCEfficiency, DKS13}.\footnote{Also see: ``On the unreasonable effectiveness of QMC", I.H.
Sloan~\url{https://mcqmc.mimuw.edu.pl/Presentations/sloan.pdf}} Contemporary QMC literature explains and expands on these
empirical observations, by leveraging the structure of the space in which the integrand function lives, to derive more
refined bounds and discrepancy measures, even when classical measures of variation such as~\eqref{eq:Vhk} are unbounded. This
literature has evolved along at least two directions: one, where worst-case analysis is provided under the assumption that
the integrands live in a Reproducing Kernel Hilbert Space (RKHS) of sufficiently smooth and well-behaved functions
(see~\citet{DKS13}, Section $3$) and second, where the analysis is done in terms of {\it average-case} error, under an
assumed probability distribution over the integrands, instead of worst-case
error~\citep{Wozniakowski,BreakingIntractability}. We refrain from more details, as these are essentially the paths that the
analysis in Section~\ref{sec:qmc_theory} follows for our specific setting. 

\section{QMC Feature Maps: Our Algorithm} \label{sec:qmc_algos}

We assume that the density function in \eqref{eq:bochner} can be written as
$p(\x) = \prod_{j=1}^d p_j(x_j)$, where $p_j(\cdot)$ is a univariate density function.
The density functions associated to many shift-invariant kernels, e.g., Gaussian, Laplacian and Cauchy, admits such a form.

The QMC method is generally applicable to integrals over a unit cube. So typically
 integrals of the form~\eqref{eq:bochner} are handled by first generating a low
 discrepancy sequence $\t_1,\dots,\t_s \in [0,1]^d$, and
 transforming it into a sequence $\w_1,\dots,\w_s$ in $\reals^d$, instead of
 drawing the elements of the sequence from $p(\cdot)$ as in the MC method.


 To convert~\eqref{eq:bochner} to an integral over the unit cube, a simple change of variables suffices.
 For $\vv{t} \in \reals^d$, define
 \begin{equation}
 \Phi^{-1}(\t) = \left( \Phi_1^{-1}(t_1), \ldots,  \Phi_d^{-1}(t_d) \right) \in \reals^d~, \label{eq:cdf}
 \end{equation}
 where $\Phi_j(\cdot)$ is the cumulative distribution function (CDF) of $p_j(\cdot)$, for $j = 1,\ldots, d$.
 By setting $\vv{w} = \Phi^{-1}(\vv{t})$,
 then~\eqref{eq:bochner} can be equivalently written as
 \begin{equation*}
   \int_{\reals^d} e^{- i (\vv{x} - \vv{z})^T \vv{w}} p(\vv{w}) d \vv{w}
   = \int_{[0,1]^d} e^{-i (\vv{x} - \vv{z})^T\Phi^{-1}(\vv{t})} d
   \vv{t}~.
 \end{equation*}

 Thus, a low discrepancy sequence $\t_1,\dots,\t_s \in [0,1]^d$ can be transformed using
 $\w_i = \Phi^{-1}(\vv{\t}_i)$, which is then plugged into~\eqref{eq:features}
 to yield the QMC feature map. This simple procedure is summarized in
 Algorithm~\ref{alg:qmc}. QMC feature maps are analyzed in the next section.

 \begin{algorithm}[tb]
 \caption{Quasi-Random Fourier Features}
  \label{alg:qmc}
  \begin{algorithmic}[1]
    \Require  Shift-invariant kernel $k$, size $s$.

    \Ensure Feature map $\hat \Psi(\vv{x}): \reals^d \mapsto \complex^s$.

    \State Find $p$, the inverse Fourier transform of $k$.

    \State Generate a low discrepancy sequence $\t_1,\dots,\t_s$.

    \State Transform the sequence: $\w_i =  \Phi^{-1}(\vv{t}_i)$ by \eqref{eq:cdf}.

     \State  Set $\hat \Psi(\vv{x}) = \sqrt{\frac{1}{s}} \left[ e^{-i \x^T \w_1}, \ldots, e^{-i \x^T \w_s} \right] $.

    \end{algorithmic}
\end{algorithm}



\section{Theoretical Analysis and Average Case Error Bounds} \label{sec:qmc_theory}
The proofs for assertions made in this section and the next can be found in the Appendix.

The goal of this section is to develop a framework for analyzing
the approximation quality of the QMC feature maps described in the previous section
(Algorithm~\ref{alg:qmc}). We need to develop such a framework since the
classical Koksma-Hlawka inequality cannot be applied
to our setting, as the following proposition shows:
\begin{proposition}
\label{prop:unbounded-vhk}
For any $p(\x) = \prod_{j=1}^d p_j(x_j)$,
where $p_j(\cdot)$ is a univariate density function,
let
$$
 \Phi^{-1}(\t) = \left( \Phi_1^{-1}(t_1), \ldots,  \Phi_d^{-1}(t_d) \right)~.
$$
For a fixed $\u \in \reals^d$, consider
$f_{\vv{u}} (\vv{t}) = e^{-i \vv{u}^T
\Phi^{-1}(\vv{t}) }$, $\vv{t}\in [0,1]^d$.
The Hardy-Krause variation of $f_{\u}(\cdot)$ is unbounded. That is,
one of the integrals in the sum~\eqref{eq:Vhk} is unbounded.
\end{proposition}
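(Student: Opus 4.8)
The plan is to exploit the \emph{product structure} of the integrand. Since $p(\x) = \prod_{j=1}^d p_j(x_j)$, the transform $\Phi^{-1}$ acts coordinate-wise, so $f_{\u}(\t) = \prod_{j=1}^d e^{-i u_j \Phi_j^{-1}(t_j)}$ factors as a product of univariate functions $g_j(t_j) := e^{-i u_j \Phi_j^{-1}(t_j)}$. This separability is what makes every term in the sum~\eqref{eq:Vhk} computable in closed form, and it is the feature I would lean on throughout.

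First I would isolate a single offending term in the sum. Assuming $\u \neq 0$ (when $\u = 0$ the function is constant and the claim is vacuous), let $I = \{ j : u_j \neq 0 \}$, which is nonempty. For $k \notin I$ the factor $g_k$ is the constant function $1$, so its evaluation at the corner $t_k = 1$ equals $1$ and causes no trouble — this choice of $I$ is precisely what lets me sidestep the fact that $\Phi_k^{-1}(1) = +\infty$ for densities of unbounded support. Consequently the mixed partial in the $I$-term collapses to $\partial^{|I|} f_{\u} / \partial \t_I = \prod_{j \in I} g_j'(t_j)$.

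Next I would compute the univariate derivative. By the inverse-function rule $(\Phi_j^{-1})'(t) = 1/p_j(\Phi_j^{-1}(t))$, so $g_j'(t) = -i u_j\, e^{-i u_j \Phi_j^{-1}(t)} / p_j(\Phi_j^{-1}(t))$, whence $|g_j'(t)| = |u_j| / p_j(\Phi_j^{-1}(t))$. Since the integrand is nonnegative, Tonelli's theorem lets me factor the $I$-term into a product of one-dimensional integrals, and the substitution $w = \Phi_j^{-1}(t)$ (so $dt = p_j(w)\,dw$) gives
\begin{equation*}
\int_0^1 |g_j'(t)|\,dt = |u_j| \int_0^1 \frac{dt}{p_j(\Phi_j^{-1}(t))} = |u_j| \int_{\mathrm{supp}(p_j)} dw,
\end{equation*}
which diverges whenever $p_j$ has unbounded support. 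Hence each factor with $j \in I$ is infinite, the full $I$-term is infinite, and $V_{HK}[f_{\u}]$ is unbounded.

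The only genuine obstacle is bookkeeping at the boundary of the cube: because $\Phi_j^{-1}(1) = +\infty$ for the densities we care about (Gaussian, Laplacian, and Cauchy all have $\mathrm{supp}(p_j) = \reals$), one cannot blindly evaluate derivatives at $t_j = 1$. Restricting $I$ to the coordinates on which $\u$ is supported neutralizes this, since the remaining factors are identically $1$ there. I would also state explicitly the standing assumption that each $p_j$ has unbounded (infinite-measure) support, which is the structural reason the Hardy--Krause variation blows up even though the kernel integral~\eqref{eq:bochner} itself remains perfectly well behaved.
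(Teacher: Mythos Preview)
Your proof is correct and follows essentially the same route as the paper: compute the mixed partial via the product structure, take absolute values, factor, and change variables $w = \Phi_j^{-1}(t)$ to reduce each factor to $|u_j|\cdot|\mathrm{supp}(p_j)|$. The only difference is that the paper simply takes $I = [d]$ (so no boundary evaluation at $t_k=1$ is needed), whereas you take $I = \{j : u_j \neq 0\}$; your choice is slightly more careful in that it still works when some coordinates of $\u$ vanish, and your explicit flag that the argument requires each $p_j$ to have infinite-measure support is a hypothesis the paper uses tacitly.
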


Our framework is based on a new discrepancy measure, {\em box discrepancy}, that
characterizes integration error for the set of integrals defined with respect to
the underlying data domain. Throughout this section we use the convention that $S = \{\w_1,\ldots, \w_s\}$,
 and the notation $\bar{\X} = \left\{\x - \z~|~\x, \z \in \X\right\}$.

Given a probability density function $p(\cdot)$ and $S$, we define the integration error
$\epsilon_{S,p}[f]$ of a function $f(\cdot)$ with respect to $p(\cdot)$ and the $s$ samples as,
\begin{equation*}
\epsilon_{S,p}[f] = \left| \int_{\reals^d} f(\vv{x}) p(\vv{x}) d\vv{x} -
\frac{1}{s} \sum_{i=1}^s f(\vv{w}_i) \right|~.
\end{equation*}
We are interested in characterizing the behavior of $\epsilon_{S,p}[f]$ on $f\in\F_{\bar{\X}}$ where
\begin{equation*}
\F_{\bar{\X}} = \left\{ f_{\vv{u}}(\vv{x}) = e^{-i \vv{u}^T \vv{x}},~
\vv{u}\in \bar{\X} \right\}~.
\end{equation*}

As is common in modern QMC analysis~\citep{QMCBook,DKS13}, our analysis is based on setting up a Reproducing Kernel Hilbert
Space of ``nice" functions that is related to integrands that we are interested in, and using properties of the RKHS to
derive bounds on the integration error. In particular, the integration error of integrands in an RKHS can be bounded using the
following proposition.

\begin{proposition}[Integration Error in an RKHS] \label{prop:int-err-rkhs} Let $\H$ be an RKHS with kernel $h(\cdot, \cdot)$.
Assume that $ \kappa = \sup_{\x \in \reals^d} h(\x,\x) < \infty$. Then, for all $f\in \H$ we have, \begin{equation}
\label{eq:error_f} \epsilon_{S,p}[f] \leq \Vert f \Vert_\H D_{h,p}(S)~, \end{equation} where \begin{eqnarray} D_{h,p}(S)^2
&=& \left\|\int_{\reals^d} h(\omega, \cdot) p(\omega) d\omega - \frac{1}{s} \sum_{l=1}^s h(\w_l, \cdot)\right\|^2_{\H}
\label{eq:worstcase_error}\\ & = &
 \int _{\reals^d}\int_{\reals^d} h(\omega, \phi) p(\omega)
p(\phi) d\omega d\phi \nonumber - \frac{2}{s}\sum^s_{l=1} \int_{\reals^d}
h(\w_l, \omega) p(\omega) d\omega  \nonumber \\
& & \qquad + \frac{1}{s^2}\sum_{l=1}^s \sum_{j=1}^s
h(\w_l,\w_j)~.\nonumber
\end{eqnarray}
\end{proposition}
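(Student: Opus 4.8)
The plan is to establish the bound by viewing the integration error as a bounded linear functional on the RKHS and applying the Riesz representation together with Cauchy-Schwarz. The key object is the error functional
\[
T[f] = \int_{\reals^d} f(\omega) p(\omega)\, d\omega - \frac{1}{s}\sum_{l=1}^s f(\w_l)~,
\]
so that $\epsilon_{S,p}[f] = |T[f]|$. The first step is to verify that $T$ is a bounded linear functional on $\H$. Linearity is immediate. For boundedness, I would use the reproducing property $f(\x) = \langle f, h(\x,\cdot)\rangle_\H$ together with the uniform bound $h(\x,\x) \le \kappa$, which by the reproducing property gives $|f(\x)| = |\langle f, h(\x,\cdot)\rangle_\H| \le \|f\|_\H \sqrt{h(\x,\x)} \le \|f\|_\H \sqrt{\kappa}$ for every $\x$. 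This pointwise bound controls both the finite sum and (after checking integrability against the probability measure $p$) the integral term, so $T$ is bounded.

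Next I would identify the Riesz representer of $T$. Writing $\xi(\cdot) = \int_{\reals^d} h(\omega,\cdot) p(\omega)\, d\omega - \frac{1}{s}\sum_{l=1}^s h(\w_l,\cdot) \in \H$, the reproducing property gives, for each $f\in\H$,
\[
\langle f, \xi \rangle_\H = \int_{\reals^d} \langle f, h(\omega,\cdot)\rangle_\H\, p(\omega)\, d\omega - \frac{1}{s}\sum_{l=1}^s \langle f, h(\w_l,\cdot)\rangle_\H = \int_{\reals^d} f(\omega) p(\omega)\, d\omega - \frac{1}{s}\sum_{l=1}^s f(\w_l) = T[f]~.
\]
Thus $\xi$ represents $T$, and $D_{h,p}(S) = \|\xi\|_\H$ is exactly the first displayed expression for $D_{h,p}(S)^2$ in \eqref{eq:worstcase_error}. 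Applying Cauchy-Schwarz, $\epsilon_{S,p}[f] = |\langle f,\xi\rangle_\H| \le \|f\|_\H \|\xi\|_\H = \|f\|_\H D_{h,p}(S)$, which is precisely \eqref{eq:error_f}.

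The final step is to expand $\|\xi\|_\H^2 = \langle \xi,\xi\rangle_\H$ into the explicit double-integral, cross-term, and double-sum form. Expanding the inner product bilinearly produces three groups of terms; each is evaluated using the reproducing identity $\langle h(\a,\cdot), h(\b,\cdot)\rangle_\H = h(\a,\b)$, which collapses the integrals and sums to the stated closed form. \textbf{The main obstacle} is the interchange of inner product with integration needed in both the representer computation and the norm expansion: I must justify that $\omega \mapsto h(\omega,\cdot)$ is Bochner-integrable in $\H$ against $p$, so that $\langle \int h(\omega,\cdot) p(\omega)\, d\omega, g\rangle_\H = \int \langle h(\omega,\cdot), g\rangle_\H\, p(\omega)\, d\omega$ holds. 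This follows from the uniform bound $\|h(\omega,\cdot)\|_\H = \sqrt{h(\omega,\omega)} \le \sqrt{\kappa}$ and the fact that $p$ is a probability density, which guarantees $\int \|h(\omega,\cdot)\|_\H\, p(\omega)\, d\omega \le \sqrt{\kappa} < \infty$; once Bochner integrability is in hand, Fubini-type interchanges are legitimate and the rest is routine bookkeeping.
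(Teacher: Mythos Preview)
Your proposal is correct and follows essentially the same route as the paper: both arguments show the error functional is bounded on $\H$, identify its Riesz representer as $\mu_{h,p}(\cdot)-\tfrac{1}{s}\sum_l h(\w_l,\cdot)$, apply Cauchy--Schwarz, and expand the squared norm via the reproducing identity. The only stylistic difference is that the paper invokes the Riesz Representation Theorem abstractly to obtain the representer and then identifies it by testing against $f=h(\z,\cdot)$, whereas you construct $\xi$ directly and justify the inner-product/integral interchange through Bochner integrability of $\omega\mapsto h(\omega,\cdot)$; both are valid and amount to the same argument.
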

 \begin{remark}

In the theory of RKHS embeddings of probability distributions~\citep{SmolaEmbeddings07,Sriperumbudur10}, the function
$$
\mu_{h,p}(\x)= \int_{\reals^d} h(\omega, \x) p(\omega) d\omega
$$
is known as the {\em kernel mean} embedding of $p(\cdot)$. The function
$$
\hat{\mu}_{h,p,S}(\x)= \frac{1}{s} \sum_{l=1}^s h(\w_l, \x)
$$
is then the empirical mean map.
 \end{remark}

The RKHS we use is as follows. For a vector $\b \in \reals^d$, let us define $\Box \b =
\{\u \in \reals^d ~|~|u_j|\leq b_j\}$. Let
\begin{equation*}
\F_{\Box \b} = \left\{ f_{\vv{u}}(\vv{x}) = e^{-i \vv{u}^T \vv{x}},~
\vv{u}\in \Box \b \right\}~, 
\end{equation*}
and consider the space of functions that admit an integral representation over
$\F_{\Box \b}$ of the form \begin{equation} f(\vv{x}) = \int_{\vv{u}\in \Box \b}
\hat{f}(\vv{u}) e^{-i \vv{u}^T \vv{x}} d\vv{u}~\textrm{where}~\hat{f}(\vv{u})\in L_2(\Box \b)~. \label{eq:bandlimited}
\end{equation}
This space is associated with bandlimited functions, i.e., functions with
compactly-supported inverse Fourier transforms, which are of fundamental importance in
the Shannon-Nyquist sampling theory. Under a natural choice of inner product, these
spaces are called {\em Paley-Wiener spaces} and they constitute an RKHS.
\begin{proposition}[Kernel of Paley-Wiener 
RKHS~\citep{BerlinetAgnanBook,Yao67,holomorphic}] By $PW_{\b}$, denote the space
of functions which admit the representation in~\eqref{eq:bandlimited}, with the inner product $\langle f, g \rangle_{PW_{\b}} =
(2\pi)^{2d}\langle \hat{f}, \hat{g} \rangle_{L_2(\Box \b)}$. $PW_{\b}$ is an RKHS with kernel
function,
\begin{equation*}
\sinc_{\vv{b}}(\vv{u}, \vv{v}) = \pi^{-d} \prod_{j=1}^d \frac{\sin \left(b_j (u_j -
v_j)\right)}{u_j - v_j}~.
\end{equation*}
For notational convenience, in the above we define $\sin(b\cdot 0)/0$ to be $b$.
Furthermore, $\langle f, g \rangle_{PW_{\b}} = \langle f, g \rangle_{L_2(\Box \b)}$.
\end{proposition}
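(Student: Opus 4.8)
The plan is to realize $PW_{\b}$ explicitly as a Hilbert space that is isometric (up to a normalization constant) to $L_2(\Box\b)$, and then to verify the two defining properties of Definition~\ref{def:rkhs} directly for the candidate kernel $\sinc_{\b}$. Write $T\colon L_2(\Box\b)\to PW_{\b}$ for the map $T[\hat f](\x)=\int_{\Box\b}\hat f(\u)e^{-i\u^T\x}\,d\u$; by~\eqref{eq:bandlimited} this is onto $PW_{\b}$ by definition, and Fourier inversion (equivalently, injectivity of the Fourier transform on $L_2(\Box\b)$) guarantees that $\hat f$ is \emph{uniquely} determined by $f$. Hence the inner product $\langle f,g\rangle_{PW_{\b}}$ is well defined, $T$ is a scaled isometry, and completeness of $PW_{\b}$ is inherited from completeness of $L_2(\Box\b)$. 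I would also record here that point evaluation is legitimate: writing $f(\v)=\langle \hat f, e^{i(\cdot)^T\v}\rangle_{L_2(\Box\b)}$ and applying Cauchy--Schwarz gives $|f(\v)|\le \|\hat f\|_{L_2(\Box\b)}\,\Vol(\Box\b)^{1/2}$, so the evaluation functionals are bounded and $PW_{\b}$ is already seen to be an RKHS; it then remains only to identify its kernel as $\sinc_{\b}$.

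The key computation is the inverse Fourier representation of $\sinc_{\b}(\cdot,\v)$. Because both the kernel and the inner product factor across coordinates, it suffices to work one dimension at a time and use $\int_{-b}^{b}e^{-iut}\,du = 2\sin(bt)/t$. Taking $t=u_j-v_j$ and multiplying the $d$ factors together shows that $\sinc_{\b}(\u,\v)=T[\widehat{g_\v}](\u)$, where $\widehat{g_\v}(\wphi)=c\,e^{i\wphi^T\v}\,\mathbf{1}_{\Box\b}(\wphi)$ for an explicit constant $c$. In particular $\widehat{g_\v}$ is supported in $\Box\b$ and lies in $L_2(\Box\b)$, so $\sinc_{\b}(\cdot,\v)\in PW_{\b}$, which is the first RKHS axiom.

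For the reproducing property I would unfold the inner product through the isometry: $\langle f,\sinc_{\b}(\cdot,\v)\rangle_{PW_{\b}}$ equals a constant times $\langle \hat f,\widehat{g_\v}\rangle_{L_2(\Box\b)}$, and substituting the expression for $\widehat{g_\v}$ collapses this to a multiple of $\int_{\Box\b}\hat f(\u)e^{-i\u^T\v}\,d\u$, which is exactly $f(\v)$ by~\eqref{eq:bandlimited}. The only thing to check is that the normalization constant in the definition of $\langle\cdot,\cdot\rangle_{PW_{\b}}$, together with the constant $c$ and the $\pi^{-d}$ prefactor in $\sinc_{\b}$, cancels to give precisely $f(\v)$ and not a scalar multiple of it; this is the one place where the exact powers of $2\pi$ must be tracked carefully, and it simultaneously pins down the normalization. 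Finally, the ``furthermore'' claim follows from Plancherel's theorem, which converts the $L_2(\Box\b)$ inner product of the Fourier transforms into the $L_2$ inner product of $f$ and $g$ themselves.

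The main obstacle is not a single hard estimate but the bookkeeping around well-definedness and normalization: one must be sure that $f\mapsto\hat f$ is single-valued (injectivity of the Fourier transform), that the bandlimited $L_2$ functions in question are genuinely continuous so that pointwise evaluation makes sense (this is the analytic content of the Paley--Wiener viewpoint), and that every $2\pi$ factor is accounted for so that the reproducing identity holds on the nose. The interchanges of integration implicit in passing between $T$ and its adjoint are justified by Fubini together with the fact that everything is supported on the bounded box $\Box\b$.
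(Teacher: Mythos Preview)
The paper does not supply its own proof of this proposition; it is quoted from the cited references and used as a black box. Your outline is the standard argument and is correct: realize $PW_{\b}$ as (a scaled copy of) $L_2(\Box\b)$ via $T$, bound point evaluations by Cauchy--Schwarz to see that $PW_{\b}$ is an RKHS, compute that $\sinc_{\b}(\cdot,\v)=T[\widehat{g_\v}]$ with $\widehat{g_\v}(\wphi)=(2\pi)^{-d}e^{i\wphi^T\v}\mathbf{1}_{\Box\b}(\wphi)$, and then unfold the inner product to recover $f(\v)$. The Plancherel step for the ``furthermore'' clause is also right, once you read $L_2(\Box\b)$ there as $L_2(\reals^d)$; this is exactly how the paper itself uses the identity in the proof of Theorem~\ref{thm:average-case}.

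Your instinct to flag the $2\pi$ bookkeeping is well placed. Carrying out the computation with the constants as stated gives
\[
\langle f,\sinc_\b(\cdot,\v)\rangle_{PW_\b}
=(2\pi)^{2d}\int_{\Box\b}\hat f(\u)\,\overline{\widehat{g_\v}(\u)}\,d\u
=(2\pi)^{2d}(2\pi)^{-d}\int_{\Box\b}\hat f(\u)e^{-i\u^T\v}\,d\u
=(2\pi)^d f(\v),
\]
and similarly Plancherel yields $\|f\|_{L_2(\reals^d)}^2=(2\pi)^d\|\hat f\|_{L_2(\Box\b)}^2$. Both are consistent with the exponent $d$ rather than $2d$ in the inner-product normalization. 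This is a normalization slip in the statement, not a defect in your argument; everything downstream in the paper depends only on the fact that the $PW_\b$ norm coincides with the $L_2$ norm and that $\sinc_\b$ reproduces, which your proof establishes.
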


If $b_j = \sup_{\u \in \bar{\X}}|u_j|$ then $\bar{\X} \subset \Box \b$, so $F_{\bar{\X}} \subset F_{\Box \b}$. Since we
wish to bound the integration error on functions in $F_{\bar{\X}}$, it suffices to bound the integration error on $\F_{\Box
\b}$. Unfortunately, while $\F_{\Box \b}$ defines $PW_\b$, the functions in it, being not square integrable, are {\em not}
members of $PW_\b$, so analyzing the integration error in $PW_\b$ do not directly apply to them. 
However,  damped approximations of $f_{\vv{u}}(\cdot)$ of the form
$\tilde{f}_{\vv{u}} (\vv{x}) = e^{-i \vv{u}^T \vv{x}} \sinc (T\vv{x})$ are
 members of $PW_{\vv{b}}$ with $\|\tilde{f}\|_{PW_\b} = \frac{1}{\sqrt{T}}$.
Hence, we expect the analysis of the integration error in $PW_\b$ to provide
provide a discrepancy measure for integrating
functions in $\F_{\Box \b}$.

For $PW_{\b}$
the discrepancy measure $D_{h,S}$ in Proposition~\ref{prop:int-err-rkhs}
can be written explicitly.
\begin{theorem}[Discrepancy in $PW_{\b}$]
\label{thm:discr-pw}
Suppose that $p(\cdot)$ is a probability density function, and that we can write
$p(\x) = \prod^d_{j=1} p_j(x_j)$ where each $p_j(\cdot)$ is a univariate probability density
function as well. Let $\varphi_j(\cdot)$ be the characteristic function associated
with $p_j(\cdot)$.
Then,
\begin{eqnarray}
D_{\sinc_{\vv{b}}, p}(S)^2 &=& \pi^{-d} \prod_{j=1}^d \int_{-b_j}^{b_j} \lvert
\varphi_j(\beta) \rvert^2 d\beta - \nonumber \\
& & \frac{2(2\pi)^{-d}}{s} \sum_{l=1}^s
\prod_{j=1}^d \int_{-b_j}^{b_j}\varphi_j(\beta)
e^{i w_{lj}\beta}d\beta + \nonumber\\
& & \frac{1}{s^2}\sum_{l=1}^s
\sum_{j=1}^s \sinc_\b(\w_l,\w_j)~.
\label{eq:discr-sinc}
\end{eqnarray}
\end{theorem}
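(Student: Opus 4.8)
The plan is to start directly from the three-term expansion of $D_{\sinc_{\b},p}(S)^2$ furnished by Proposition~\ref{prop:int-err-rkhs} with $h=\sinc_{\b}$, and to evaluate each term in closed form. The third term, $\frac{1}{s^2}\sum_{l}\sum_{j}\sinc_{\b}(\w_l,\w_j)$, is already in the desired form and requires no work. For the first two terms the essential structural fact is that both the kernel $\sinc_{\b}(\u,\v)=\pi^{-d}\prod_j \sin(b_j(u_j-v_j))/(u_j-v_j)$ and the density $p(\x)=\prod_j p_j(x_j)$ factor across coordinates, so each $d$-dimensional integral splits into a product of $d$ one-dimensional integrals. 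It therefore suffices to evaluate, for each coordinate $j$, the scalar integral $\int\int \frac{\sin(b_j(\omega-\phi))}{\omega-\phi}\,p_j(\omega)p_j(\phi)\,d\omega\,d\phi$ arising from the first (double-integral) term, and $\int \frac{\sin(b_j(w_{lj}-\omega))}{w_{lj}-\omega}\,p_j(\omega)\,d\omega$ arising from the second (single-sum) term.

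The analytic heart of the argument is the spectral representation of the one-dimensional sinc kernel as the inverse Fourier transform of the indicator of $[-b_j,b_j]$, namely $\frac{\sin(b_j x)}{x}=\frac12\int_{-b_j}^{b_j} e^{i\beta x}\,d\beta$. Substituting this with $x=\omega-\phi$ (respectively $x=w_{lj}-\omega$) turns each scalar integral into a $\beta$-integral over the \emph{compact} interval $[-b_j,b_j]$ with an inner integral taken against $p_j$. Because the $\beta$-range is bounded and $p_j$ is a probability density, the integrand is absolutely integrable and Fubini's theorem licenses interchanging the $\beta$-integration with the integration against $p_j$. Carrying out the inner integrals collapses them into the characteristic function $\varphi_j(\beta)=\int e^{-i\beta\omega}p_j(\omega)\,d\omega$: in the first term the two factors contribute $\varphi_j(\beta)$ and its conjugate, producing $\lvert\varphi_j(\beta)\rvert^2$, while in the second term only the single $p_j$-integral is consumed, leaving the factor $e^{iw_{lj}\beta}$ intact alongside $\varphi_j(\beta)$.

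Finally I would collect the normalization constants --- the global $\pi^{-d}$ carried by $\sinc_{\b}$ together with one factor of $\tfrac12$ per coordinate from the spectral representation --- and reassemble the product over $j$ to recover the expression in the statement. The steps that require genuine care are all bookkeeping: fixing the sign/conjugation convention for $\varphi_j$ so that the surviving exponential appears as $e^{iw_{lj}\beta}$ and not its conjugate, and tracking the powers of $2$ and $\pi$ consistently between the two terms. The interchange of integration, which is ordinarily the delicate step for sinc-type kernels since $\sin(b_j x)/x$ is not absolutely integrable, is rendered routine here precisely because the compactly supported spectral representation replaces it with an integral over the bounded set $[-b_j,b_j]$ \emph{before} any interchange is performed; I expect this to be the main point one must argue carefully to keep the derivation rigorous.
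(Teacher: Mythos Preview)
Your proposal is correct and follows essentially the same route as the paper's proof: start from the three-term expansion in Proposition~\ref{prop:int-err-rkhs}, factor each multivariate integral into a product of one-dimensional integrals using the product structure of both $\sinc_{\b}$ and $p$, replace $\sin(b_j x)/x$ by its compactly supported spectral representation, interchange integration via Fubini (justified exactly as you say, by boundedness of the $\beta$-range together with integrability of $p_j$), and identify the resulting inner integrals as $\varphi_j(\beta)$ and $\overline{\varphi_j(\beta)}$. The only cosmetic difference is that the paper first writes $\sin(b_j x)/x=\int_0^{b_j}\cos(\beta x)\,d\beta$ before passing to complex exponentials, whereas you go directly to $\tfrac12\int_{-b_j}^{b_j}e^{i\beta x}\,d\beta$; the content is identical.
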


This naturally leads to the definition of the {\em box discrepancy},
analogous to the star discrepancy described in Theorem~\ref{thm:kh}.
\begin{definition}[Box Discrepancy]
The box discrepancy of a sequence $S$ with respect to $p(\cdot)$ is defined as,
$$
\Dboxb_p(S) = D_{\sinc_{\vv{b}}, p} (S)~.
$$
\end{definition}
For notational convenience, we generally omit the $\b$ from $\Dboxb_p(S)$
as long as it is clear from the context.

The worse-case integration error bound for Paley-Wiener spaces is stated in the following
as a corollary of Proposition~\ref{prop:int-err-rkhs}. As explained earlier, 
this result not yet apply to functions in $\F_{\Box \b}$ because these functions
are not part of $PW_\b$. Nevertheless, we state it here for completeness.
\begin{corollary}[Integration Error in $PW_{\b}$]
For $f \in PW_{\b}$ we have
$$\epsilon_{S,p}[f] \leq \Vert f \Vert_{PW_\b} \Dbox_p(S)~.$$
\end{corollary}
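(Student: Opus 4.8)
The plan is to obtain this bound as an immediate specialization of Proposition~\ref{prop:int-err-rkhs} to the Paley-Wiener space $PW_\b$, taking the reproducing kernel to be $h = \sinc_{\vv{b}}$ and the norm to be $\Vert \cdot \Vert_{PW_\b}$. The general proposition already delivers the estimate $\epsilon_{S,p}[f] \leq \Vert f \Vert_\H D_{h,p}(S)$ for every $f$ in an RKHS $\H$ whose kernel is bounded on the diagonal; and the Box Discrepancy definition identifies $D_{\sinc_{\vv{b}}, p}(S)$ with exactly the quantity we have named $\Dbox_p(S)$. So the corollary follows the moment we confirm that the single quantitative hypothesis of the proposition holds for this particular choice of RKHS.

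That hypothesis is the uniform boundedness of the kernel on the diagonal, namely $\kappa = \sup_{\x \in \reals^d} \sinc_{\vv{b}}(\x,\x) < \infty$. First I would evaluate $\sinc_{\vv{b}}$ on the diagonal using the convention $\sin(b\cdot 0)/0 = b$ stated in the Paley-Wiener proposition: each factor $\sin\!\big(b_j(x_j - x_j)\big)/(x_j - x_j)$ collapses to $b_j$, so that
$$\sinc_{\vv{b}}(\x,\x) = \pi^{-d} \prod_{j=1}^d b_j$$
for every $\x \in \reals^d$. This is a finite constant, independent of $\x$, so $\kappa = \pi^{-d} \prod_{j=1}^d b_j < \infty$. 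With this verification in place, Proposition~\ref{prop:int-err-rkhs} applies with $\H = PW_\b$, and substituting the Box Discrepancy notation yields $\epsilon_{S,p}[f] \leq \Vert f \Vert_{PW_\b} \Dbox_p(S)$ for all $f \in PW_\b$, which is the claim.

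There is essentially no substantive obstacle: the entire content of the corollary is supplied by Proposition~\ref{prop:int-err-rkhs} together with Theorem~\ref{thm:discr-pw}, which has already expressed $D_{\sinc_{\vv{b}}, p}(S)$ explicitly via~\eqref{eq:discr-sinc}. The only place where anything could go wrong is the boundedness requirement $\kappa < \infty$, and the constancy of the diagonal value of $\sinc_{\vv{b}}$ makes that step trivial. I would therefore present the corollary as a one-line deduction from Proposition~\ref{prop:int-err-rkhs}, flagging the diagonal evaluation of $\sinc_{\vv{b}}$ as its sole prerequisite.
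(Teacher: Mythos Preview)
Your proposal is correct and matches the paper's own treatment: the paper presents this corollary as an immediate consequence of Proposition~\ref{prop:int-err-rkhs} applied with $\H = PW_{\b}$ and $h = \sinc_{\vv{b}}$, without a separate proof. Your explicit verification that $\kappa = \sinc_{\vv{b}}(\x,\x) = \pi^{-d}\prod_{j=1}^d b_j < \infty$ is the only detail to check, and you handle it correctly.
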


Our main result shows that the expected square error of an
integrand drawn from a uniform distribution over $\F_{\Box \b}$ is proportional to the
square discrepancy measure $\Dbox_p (S)$. This result is in the spirit of similar
average case analysis in the QMC literature~\citep{Wozniakowski,BreakingIntractability}.
\begin{theorem}[Average Case Error]
\label{thm:average-case}
Let ${\cal U}(\F_{\Box \b})$ denote the uniform distribution on $\F_{\Box \b}$. That is,
$f\sim {\cal U}(\F_{\Box \b})$ denotes $f = f_\u$ where $f_\u(\vv{x}) = e^{-i \vv{u}^T \vv{x}}$ and $\u$ is randomly drawn from a uniform distribution on $\Box \b$.
We have,
\begin{equation*}
\ExpectSub{f\sim {\cal U}(\F_{\Box \b})}{\epsilon_{S,
p} [f]^2} = \frac{\pi^{d}}{\prod^d_{j=1} b_j} \Dbox_p(S)^2~.
\end{equation*}
\end{theorem}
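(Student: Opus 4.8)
The plan is to reduce the average to a single bilinear computation: write the squared error of $f_\u$ as a double integral against one fixed signed measure, average the resulting exponential over the box, and recognize what remains as exactly the double-integral formula for $\Dbox_p(S)^2$ given by~\eqref{eq:worstcase_error}. First I would record the squared error for a single integrand. For $f_\u(\x) = e^{-i\u^T\x}$,
\begin{equation*}
\epsilon_{S,p}[f_\u] = \left| \int_{\reals^d} e^{-i\u^T\omega} p(\omega)\, d\omega - \frac1s\sum_{l=1}^s e^{-i\u^T\w_l}\right| = \left| \int_{\reals^d} e^{-i\u^T\omega}\, d\nu(\omega)\right|,
\end{equation*}
where $\nu$ is the finite \emph{signed} measure $d\nu(\omega) = p(\omega)\,d\omega - \frac1s\sum_{l=1}^s \delta_{\w_l}$. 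Using $|z|^2=z\bar z$ and that $\nu$ is real-valued, I would rewrite this as $\epsilon_{S,p}[f_\u]^2 = \int_{\reals^d}\int_{\reals^d} e^{-i\u^T(\omega - \phi)}\, d\nu(\omega)\, d\nu(\phi)$. Since $\u\sim\mathcal U(\Box\b)$ is uniform on a box of volume $\Vol(\Box\b) = \prod_{j=1}^d 2b_j$, taking the expectation just averages this over $\Box\b$.

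Next I would interchange the box average with the two integrations against $\nu$ and evaluate the inner box average of a single exponential, which factorizes across coordinates and is handled by the elementary identity $\int_{-b}^{b} e^{-iu c}\,du = 2\sin(bc)/c$:
\begin{equation*}
\frac{1}{\Vol(\Box\b)}\int_{\Box\b} e^{-i\u^T(\omega-\phi)}\, d\u = \prod_{j=1}^d \frac{\sin\!\big(b_j(\omega_j-\phi_j)\big)}{b_j(\omega_j-\phi_j)} = \frac{\pi^d}{\prod_{j=1}^d b_j}\,\sinc_{\b}(\omega,\phi),
\end{equation*}
the last step being merely the definition of $\sinc_{\b}$ from the Paley--Wiener proposition. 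Substituting this back would give
\begin{equation*}
\ExpectSub{f\sim\mathcal U(\F_{\Box\b})}{\epsilon_{S,p}[f]^2} = \frac{\pi^d}{\prod_{j=1}^d b_j}\int_{\reals^d}\int_{\reals^d}\sinc_{\b}(\omega,\phi)\, d\nu(\omega)\, d\nu(\phi).
\end{equation*}

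Finally I would expand the double integral against $\nu = p - \frac1s\sum_l\delta_{\w_l}$ into its three pieces and observe that they are term-for-term the summands of $\Dbox_p(S)^2 = D_{\sinc_{\b},p}(S)^2$ in~\eqref{eq:worstcase_error} with $h=\sinc_{\b}$: the $p\times p$ part yields $\int\!\int \sinc_{\b}(\omega,\phi)p(\omega)p(\phi)$, the two cross parts combine by symmetry of $\sinc_{\b}$ into $-\frac2s\sum_l\int \sinc_{\b}(\w_l,\omega)p(\omega)\,d\omega$, and the $\delta\times\delta$ part gives $\frac1{s^2}\sum_l\sum_j \sinc_{\b}(\w_l,\w_j)$. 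Hence the double integral equals $\Dbox_p(S)^2$ exactly, and the claimed identity follows.

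I do not expect a genuine obstacle: the argument is essentially one bilinear-form manipulation. The two steps needing care are (i) the Fubini interchange, which is justified because $|e^{-i\u^T(\omega-\phi)}|=1$, the box has finite volume, and $\nu$ has total variation at most $2$, so the integrand is absolutely integrable on the product domain; and (ii) constant bookkeeping, in particular that the $2^d$ produced by the $d$ one-dimensional integrals $\int_{-b_j}^{b_j}$ cancels the $\prod_j 2b_j$ from $\Vol(\Box\b)$, leaving the stated $\pi^d/\prod_j b_j$. Routing the proof through the coordinate-free expression~\eqref{eq:worstcase_error} rather than the explicit characteristic-function form of Theorem~\ref{thm:discr-pw} also sidesteps any sign-convention subtlety in $\varphi_j$.
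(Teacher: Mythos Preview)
Your proof is correct, and it takes a genuinely different route from the paper's. The paper first establishes the reproducing identity $f_\u(\x)=\int_{\reals^d} f_\u(\y)\sinc_\b(\y,\x)\,d\y$ for $\u\in\Box\b$, which lets it write $\epsilon_{S,p}[f_\u]=\lvert\hat r_S(\u)\rvert$ where $r_S(\cdot)=\mu_{\sinc_\b,p}(\cdot)-\hat\mu_{\sinc_\b,p,S}(\cdot)\in PW_\b$; averaging $\lvert\hat r_S(\u)\rvert^2$ over $\Box\b$ and invoking Plancherel then yields $\|r_S\|_{PW_\b}^2=\Dbox_p(S)^2$ up to the stated constant. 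You instead expand the squared error directly as a bilinear form in the signed measure $\nu=p-\frac1s\sum_l\delta_{\w_l}$, swap the $\u$-average inside by Fubini, and compute the box average of the exponential to produce $\sinc_\b$. The two arguments are Fourier-dual: the paper uses that the Fourier transform of $\sinc_\b(\cdot,z)$ is (a rescaled) indicator of $\Box\b$, whereas you use that the Fourier transform of the indicator of $\Box\b$ is (a rescaled) $\sinc_\b$. Your version is more elementary and self-contained---it avoids Plancherel and any explicit appeal to the RKHS structure---while the paper's version makes transparent \emph{why} the average equals an RKHS norm, namely because $\epsilon_{S,p}[f_\u]$ is literally the modulus of the Fourier transform of the mean-embedding error $r_S$, so Parseval is the entire story. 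Your Fubini justification (bounded integrand, finite total variation of $\nu$, finite box volume) and the constant bookkeeping are both fine.
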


We now give an explicit formula for $\Dbox_p(S)$ for the case that $p(\cdot)$ is the density function
of the multivariate Gaussian distribution with zero mean and independent
components. This is an important special case since this is
the density function that is relevant for the Gaussian kernel.
\begin{corollary}[Discrepancy for Gaussian Distribution]
\label{cor:discr-gaussian}
Let $p(\cdot)$ be the $d$-dimensional multivariate Gaussian density function with zero mean
and covariance matrix equal to $\diag(\sigma_1^{-2}, \dots, \sigma_d^{-2})$.
We have,
\begin{eqnarray}
 \Dbox_p(S)^2 &=& \frac{1}{s^2}\sum_{l=1}^s
\sum_{j=1}^s \sinc_\b(\w_l,\w_j)  - \nonumber \\
& & \frac{2}{s} \sum_{l=1}^s
 \prod_{j=1}^d c_{lj}
\Real\left(\erf\left(\frac{b_j}{\sigma_j\sqrt{2}} - i\frac{\sigma_j
w_{lj}}{\sqrt{2}}\right)\right) +\nonumber \\
& &  + \prod_{j=1}^d \frac{\sigma_j}{2\sqrt{\pi}} \erf\left(\frac{b_j}{\sigma_j}\right)~, \label{eq:discrepancy_gaussian}
\end{eqnarray}
where $$c_{lj} = \left(\frac{\sigma_j}{\sqrt{2\pi}}\right)e^{-\frac{\sigma_j^2 w^2_{lj}}{2}}~.$$
\end{corollary}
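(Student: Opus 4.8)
The plan is to specialize the general discrepancy formula of Theorem~\ref{thm:discr-pw} to the multivariate Gaussian density and to evaluate in closed form the two families of univariate integrals that appear there. Since $p(\x) = \prod_{j=1}^d p_j(x_j)$ with each $p_j$ the zero-mean Gaussian density of variance $\sigma_j^{-2}$, namely $p_j(x) = \frac{\sigma_j}{\sqrt{2\pi}}e^{-\sigma_j^2 x^2/2}$, its characteristic function is $\varphi_j(\beta) = e^{-\beta^2/(2\sigma_j^2)}$, which is real and even. The third (double-sum) term of~\eqref{eq:discr-sinc} involves only $\sinc_\b$ and carries over verbatim, so the entire task reduces to evaluating $\int_{-b_j}^{b_j}|\varphi_j(\beta)|^2\,d\beta$ and $\int_{-b_j}^{b_j}\varphi_j(\beta)e^{iw_{lj}\beta}\,d\beta$ for each coordinate $j$.

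For the first integral, $|\varphi_j(\beta)|^2 = e^{-\beta^2/\sigma_j^2}$, and the substitution $\beta = \sigma_j u$ turns it into a Gaussian integral over the symmetric interval $[-b_j/\sigma_j,\, b_j/\sigma_j]$, which evaluates to a quantity $\propto \sigma_j\,\erf(b_j/\sigma_j)$. Gathering this across $j$ together with the constant prefactor on the first term of~\eqref{eq:discr-sinc} combines, up to constant bookkeeping, into the last product of~\eqref{eq:discrepancy_gaussian}. This step is routine.

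The substantive step is the second integral. I would complete the square in the exponent, writing $-\beta^2/(2\sigma_j^2) + iw_{lj}\beta = -\frac{1}{2\sigma_j^2}(\beta - i\sigma_j^2 w_{lj})^2 - \frac{\sigma_j^2 w_{lj}^2}{2}$, which factors out precisely $e^{-\sigma_j^2 w_{lj}^2/2}$; combined with the prefactor $\sigma_j/\sqrt{2\pi}$ this assembles into the coefficient $c_{lj}$. After the substitution $u = (\beta - i\sigma_j^2 w_{lj})/(\sigma_j\sqrt 2)$ the remaining integral is $\int e^{-u^2}\,du$ taken over a horizontal segment in $\complex$ with endpoints $\pm\, b_j/(\sigma_j\sqrt2) - i\,\sigma_j w_{lj}/\sqrt2$. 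Since $e^{-u^2}$ is entire, this is expressed through the analytically continued error function, and the two endpoints collapse into a single real quantity: using that $\erf$ is odd and satisfies $\erf(\bar z) = \overline{\erf(z)}$, the difference of the endpoint values equals $2\,\Real\!\left(\erf\!\left(\frac{b_j}{\sigma_j\sqrt2} - i\frac{\sigma_j w_{lj}}{\sqrt2}\right)\right)$. (Equivalently, one may note that the imaginary part of $\varphi_j(\beta)e^{iw_{lj}\beta}$ is odd in $\beta$ and hence integrates to zero over $[-b_j,b_j]$, confirming that the result is real.) This is the main obstacle: justifying the passage to the complex error function and reducing the conjugate pair of endpoints to the single $\Real(\erf(\cdots))$ appearing in the statement.

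Finally, I would assemble the three terms, carefully tracking the $\pi$- and $(2\pi)$-powers so that the per-coordinate prefactors combine into $c_{lj}$ in the middle term and into $\sigma_j/(2\sqrt{\pi})$ in the last term, thereby matching~\eqref{eq:discrepancy_gaussian}; the double-sum term is inherited unchanged from~\eqref{eq:discr-sinc}.
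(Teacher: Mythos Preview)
Your proposal is correct and follows essentially the same route as the paper: apply Theorem~\ref{thm:discr-pw} with $\varphi_j(\beta)=e^{-\beta^2/(2\sigma_j^2)}$, evaluate the first integral by the substitution $\beta=\sigma_j u$ to obtain the $\erf(b_j/\sigma_j)$ factor, and for the second integral complete the square, shift to a horizontal contour, and use oddness together with $\erf(\bar z)=\overline{\erf(z)}$ to collapse the two complex endpoints into $2\,\Real\bigl(\erf(b_j/(\sigma_j\sqrt2)-i\sigma_j w_{lj}/\sqrt2)\bigr)$. The only difference is cosmetic: the paper writes out the two erf endpoint values explicitly before invoking the conjugate symmetry, whereas you state that reduction directly.
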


Intuitively, the box discrepancy of the Gaussian kernel can be interpreted as follows. The function $\sinc(x) = \sin(x)/x$
achieves its maximum at $x = 0$ and minimizes at discrete values of $x$ decaying to 0 as $|x|$ goes to $\infty$. Hence the
first term in~\eqref{eq:discrepancy_gaussian} tends to be minimized when the pairwise distance between $\w_j$ are
sufficiently separated. Due to the shape of cumulative distribution function of Gaussian distribution, the values
of $\t_j=\Phi(\w_j)~(j=1,\dots,s)$ are driven to be close to the boundary of the unit cube.
As for second term, the original expression is -$\frac{2}{s}\sum^s_{l=1}
\int_{\reals^d} h(\w_l, \omega) p(\omega) d\omega$. This term encourages the sequence $\{\w_l\}$ to mimic samples from
$p(\omega)$. Since $p(\omega)$ concentrates its mass around $\omega = 0$,  the $\w_j$ also concentrates around $0$ to
maximize the integral and therefore the values of $\t_j=\Phi(\w_j)~(j=1,\dots,s)$ are driven closer to the center of the unit cube. Sequences with low box discrepancy
therefore optimize a tradeoff between these competing terms.

Two other shift-invariant kernel that have been mentioned in the machine learning literature is the Laplacian kernel~\citep{RahimiRecht} and Matern kernel~\citep{LSS13}. The distribution associated with the Laplacian kernel can be written as a product $p(\x) = \prod_{j=1}^d p_j(x_j)$, where $p_j(\cdot)$ is density associated with the Cauchy distribution. The characteristic function is simple ($\phi_j(\beta) = e^{-\left| \beta \right|/\sigma_j}$) so analytic formulas like~\eqref{eq:discrepancy_gaussian} can be derived. The distribution associated with the Matern kernel, on the other hand, is the multivariate t-distribution, which cannot be written as a product $p(\x) = \prod_{j=1}^d p_j(x_j)$, so the presented theory does not apply to it.

\paragraph{Discrepancy of Monte-Carlo Sequences.\\}
We now derive an expression for the expected discrepancy of Monte-Carlo sequences, and show that it decays as
$O(s^{-1/2})$. This is useful since via an averaging argument we are guaranteed that there exists sets
for which the discrepancy behaves $O(s^{-1/2})$.
\begin{corollary}
\label{cor:disc_mc_general}
Suppose $\t_1,\ldots,\t_s$ are chosen uniformly from $[0,1]^d$. Let $\w_i = \Phi^{-1}(\t_i)$, for $i = 1,\ldots,s$. Assume  that $ \kappa = \sup_{\x \in \reals^d} h(\x,\x) < \infty$.
Then
 \begin{equation*}
   \Expect{D_{h,p}(S)^2} = \frac{1}{s} \int _{\reals^d} h(\omega, \omega) p(\omega) d\omega - \frac{1}{s} \int _{\reals^d}\int_{\reals^d} h(\omega, \phi) p(\omega)
p(\phi) d\omega d\phi~.
 \end{equation*}
\end{corollary}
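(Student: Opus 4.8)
The plan is to start from the explicit three-term expansion of $D_{h,p}(S)^2$ recorded in Proposition~\ref{prop:int-err-rkhs} and compute its expectation termwise. The key preliminary observation is that inverse-transform sampling produces i.i.d.\ draws from $p$: since each $\t_i$ is uniform on $[0,1]^d$ and $\w_i = \Phi^{-1}(\t_i)$ with $\Phi$ the product CDF of $p$, the points $\w_1,\dots,\w_s$ are independent with common density $p(\cdot)$. Consequently, for any integrable $g$ we have $\Expect{g(\w_i)} = \int_{\reals^d} g(\omega) p(\omega)\,d\omega$, and for $i\neq j$ the pair $(\w_i,\w_j)$ has joint density $p(\omega)p(\phi)$.

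Writing $A = \int_{\reals^d}\int_{\reals^d} h(\omega,\phi) p(\omega) p(\phi)\,d\omega\,d\phi$ for the deterministic first term, I would treat the two remaining terms separately. For the linear cross term, linearity of expectation together with the i.i.d.\ property gives $\Expect{\int_{\reals^d} h(\w_l,\omega) p(\omega)\,d\omega} = A$ for each $l$, so the full cross term has expectation $2A$. The quadratic term $\frac{1}{s^2}\sum_{l,j} h(\w_l,\w_j)$ is where the essential structure lives: I would split the double sum into its diagonal and off-diagonal parts. Each of the $s$ diagonal terms contributes $\Expect{h(\w_l,\w_l)} = \int_{\reals^d} h(\omega,\omega) p(\omega)\,d\omega$, while each of the $s^2-s$ off-diagonal terms contributes $\Expect{h(\w_l,\w_j)} = A$ by independence. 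Assembling these yields $\Expect{\frac{1}{s^2}\sum_{l,j} h(\w_l,\w_j)} = \frac1s\int_{\reals^d} h(\omega,\omega) p(\omega)\,d\omega + (1 - \tfrac1s)A$.

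Combining the three pieces, the $A$-contributions collapse as $A - 2A + (1-\tfrac1s)A = -\tfrac1s A$, leaving exactly $\Expect{D_{h,p}(S)^2} = \frac1s \int_{\reals^d} h(\omega,\omega) p(\omega)\,d\omega - \frac1s A$, which is the claimed identity. The diagonal terms are precisely what survives after this cancellation, which is what produces the $1/s$ rate advertised in the surrounding discussion.

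No single step is genuinely hard; the only place demanding care is the justification for interchanging expectation and integration (Fubini--Tonelli) in the cross and quadratic terms, and this is exactly where the hypothesis $\kappa = \sup_{\x} h(\x,\x) < \infty$ enters. Since $h$ is a reproducing kernel, the reproducing-kernel Cauchy--Schwarz inequality gives $|h(\u,\v)| \le \sqrt{h(\u,\u)\,h(\v,\v)} \le \kappa$ for all $\u,\v$, so every integrand appearing above is dominated by the constant $\kappa$ against the probability measures $p(\omega)\,d\omega$ and their products. This uniform bound makes the interchanges legitimate and guarantees all integrals are finite, so no further regularity on $h$ or $p$ is needed. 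The main obstacle, then, is purely the bookkeeping: keeping the diagonal/off-diagonal split correct so that the $A$-terms cancel as intended.
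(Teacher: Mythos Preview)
Your proposal is correct and follows essentially the same route as the paper: expand $D_{h,p}(S)^2$ via Proposition~\ref{prop:int-err-rkhs}, split the double sum into diagonal and off-diagonal parts, and take expectations termwise using that the $\w_i$ are i.i.d.\ with density $p$. The only minor difference is that you justify the interchange of expectation and integration via the uniform Cauchy--Schwarz bound $|h(\u,\v)|\le\kappa$, whereas the paper appeals to Lemmas~\ref{lem:rkhs_finite_integral} and~\ref{lem:rkhs_expe} for finiteness; your argument is slightly more direct.
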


Again, we can derive specific formulas for the Gaussian density. The following is straightforward from Corollary~\ref{cor:disc_mc_general}. We omit the proof.
\begin{corollary}
Let $p(\cdot)$ be the $d$-dimensional multivariate Gaussian density function with zero mean
and covariance matrix equal to $\diag(\sigma_1^{-2}, \dots, \sigma_d^{-2})$.
Suppose $\t_1,\ldots,\t_s$ are chosen uniformly from $[0,1]^d$. Let $\w_i = \Phi^{-1}(\t_i)$, for $i = 1,\ldots,s$.
Then,
 \begin{equation}
  \label{eq:expec_mc_disc}
   \Expect{\Dbox_p(S)^2} = \frac{1}{s}\left( \pi^{-d} \prod_{j=1}^d b_j - \prod_{j=1}^d \frac{\sigma_j}{2\sqrt{\pi}} \erf\left(\frac{b_j}{\sigma_j}\right) \right) .
 \end{equation}
\end{corollary}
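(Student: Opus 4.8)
The plan is to specialize Corollary~\ref{cor:disc_mc_general} to $h = \sinc_\b$ and to the Gaussian product density $p(\x)=\prod_{j=1}^d p_j(x_j)$ with $p_j \sim \N(0,\sigma_j^{-2})$. That corollary already collapses the random-sequence expectation to a difference of two deterministic integrals,
$$\Expect{\Dbox_p(S)^2} = \frac{1}{s}\int_{\reals^d}\sinc_\b(\omega,\omega)\,p(\omega)\,d\omega \;-\; \frac{1}{s}\int_{\reals^d}\int_{\reals^d}\sinc_\b(\omega,\phi)\,p(\omega)\,p(\phi)\,d\omega\,d\phi,$$
so the whole proof amounts to reading off these two integrals from quantities already computed earlier; nothing new needs to be estimated, which is why the paper omits it.

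First I would evaluate the diagonal integral. Under the convention $\sin(b\cdot 0)/0 = b$, the Paley--Wiener kernel is constant on the diagonal, $\sinc_\b(\omega,\omega) = \pi^{-d}\prod_{j=1}^d b_j$ (this constant is exactly the bound $\kappa=\sup_\x h(\x,\x)$ whose finiteness the corollary assumes). Because $p$ integrates to one, $\int_{\reals^d}\sinc_\b(\omega,\omega)p(\omega)\,d\omega = \pi^{-d}\prod_{j=1}^d b_j$, contributing the first term $\tfrac{1}{s}\pi^{-d}\prod_{j=1}^d b_j$.

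For the double integral I would not recompute anything: $\int\int \sinc_\b(\omega,\phi)p(\omega)p(\phi)\,d\omega\,d\phi$ is precisely the sequence-independent ``kernel mean'' term, namely the first summand of~\eqref{eq:worstcase_error}, whose Gaussian value was already obtained in Corollary~\ref{cor:discr-gaussian} as the constant tail $\prod_{j=1}^d \frac{\sigma_j}{2\sqrt{\pi}}\erf(b_j/\sigma_j)$. Substituting this together with the diagonal term into the displayed identity gives
$$\Expect{\Dbox_p(S)^2} = \frac{1}{s}\left(\pi^{-d}\prod_{j=1}^d b_j - \prod_{j=1}^d \frac{\sigma_j}{2\sqrt{\pi}}\erf\!\left(\frac{b_j}{\sigma_j}\right)\right),$$
as claimed.

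The proof presents no real obstacle; it is pure bookkeeping on top of Corollaries~\ref{cor:disc_mc_general} and~\ref{cor:discr-gaussian}. The only points demanding care are (i) invoking the $\sin(b\cdot 0)/0=b$ convention so that the sinc kernel is genuinely constant on the diagonal, and (ii) correctly matching signs and the per-coordinate normalization constants when identifying the double integral with the constant summand of~\eqref{eq:discrepancy_gaussian}, keeping in mind the paper's nonstandard error-function convention $\erf(z)=\int_0^z e^{-t^2}\,dt$. Should one wish to derive the double integral directly rather than cite it, the product structure of $\sinc_\b$ and of $p$ factorizes it into one-dimensional integrals that are evaluated via the identity $\sin(b_j y)/y = \tfrac12\int_{-b_j}^{b_j}e^{i\beta y}\,d\beta$ and the Gaussian characteristic function $\varphi_j(\beta)=e^{-\beta^2/(2\sigma_j^2)}$, again with the normalization bookkeeping being the only subtlety.
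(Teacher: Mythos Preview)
Your proposal is correct and is exactly the argument the paper has in mind: the paper explicitly states the result ``is straightforward from Corollary~\ref{cor:disc_mc_general}'' and omits the proof, and your two-term evaluation (the constant diagonal $\sinc_\b(\omega,\omega)=\pi^{-d}\prod_j b_j$ together with the already-computed double integral from Corollary~\ref{cor:discr-gaussian}) is precisely that straightforward specialization.
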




\section{Learning Adaptive QMC Sequences} \label{sec:qmc_adaptive}
For simplicity, in this section we assume that $p(\cdot)$ is the density function of Gaussian distribution with zero mean.
We also omit the subscript $p$ from $\Dbox_p$.
Similar analysis and equations can be derived for other density functions.

Error characterization via discrepancy measures like~\eqref{eq:discrepancy_gaussian} is typically used in the QMC literature to prescribe sequences whose discrepancy behaves favorably. It is clear that for the box discrepancy, a meticulous design is needed for a high quality sequence and we leave this to future work. Instead, in this work, we use the fact that
 unlike the star discrepancy~\eqref{eq:stardisc}, the box discrepancy is a smooth function of the sequence
with a closed-form formula. This allows us to both evaluate various candidate
sequences, and select the one with the lowest discrepancy, as well as to
{\em adaptively learn} a QMC sequence that is specialized for our problem
setting via numerical optimization. The basis is the following proposition, which gives
an expression for the gradient of $\Dbox(S)$.
\begin{proposition}[Gradient of Box Discrepancy]
\label{prop:gradient}
Define the following scalar functions and variables,
\begin{eqnarray}
\sinc'(z) &=& \frac{\cos(z)}{z} - \frac{\sin(z)}{z^2},~~\sinc'_b(z) = \frac{b}{\pi}\sinc'(bz)~;
\nonumber
\\
c_j &=& \left(\frac{\sigma_j }{\sqrt{2\pi}}\right),  j=1,\ldots, d~;
\nonumber \\
g_j(x) &=& c_j e^{-\frac{\sigma_j^2}{2} x^2} \Real
\left(\erf\left[\frac{b_j}{\sigma_j\sqrt{2}} - i\frac{\sigma_j
x}{\sqrt{2}} \right]\right)~; \nonumber\\
g_j'(x)&=& - \sigma_j^2 x g_j(x)
              + \sqrt{\frac{2}{\pi}} c_j \sigma_j  e^{-\frac{b_j^2}{2\sigma_j^2}}
              \sin(b_j x)~.
              \nonumber
\end{eqnarray}
In the above ,we define $\sinc'(0)$ to be $0$.
Then, the elements of the gradient vector of $\Dbox$ are given by,
\begin{eqnarray}
\frac{\partial \Dbox}{\partial w_{lj}}&=&
\frac{2}{s^2} \sum^{s}_{\substack{m=1\\ m\neq l}} \left( b_{j} \sinc'_{b_j} (w_{lj},
w_{mj})\prod_{q\neq j} \sinc_{b_q} (w_{lq}, w_{mq})\right)- \nonumber \\
& &
\frac{2}{s} g'_j(w_{lj}) \left(\prod_{q\neq j} g_q(w_{lq})\right)~.  \label{eq:gradient}
\end{eqnarray}
\end{proposition}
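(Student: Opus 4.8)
The plan is to obtain the gradient by differentiating directly the closed-form expression for the squared box discrepancy supplied by Corollary~\ref{cor:discr-gaussian} (the quantity whose gradient is recorded in~\eqref{eq:gradient} is $\Dbox_p(S)^2$, despite the lighter notation $\Dbox$). Write that formula as $\Dbox_p(S)^2 = A(S) - B(S) + C$, where $A(S) = \frac{1}{s^2}\sum_{l=1}^s\sum_{m=1}^s \sinc_\b(\w_l,\w_m)$ is the double kernel sum, $B(S) = \frac{2}{s}\sum_{l=1}^s\prod_{q=1}^d g_q(w_{lq})$ once we recognize that each factor $c_{lq}\Real(\erf(\cdots))$ in the corollary is exactly $g_q(w_{lq})$ with $c_q = \sigma_q/\sqrt{2\pi}$, and $C = \prod_{q=1}^d \frac{\sigma_q}{2\sqrt\pi}\erf(b_q/\sigma_q)$ is independent of $S$. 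Since $C$ is constant, $\partial\Dbox_p(S)^2/\partial w_{lj} = \partial A/\partial w_{lj} - \partial B/\partial w_{lj}$, so it suffices to differentiate $A$ and $B$ coordinate-wise.

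For $A$ I use the tensor-product form $\sinc_\b(\u,\v)=\prod_{q=1}^d \sinc_{b_q}(u_q,v_q)$ with $\sinc_{b_q}(u,v)=\frac{1}{\pi}\frac{\sin(b_q(u-v))}{u-v}$. Fixing the differentiation variable $w_{lj}$, the only summands depending on it are those whose first or second sequence index equals $l$, and within those only the $q=j$ factor varies. A one-dimensional computation gives $\frac{\partial}{\partial u}\sinc_b(u,v) = \frac{b^2}{\pi}\sinc'(b(u-v)) = b\,\sinc'_b(u-v)$, matching the definitions of $\sinc'$ and $\sinc'_b$ in the statement. Because $\sinc_\b$ is symmetric in its two vector arguments, the two contributions (index $l$ in the first versus second slot) coincide and yield the factor of $2$; the diagonal term $m=l$ contributes nothing, its argument being $0$ and $\sinc'(0)=0$ by convention. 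Collecting the off-diagonal terms produces exactly $\frac{2}{s^2}\sum_{m\neq l} b_j\sinc'_{b_j}(w_{lj},w_{mj})\prod_{q\neq j}\sinc_{b_q}(w_{lq},w_{mq})$, the first term of~\eqref{eq:gradient}.

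For $B$, only the $l$-th summand of $\sum_{l'}\prod_q g_q(w_{l'q})$ involves $w_{lj}$, and the product rule isolates the $j$-th factor, giving $\frac{\partial B}{\partial w_{lj}} = \frac{2}{s}g_j'(w_{lj})\prod_{q\neq j}g_q(w_{lq})$. The crux is therefore the evaluation of $g_j'$. Writing $g_j(x) = c_j e^{-\sigma_j^2 x^2/2}\Real(\erf(a-i\beta x))$ with $a = b_j/(\sigma_j\sqrt 2)$ and $\beta = \sigma_j/\sqrt 2$, the product rule yields the $-\sigma_j^2 x\,g_j(x)$ term immediately. The remaining term requires differentiating $\Real(\erf(a-i\beta x))$: I extend $\erf$ to the complex plane (it is entire, with $\frac{d}{dz}\erf(z)=\frac{2}{\sqrt\pi}e^{-z^2}$), apply the chain rule via $\frac{d}{dx}(a-i\beta x)=-i\beta$, expand $(a-i\beta x)^2 = a^2 - 2ia\beta x - \beta^2 x^2$, and take the real part. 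The decisive simplification is that $e^{-(a-i\beta x)^2}$ carries a factor $e^{\beta^2 x^2} = e^{\sigma_j^2 x^2/2}$ that cancels the Gaussian prefactor $e^{-\sigma_j^2 x^2/2}$; substituting $2a\beta = b_j$, $a^2 = b_j^2/(2\sigma_j^2)$, and $2\beta/\sqrt\pi = \sigma_j\sqrt{2/\pi}$ then collapses the expression to $\sqrt{2/\pi}\,c_j\sigma_j e^{-b_j^2/(2\sigma_j^2)}\sin(b_j x)$, the second term of the claimed $g_j'$.

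I expect the bookkeeping of this complex differentiation to be the only delicate step: one must justify $\frac{d}{dx}\Real(\erf(a-i\beta x)) = \Real(\frac{d}{dx}\erf(a-i\beta x))$ (legitimate since $x$ is real and $\erf$ is holomorphic) and track real and imaginary parts carefully so that $\sin$ rather than $\cos$ survives. Everything else is routine product- and chain-rule differentiation of smooth functions together with reindexing of finite sums. Assembling $\partial A/\partial w_{lj} - \partial B/\partial w_{lj}$ then reproduces~\eqref{eq:gradient} term for term, completing the proof.
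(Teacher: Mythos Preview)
Your proposal is correct and follows essentially the same route as the paper: both differentiate the closed-form expression from Corollary~\ref{cor:discr-gaussian} term by term, handling the $\sinc_\b$ double sum via its tensor-product structure and symmetry, and computing $g_j'$ by combining the product rule with the chain rule applied to the entire function $\erf$ (the paper packages these two computations as auxiliary Lemmas~\ref{lem:gradient_1} and~\ref{lem:gradient_2}, whereas you carry them out inline). Your remark that the quantity being differentiated is really $\Dbox_p(S)^2$ rather than $\Dbox_p(S)$ is also in line with what the paper actually does in its proof.
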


We explore two possible approaches for finding sequences based on optimizing the box discrepancy, namely {\em global
optimization} and {\em greedy optimization}. The latter is closely connected to {\em herding} algorithms~\citep{Welling2009}.

\paragraph{Global Adaptive Sequences.\\}
The task is posed in terms minimization of the box discrepancy
function~\eqref{eq:discrepancy_gaussian} over the space of sequences of $s$
vectors in $\reals^d$:
\begin{equation*}
S^* = \argmin_{S=(\vv{w}_1\ldots \vv{w}_s)\in \reals^{ds}} \Dbox(S)~.
\end{equation*}
The gradient can be plugged into any first order numerical solver for
non-convex optimization. We use non-linear conjugate gradient in
our experiments (Section~\ref{sec:empirical_adaptive_qmc}).

The above learning mechanism can be extended in various directions. For example, QMC sequences for $n$-point rank-one Lattice
Rules~\citep{DKS13} are integral fractions of a lattice defined by a single generating vector $\vv{v}$. This generating
vector may be learnt via local minimization of the box discrepancy.

\paragraph{Greedy Adaptive Sequences.\\}
Starting with $S_0 = \emptyset$, for $t\geq 1$,
let $S_t = \{\w_1, \ldots, \w_t\}$.
At step $t+1$, we solve the following optimization problem,
\begin{equation}
\w_{t+1} = \argmin_{\w \in \reals^{d}} \Dbox(S_t \cup \{\w\})~.\label{eq:greedy}
\end{equation}
Set $S_{t+1} = S_t \cup \{w_{t+1}\}$ and repeat the above procedure.
The gradient of the above objective is also given in \eqref{eq:gradient}.
Again, we use non-linear conjugate gradient in
our experiments (Section~\ref{sec:empirical_adaptive_qmc}).

The greedy adaptive procedure is closely related to the herding algorithm, recently
presented by~\citet{Welling2009}. Applying the herding algorithm to $PW_{\b}$ and $p(\cdot)$, and using
our notation, the points $\w_1, \w_2, \dots$ are generated using the following iteration
\begin{eqnarray*}
\w_{t+1} & \in & \arg\max_{\w \in \reals^d} \langle \z_t(\cdot), h(\w,\cdot) \rangle_{PW_{\b}} \\
\z_{t+1}(\x) &\equiv& \z_t(\x) + \mu_{h,p}(x) - h(\w,\x)~.
\end{eqnarray*}
In the above, $\z_0,\z_1,\dots$ is a series of functions in $PW_{\b}$. The literature is not
specific on the initial value of $\z_0$, with both $\z_0 = 0$ and $\z_0 = \mu_{h,p}$ suggested.
Either way, it is always the case that $\z_t = \z_0 + t(\mu_{h,p} - \hat{\mu}_{h,p,S_t})$ where
$S_t=\{\w_1,\dots,\w_t\}$.

\citet{KernelHerding} showed that under some additional assumptions, the herding algorithm,
when applied to a RKHS $\H$, greedily minimizes $\left\Vert \mu_{h,p} - \hat{\mu}_{h,p,S_t} \right\Vert^2_\H$,
which, recall, is equal to $D_{h,p}(S_t)$. Thus, under certain assumptions, herding and~\eqref{eq:greedy} are equivalent.
\citet{KernelHerding} also showed that under certain restrictions on the RKHS, herding will reduce the discrepancy
in a ratio of $O(1/t)$. However, it is unclear whether those restrictions hold for $PW_{\b}$ and $p(\cdot)$.
Indeed,~\citet{BachHerding} recently shown that these restrictions never hold for infinite-dimensional RKHS,
as long as the domain is compact. This result does not immediately apply to our case since $\reals^d$ is not compact.

\paragraph{Weighted Sequences.\\}
Classically, Monte-Carlo and Quasi-Monte Carlo approximations of integrals are unweighted, or more precisely, 
have a uniform weights. However, it is quite plausible to weight the approximations, 
i.e. approximate $I_d[f] = \int_{[0,1]^d} f(\x)d\x$
using 
\begin{equation}
I_{S,\Xi}[f]=\sum^n_{i=1}\xi_if(\w_i)~,\label{eq:weighted_I}
\end{equation}
where $\Xi = \{\vv{\xi}_1,\dots,\vv{\xi}_s\}\subset \reals$ is a set of weights. This lead to the feature map
$$\hat{\Psi}_S(\vv{x}) = \left[ \sqrt{\xi_1} e^{-i \vv{x}^T \vv{w}_1}\ldots \sqrt{\xi_s} e^{-i \vv{x}^T \vv{w}_s} \right]~.$$
This construction requires $\xi_i \geq 0$ for $i=1,\dots,s$, although we note that~\eqref{eq:weighted_I} itself
does not preclude negative weights. We do not require the weights to be normalized, that is it is possible 
that $\sum^s_{i=1}\xi_i \neq 1$.

One can easily generalize the result of the previous section to derive the following discrepancy measure that takes
into consideration the weights
\begin{eqnarray*}
D^{\Box \b}_{p}(S,\Xi)^2 &=& \pi^{-d} \prod_{j=1}^d \int_{-b_j}^{b_j} \lvert
\varphi_j(\beta) \rvert^2 d\beta - \nonumber \\
& & 2(2\pi)^{-d} \sum_{l=1}^s
\xi_l \prod_{j=1}^d \int_{-b_j}^{b_j}\varphi_j(\beta)
e^{i w_{lj}\beta}d\beta + \nonumber\\
& & \sum_{l=1}^s
\sum_{j=1}^s \xi_l\xi_j\sinc_\b(\w_l,\w_j)~.
\end{eqnarray*}
Using this discrepancy measure, global adaptive and greedy adaptive sequences of points and weights can be found.

However, we note that if we fix the points, then optimizing just the weights is a simple convex optimization problem.
The box discrepancy can be written as 
$$
D^{\Box \b}_{p}(S,\Xi)^2 = \pi^{-d} \prod_{j=1}^d \int_{-b_j}^{b_j} \lvert
\varphi_j(\beta) \rvert^2 d\beta - 2 \vv{v}^T \vv{\xi} + \xi^T \vv{H} \vv{\xi}\,,
$$
where $\vv{\xi}\in\reals^s$ has entry $i$ equal to $\xi_i$, $\vv{v}\in\reals^s$ 
and $\vv{H}\in\reals^{s \times s}$ are defined by 
$$\vv{H}_{ij}=\sinc_\b(\w_l,\w_j)$$
$$\vv{v}_i=(2\pi)^{-d}\prod_{j=1}^d \int_{-b_j}^{b_j}\varphi_j(\beta) e^{i w_{lj}\beta}d\beta\,.$$
The $\vv{\xi}$ that minimizes $D^{\Box \b}_{p}(S,\Xi)^2$ is equal to $\vv{H}^{-1}\vv{v}$, but there is no guarantee that 
$\xi_i \geq 0$ for all $i$. We need to explicitly impose these conditions.
Thus, the optimal weights can be found by solving the following convex optimization problem
\begin{equation}
\Xi^* = \argmin_{\vv{\xi}\in \reals^s} \xi^T \vv{H} \vv{\xi} - 2 \vv{v}^T \vv{\xi} ~~~~\text{s.t.}~~\vv{\xi} \geq 0~.
\label{eq:weighted}
\end{equation}

Selecting the weights in such a way is closely connected to the so-called {\em Bayesian Monte Carlo} (BMC) method, 
originally suggested by~\citet{BMC}. In BMC, a Bayesian approach is utilized in which the function 
is a assumed to be random with a prior that is a Gaussian Process. Combining with the observations,
a posterior is obtained, which naturally leads to the selection of weights.
\citet{HerdingBayesian} subsequently pointed out the connection between this approach and the
herding algorithm discussed earlier. 

We remark that as long as all the weights are positive, the hypothesis space of functions induced by 
the feature map (that is, $\{ g_\w (\x) = \hat{\Psi}^T_S(\vv{x}) \w,~\w\in\reals^s\}$) will
not change in terms of the set of functions in it. However, the norms will be affected 
(that is, the norm of a function in that set also depends on the weights), 
which in turn affects the regularization. 

\section{Experiments} \label{sec:empirical}

In this section we report experiments with both classical QMC sequences and
adaptive sequences learnt from box discrepancy minimization.

\subsection{Experiments With Classical QMC Sequences}

We examine the behavior of classical low-discrepancy sequences when compared to
random Fourier features (i.e., MC).  We consider four sequences: Halton, Sobol',
Lattice Rules, and Digital Nets. For Halton and Sobol', we use the
implementation available in
MATLAB.\footnote{\href{}{http://www.mathworks.com/help/stats/quasi-random-numbers.html}}
For Lattice Rules and Digital Nets, we use publicly available
implementations.\footnote{\href{}{http://people.cs.kuleuven.be/~dirk.nuyens/qmc-generators/}}
For all four low-discrepancy sequences,  we use scrambling and shifting techniques
recommended in the  QMC literature (see \citet{DKS13} for details). For Sobol',
Lattice Rules and Digital Nets, scrambling introduces randomization and hence
variance. For Halton sequence, scrambling is deterministic, and there is no
variance. The generation of these sequences is extremely fast, and quite
negligible when compared to the time for any reasonable downstream use.
For example, for \texttt{census} dataset with size 18,000 by 119,
if we choose the number of random features $s=2000$,
the running time for performing kernel ridge regression model is more than 2 minutes,
while the time of generating the QMC sequences is only around 0.2 seconds (Digital Nets 
sequence takes longer, but not much longer) and that of MC sequence is around 0.01 seconds.
Therefore, we do not report running times as these are essentially the same
across methods.

In all experiments, we work with a Gaussian kernel. For learning, we use regularized least square 
classification on the feature mapped dataset, which can  be thought of as a form of 
approximate kernel ridge regression. For each dataset, we performed 5-fold cross-validation 
when using random Fourier features (MC sequence) to set the bandwidth $\sigma$,
and then used the same $\sigma$ for all other sequences. 

\paragraph{Quality of Kernel Approximation\\}
In our setting, the most natural and
fundamental metric for comparison is the quality of approximation of the Gram matrix. We examine how close $\tilde{\vv{K}}$ (defined by
$\tilde{\vv{K}}_{ij}=\tilde{k}(\x_i,\x_j)$ where $\tilde{k}(\cdot, \cdot)=\langle \hat{\Psi}_S(\cdot),
\hat{\Psi}_S(\cdot) \rangle$ is the kernel approximation) is to the Gram matrix
$\vv{K}$ of the exact kernel.

We examine four datasets: \texttt{cpu} (6554 examples, 21 dimensions), \texttt{census}
(a subset chosen randomly with 5,000 examples, 119 dimensions),
\texttt{USPST} (1,506 examples, 250 dimensions after PCA)
and \texttt{MNIST} (a subset chosen randomly with 5,000 examples, 250 dimensions after PCA).
The reason we do subsampling on large datasets is to be able to compute the full
exact  Gram matrix for comparison purposes. The reason we use dimensionality
reduction on \texttt{MNIST} is that the maximum dimension supported by the Lattice Rules implementation we use is 250.

To measure the quality of approximation we
use both $\|\vv{K} - \tilde{\vv{K}}\|_2 / \|\vv{K}\|_2$ and $\|\vv{K} - \tilde{\vv{K}}\|_F / \|\vv{K}\|_F$.  The plots are shown in Figure~\ref{fig:qmc_gram}.
\begin{figure*}[t]
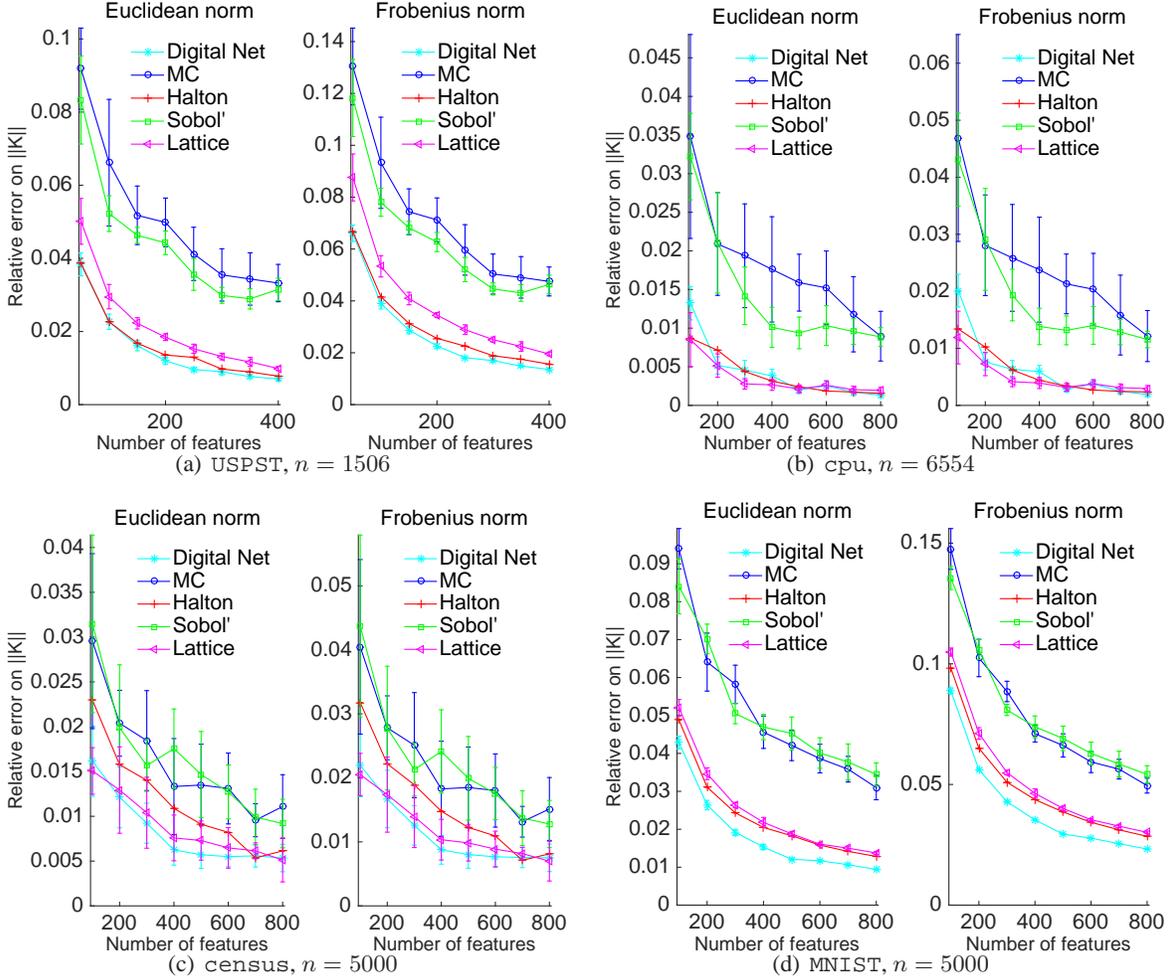

\centering
\begin{tabular}{cc}
 \subfigure[\texttt{USPST}, $n = 1506$]{
 \includegraphics[width=0.45\textwidth]{figures/qmc_gram_fro_USPST-pca_for_revision}} &
 \subfigure[\texttt{cpu}, $n = 6554$]{
 \includegraphics[width=0.45\textwidth]{figures/qmc_gram_fro_cpu_for_revision}} \\
 \subfigure[\texttt{census}, $n = 5000$]{
 \includegraphics[width=0.45\textwidth]{figures/qmc_gram_fro_census_for_revision}} &
 \subfigure[\texttt{MNIST}, $n = 5000$]{
 \includegraphics[width=0.45\textwidth]{figures/qmc_gram_fro_mnist-pca_for_revision}}
\end{tabular}
\caption{Relative error on approximating the Gram matrix measured in Euclidean norm and Frobenius norm,
     i.e., $\|\vv{K} - \tilde{\vv{K}}\|_2 / \|\vv{K}\|_2$ and $\|\vv{K} - \tilde{\vv{K}}\|_F / \|\vv{K}\|_F$, for various $s$.
     For each kind of random feature and $s$, 10 independent trials are executed,
     and the mean and standard deviation are plotted.
}
\label{fig:qmc_gram}
\end{figure*}

{\em We can clearly see that except Sobol' sequences classical low-discrepancy sequences consistently
produce better approximations to the Gram matrix than the approximations produced using MC sequences.} Among the four classical
QMC sequences, the Digital Nets, Lattice Rules and Halton sequences yield much lower error. Similar results were observed for other datasets (not reported here).
Although using scrambled variants of QMC sequences may incur some variance, the variance is quite small compared to that of the MC random features.

Scrambled (whether deterministic or randomized) QMC sequences tend to yield higher accuracies than non-scambled QMC sequences. In Figure~\ref{fig:random_qmc}, we show the ratio between the relative errors achieved by using both scrambled and non-scrambled QMC sequences. As can be seen, scrambled QMC sequences provide more accurate approximations in most cases as the ratio value tends to be less than one. In particular, scrambled Lattice sequence outperforms the non-scrambled one across all the cases for larger values of $s$. Therefore, in the rest of the experiments we use scrambled sequences.

\begin{figure}
\noindent \begin{centering}
\begin{tabular}{cccc}
\subfigure[\texttt{USPST}]{
 \includegraphics[width=0.23\textwidth]{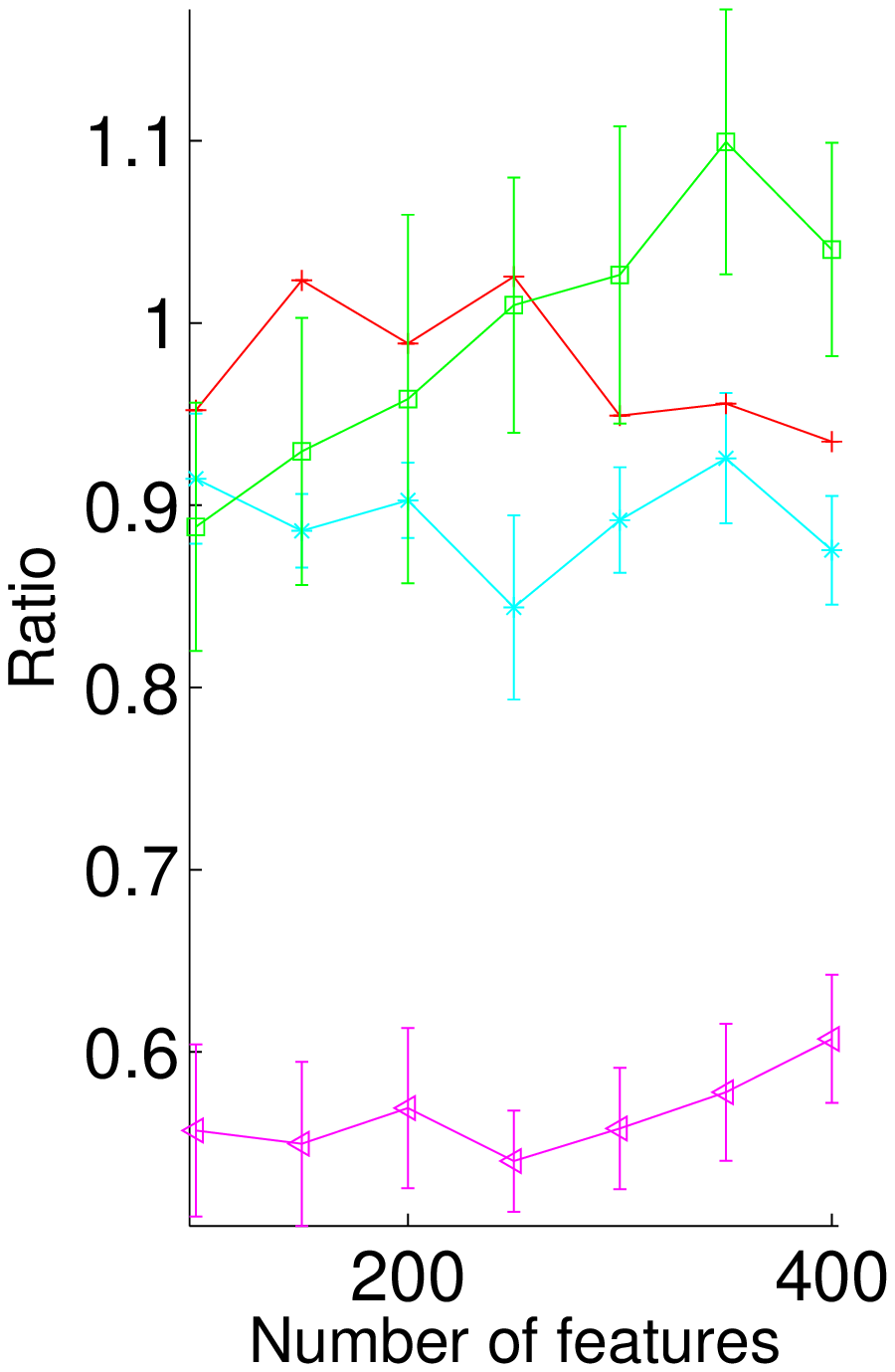}} &
\subfigure[\texttt{CPU}]{
 \includegraphics[width=0.23\textwidth]{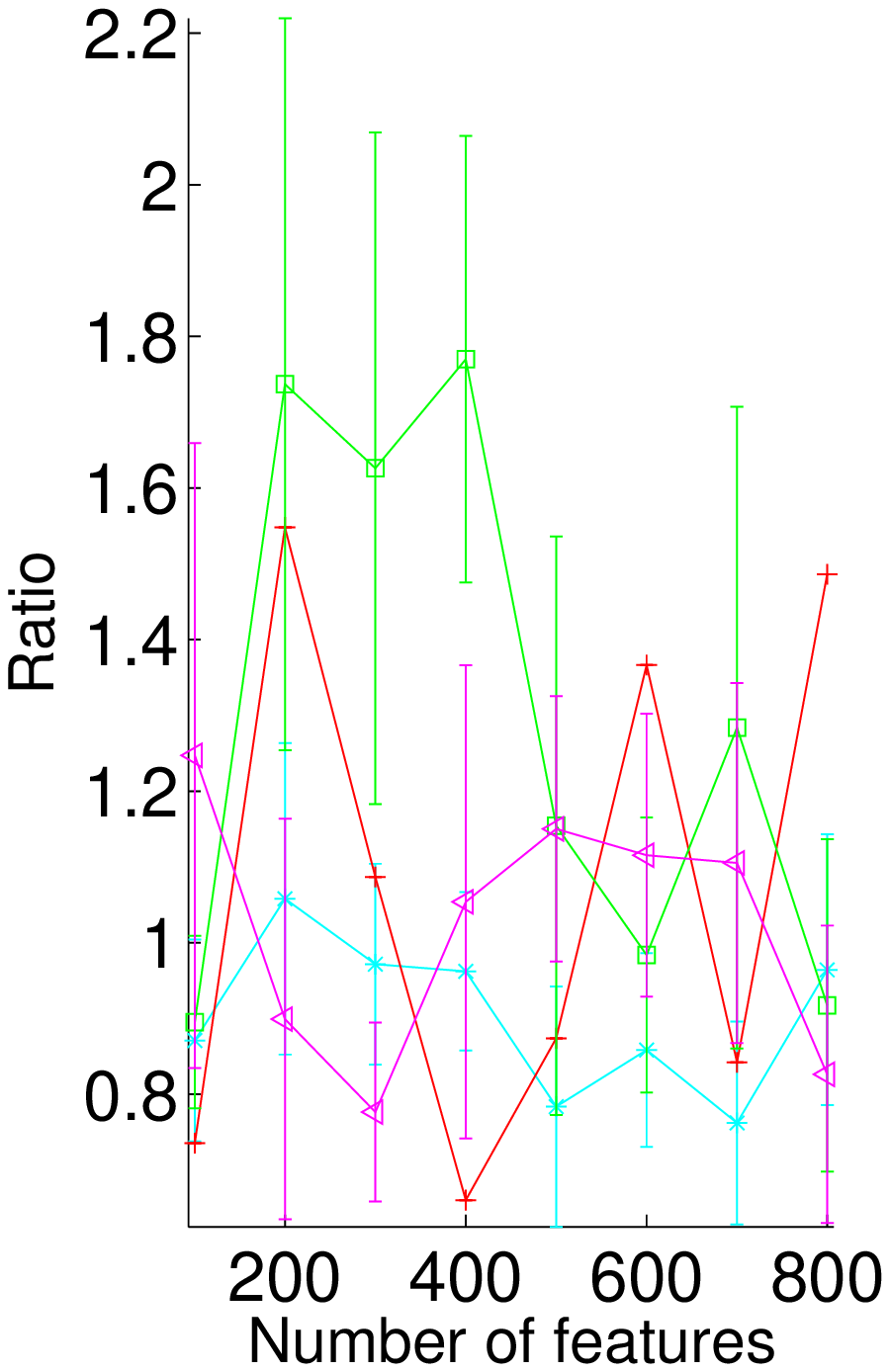}}
\subfigure[\texttt{CENSUS}]{
 \includegraphics[width=0.23\textwidth]{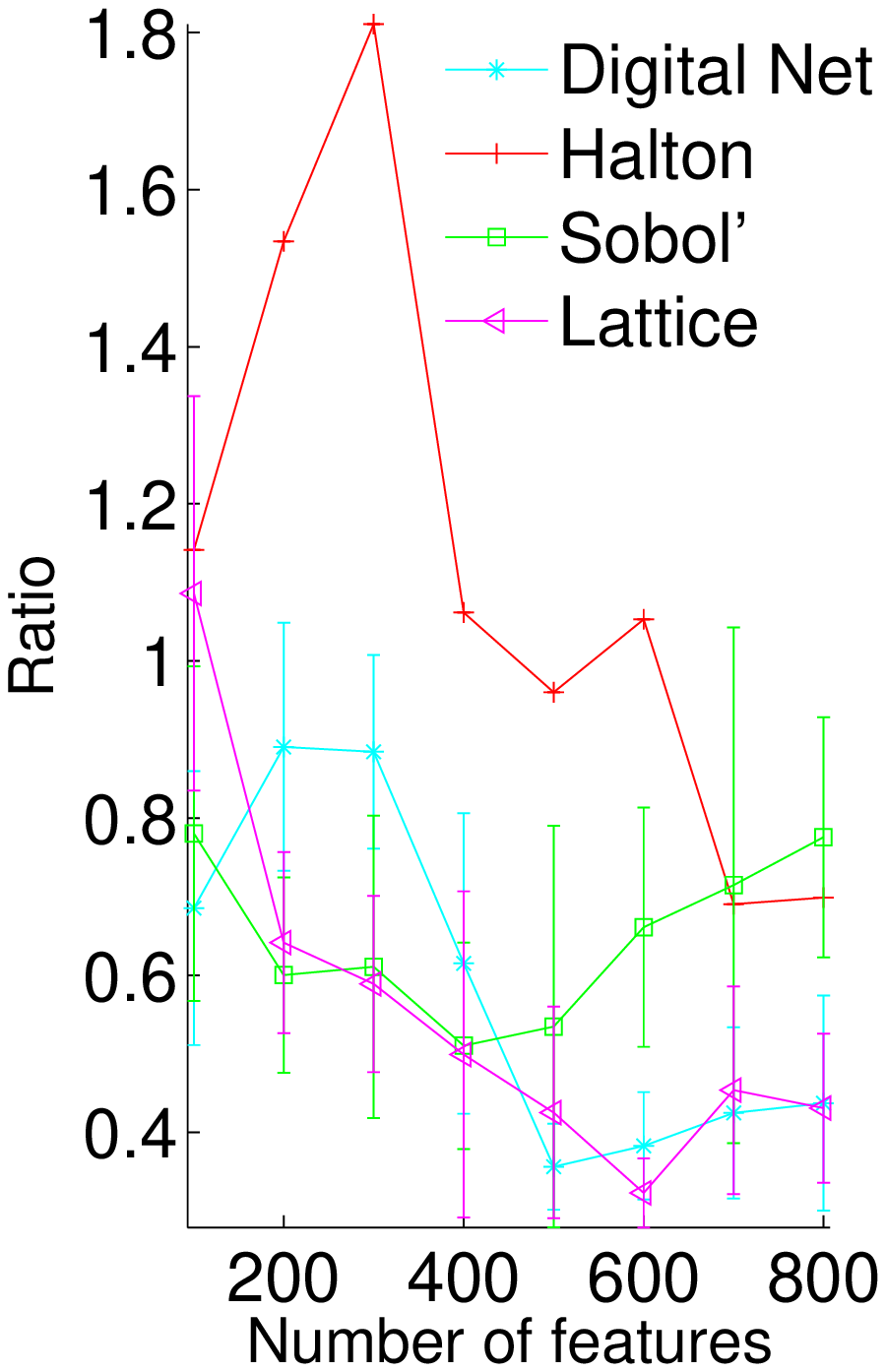}} &
\subfigure[\texttt{MNIST}]{
 \includegraphics[width=0.23\textwidth]{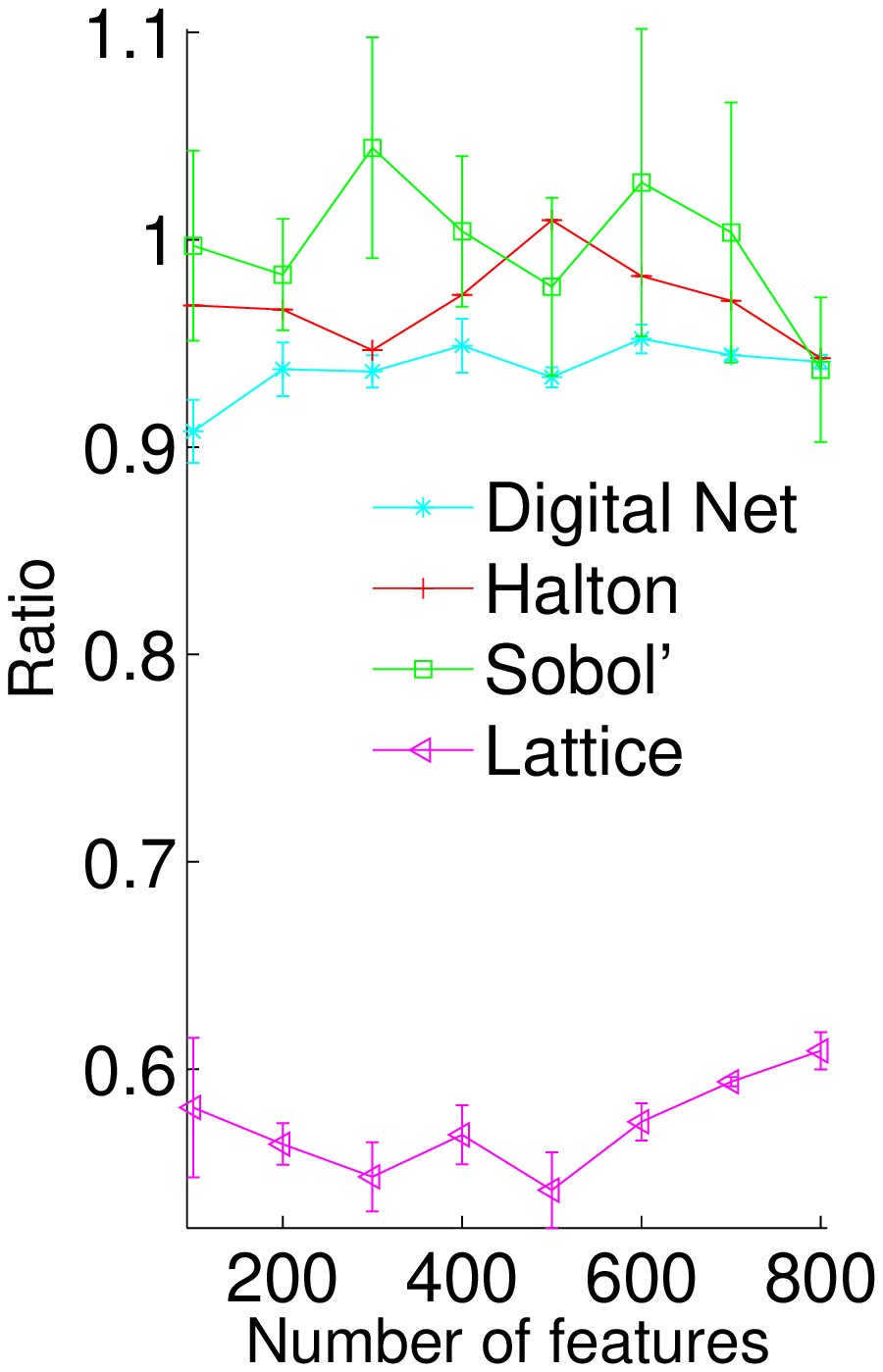}}
\end{tabular}
\par\end{centering}
\caption{Ratio between relative errors on approximating the Gram matrix using both the scrambled and non-scambled version of the same QMC sequence for various $s$. The lower the ratio value is, the more accurate the scrambled QMC approximation is. For each kind of QMC sequences and $s$, $10$ independent trails are executed, and the mean and standard deviation are plotted.}
\label{fig:random_qmc}
\end{figure}

\paragraph{Does better Gram matrix approximation translate to lower generalization
errors?\\}
We consider two regression datasets, \texttt{cpu} and \texttt{census}, and  use (approximate) kernel
ridge regression to build a regression model.
The ridge parameter is set by the optimal value we obtain via 5-fold cross-validation on the training set by using the MC sequence. Table~\ref{table:downstream} summarizes the results.

\begin{table}[h!tpb]
\begin{center}
\begin{sc}
\begin{tabular}{c|c|cccc|c}
    & $s$ & Halton & Sobol' & Lattice & Digit & MC \\
    \hline
 \multirow{6}{*}{\rotatebox{90}{cpu}} &  \multirow{2}{*} {100}  & {\bf 0.0367} &
 0.0383 & 0.0374  & 0.0376 & 0.0383 \\
  & & (0) & (0.0015) & (0.0010)  & (0.0010)  & (0.0013)  \\
 & \multirow{2}{*} {500} & {\bf 0.0339} & 0.0344 & 0.0348  & 0.0343 & 0.0349 \\
 & &  (0) & (0.0005) & (0.0007)  & (0.0005)  & (0.0009)  \\
 & \multirow{2}{*} {1000} & {\bf 0.0334} & 0.0339 & 0.0337  & 0.0335 & 0.0338 \\
 & &  (0) & (0.0007) & (0.0004)  & (0.0003)  & (0.0005)  \\
 \hline
 \multirow{6}{*}{\rotatebox{90}{census}} &  \multirow{2}{*} {400} & {\bf 0.0529}
 & 0.0747 & 0.0801  & 0.0755 & 0.0791 \\
 & & (0) & (0.0138) & (0.0206)  & (0.0080)  & (0.0180)  \\
  &  \multirow{2}{*} {1200} & {\bf 0.0553} & 0.0588 & 0.0694  & 0.0587 & 0.0670
  \\
 & & (0) & (0.0080) & (0.0188)  & (0.0067)  & (0.0078)  \\
 &  \multirow{2}{*} {1800} & {\bf 0.0498} & 0.0613 & 0.0608  & 0.0583 & 0.0600
 \\
 & & (0) & (0.0084) & (0.0129)  & (0.0100)  & (0.0113)  \\
\end{tabular}
\end{sc}
\end{center}
\caption{Regression error, i.e., $\|\hat{\y} - \y\|_2 / \|\y\|_2$ where $\hat{\y}$ is the predicted value
and $\y$ is the ground truth.  For each kind of random feature and $s$, 10 independent trials are executed,
and the mean and standard deviation are listed.}
 \label{table:downstream}
\end{table}

As we see, for \texttt{cpu}, all the sequences behave similarly, with
the Halton sequence yielding the lowest test error.
For \texttt{census}, the advantage of using Halton sequence is significant
(almost 20\% reduction in generalization error) followed by Digital Nets and Sobol'.
In addition, MC sequence tends to generate higher variance across all the sampling size.
Overall, QMC sequences, especially Halton, outperform MC sequences on these datasets.


When performed on classification datasets by using the same learning model, with a moderate range of $s$, e.g., less than
$2000$, the QMC sequences do not yield accuracy improvements over the MC sequence with the same consistency as in the
regression case. The connection between kernel approximation and the performance in downstream applications is outside the
scope of the current paper. Worth mentioning in this regard, is the recent work by \citet{Bach_sharp}, which analyses the
connection between Nystr\"{o}m approximations of the Gram matrix, and the regression error, and the work of~\citet{EM14}
on kernel methods with statistical guarantees. 

\paragraph{Behavior of Box Discrepancy\\}
Next, we examine if $\Dbox$ is predictive
of the quality of approximation. We compute the normalized square box discrepancy values
(i.e., $\pi^{d}(\prod^d_{j=1} b_j)^{-1} \Dbox(S)^2$) as well as Gram matrix approximation error for the
different sequences with different sample sizes $s$. The expected normalized square box discrepancy 
values for MC are computed using~\eqref{eq:expec_mc_disc}.

Our experiments revealed that using the full $\Box \b$ does not yield box discrepancy values
that are very useful. Either the values were not predictive of the kernel approximation, or
they tended to stay constant. Recall, that while the
bounding box $\Box\b$ is set based on observed ranges of feature values in the
dataset, the actual distribution of points $\bar{\X}$ encountered inside that
box might be far from uniform. This lead us to consider the discrepancy measure 
when measured on the central part of the bounding box (i.e., $\Box\b/2$ instead
of $\Box\b$), which is equal to the integration error averaged over that part of
the bounding box. Presumably, points from $\bar{\X}$ concentrate in that
region, and they may be more relevant for downstream predictive task.

The results are shown in Figure~\ref{fig:qmc-bound}. 
In the top graphs we can see, as expected, increasing number of features in the sequence leads to a 
lower box discrepancy value.
In the bottom graphs, which compare $\|\vv{K} - \tilde{\vv{K}}\|_F / \|\vv{K}\|_F$
to $D^{\Box \b/2}$, we can see a strong correlation between the quality of
approximation and the discrepancy value.

\begin{figure}
\noindent \begin{centering}
\begin{tabular}{cc}
\subfigure[\texttt{cpu}, $D^{\Box \b/2}$ versus $s$]{
\includegraphics[width=0.45\textwidth]{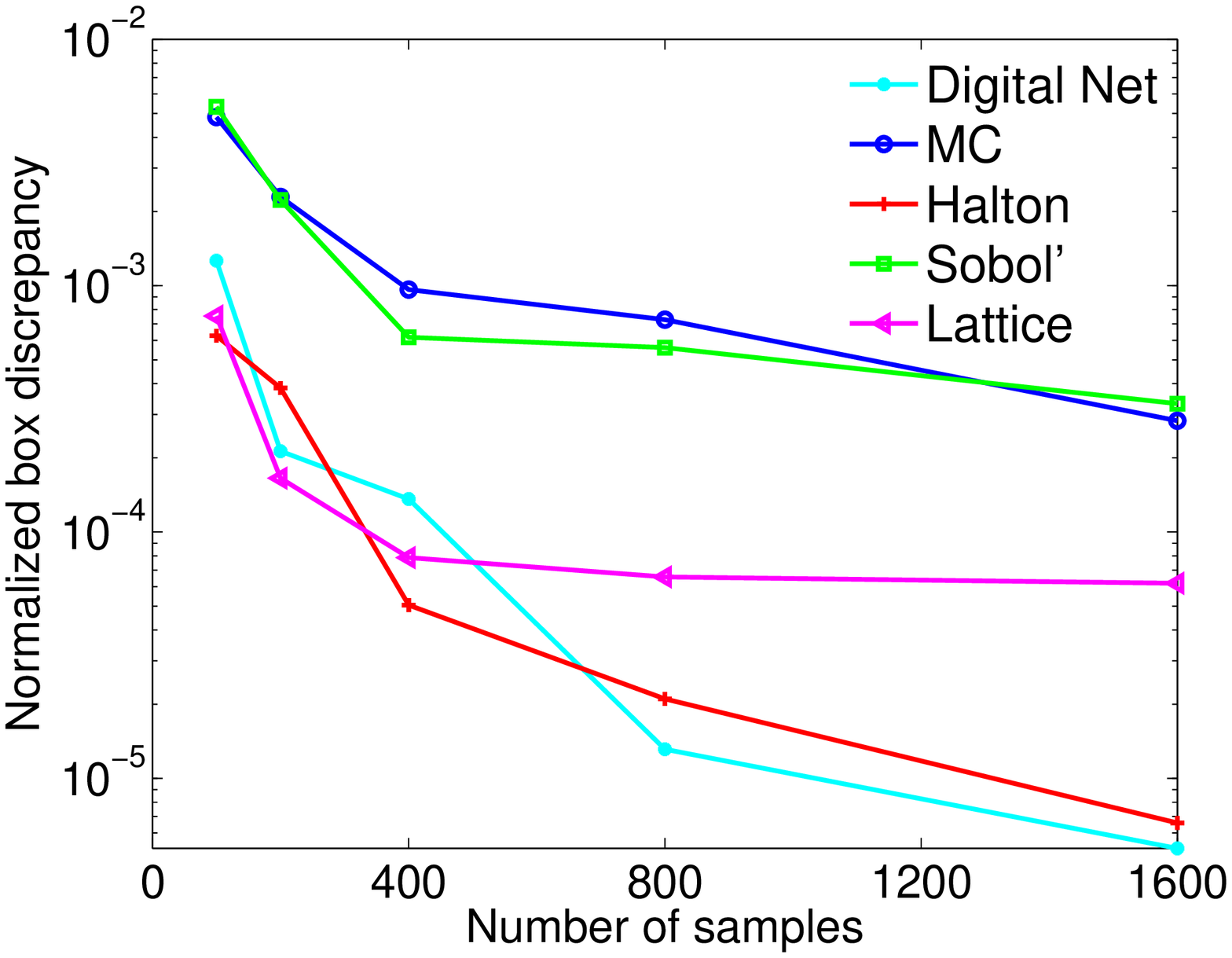}} &
\subfigure[\texttt{census}, $D^{\Box \b/2}$ versus $s$]{
\includegraphics[width=0.45\textwidth]{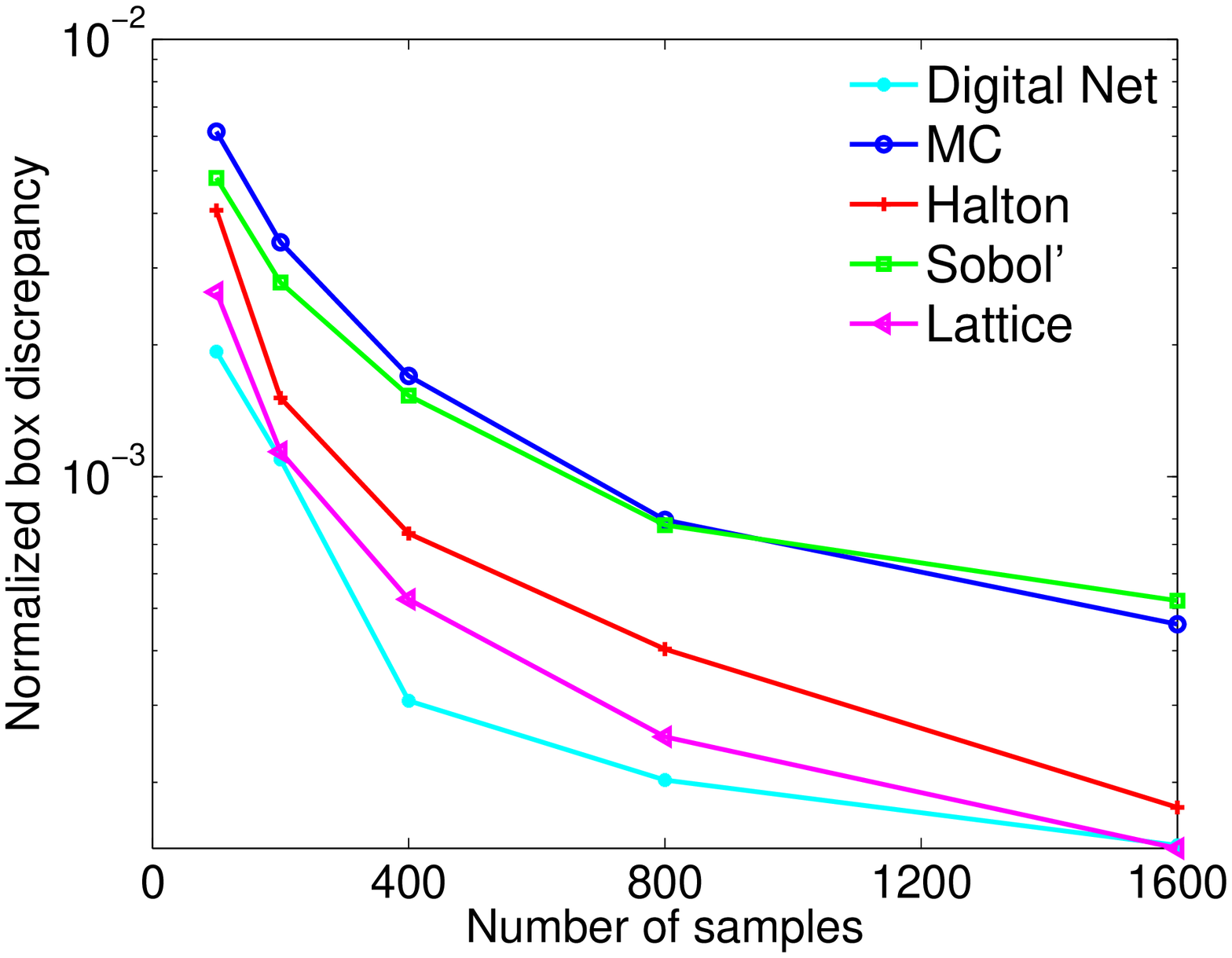} } \\
\subfigure[\texttt{cpu}, relative error versus $D^{\Box \b/2}$]{
\includegraphics[width=0.45\textwidth]{figures/qmc_disc_gram_fro_cpu_for_revision} } &
\subfigure[\texttt{census}, relative error versus $D^{\Box \b/2}$]{
\includegraphics[width=0.45\textwidth]{figures/qmc_disc_gram_fro_census_for_revision} }
\end{tabular}
\par\end{centering}
\caption{Discrepancy values ($D^{\Box \b/2}$) for the different sequences on \texttt{cpu} and \texttt{census}. We measure the discrepancy on the central part of the bounding box (we use $\Box\b/2$ instead of
$\Box\b$ as the domain in the box discrepancy).}
\label{fig:qmc-bound}
\end{figure}

\subsection{Experiments With Adaptive QMC}\label{sec:empirical_adaptive_qmc}

The goal of this subsection is to provide a proof-of-concept for learning
 adaptive QMC sequences, using the three schemes
described in Section~\ref{sec:qmc_adaptive}. We demonstrate that QMC sequences can be improved to produce better
approximation to the Gram matrix, and that can sometimes lead to improved
generalization error.

 
Note that the running time of learning the adaptive sequences is less relevant in our experimental setting for the following
reasons. Given the values of $s$, $d$, $\b$ and $\sigma$ the optimization of a sequence needs only to be done once. There is
some flexibility in these parameters: $d$ can be adjusted by adding zero features or by doing PCA on the input; one can use
longer or shorter sequences; and the data can be forced to a fit a particular bounding box using (possibly non-equal) scaling
of the features (this, in turn, affects the choice of the $\sigma$) . Since designing adaptive QMC sequences is
data-independent with applicability to a variety of downstream applications of kernel methods, it is quite conceivable to
generate many point sets in advance and to use them for many learning tasks. Furthermore, the total size of the sequences
($s \times d$) is independent of the number of examples $n$, which is the dominant term in large scale learning settings.

We name the three sequences as {\it Global Adaptive}, {\it Greedy Adaptive} and {\it Weighted} respectively.
For {\it Global Adaptive},
the Halton sequence is used as the initial setting of the optimization variables $S$.
For {\it Greedy Adaptive}, when optimizing for $\w_t$,
the $t$-th point in the Halton sequence is used as the initial point.
In both cases, we use non-linear conjugate gradient
to perform numerical optimization.
For {\it Weighted}, the initial features are generated using the Halton sequence and we optimize for the weights.
We used \texttt{CVX}~\citep{cvx,gb08} to compute the sequence (solve~\eqref{eq:weighted}).

\paragraph{Integral Approximation\\} We begin by examining the integration error over the unit square by using three
different sequences, namely, MC, Halton and global adaptive QMC sequences. The integral is of the form $\int_{[0,1]^2} e^{-i \u^T
\t} d\t$ where $\u$ spans the unit square. The error is plotted in Figure~\ref{fig:int_err}. We see that MC sequences
concentrate most of the error reduction near the origin. The Halton sequence gives significant improvement expanding the
region of low integration error. Global adaptive QMC sequences give another order of magnitude improvement in integration error
which is now diffused over the entire unit square; the estimation of such sequences is ``aware" of the full integration
region. In fact, by controlling the box size (see plot labeled b/2), adaptive sequences can be made to focus in a specified
sub-box which can help with generalization if the actual data distribution is better represented by this sub-box.
\begin{figure*}[t] \centering
 \includegraphics[width=0.75\textwidth]{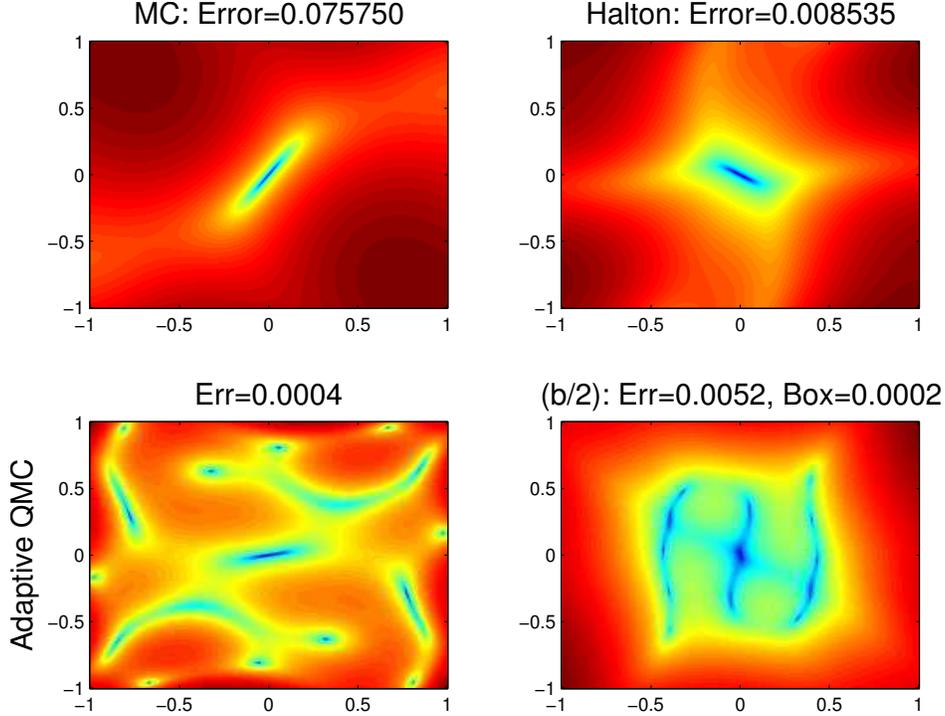}
\caption{Integration error.
}
\label{fig:int_err}
\end{figure*}

\paragraph{Quality of Kernel Approximation\\}

\begin{figure*}[t]
\noindent \begin{centering}
\begin{tabular}{ccc}
\includegraphics[width=0.3\textwidth]{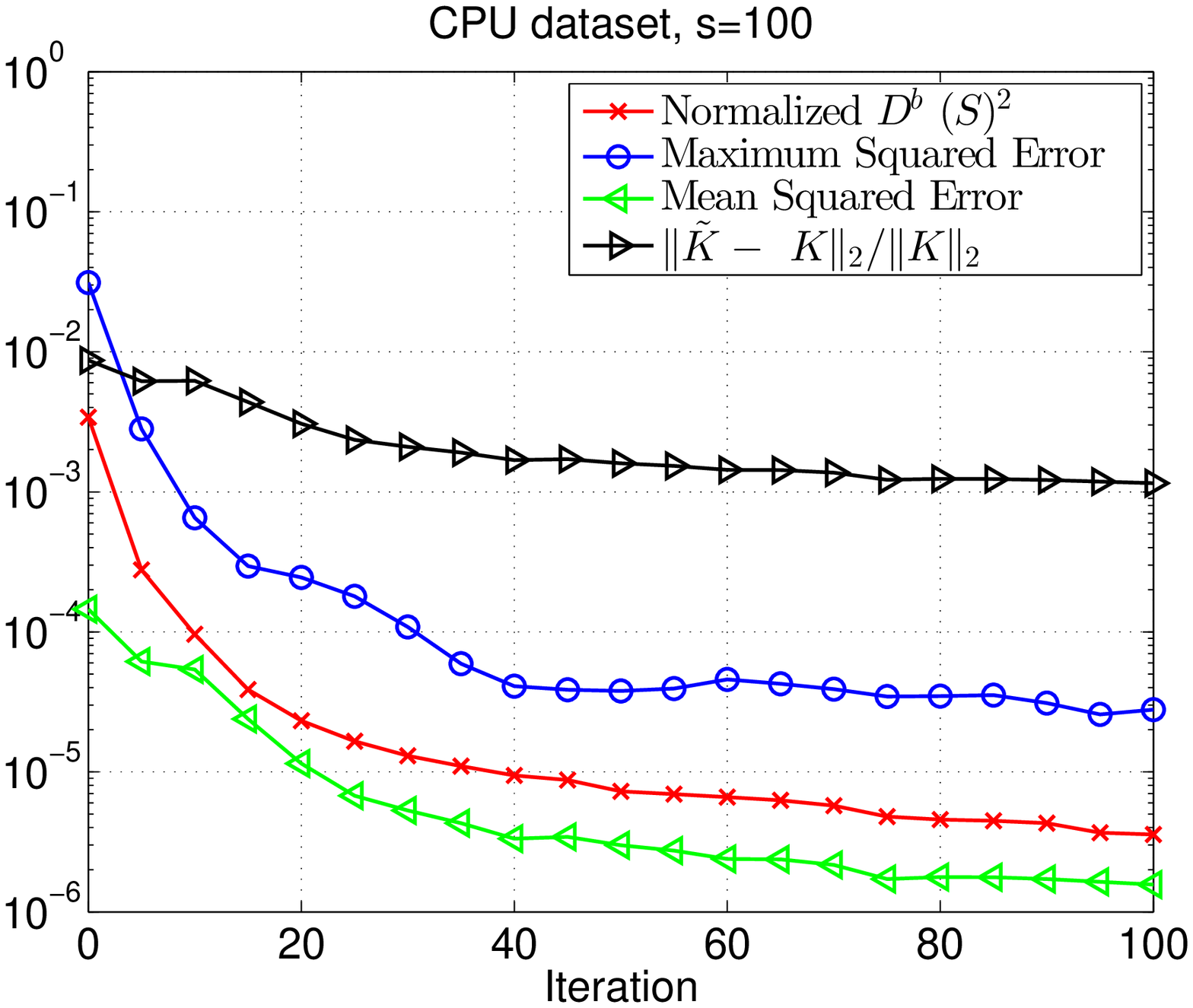} &
\includegraphics[width=0.3\textwidth]{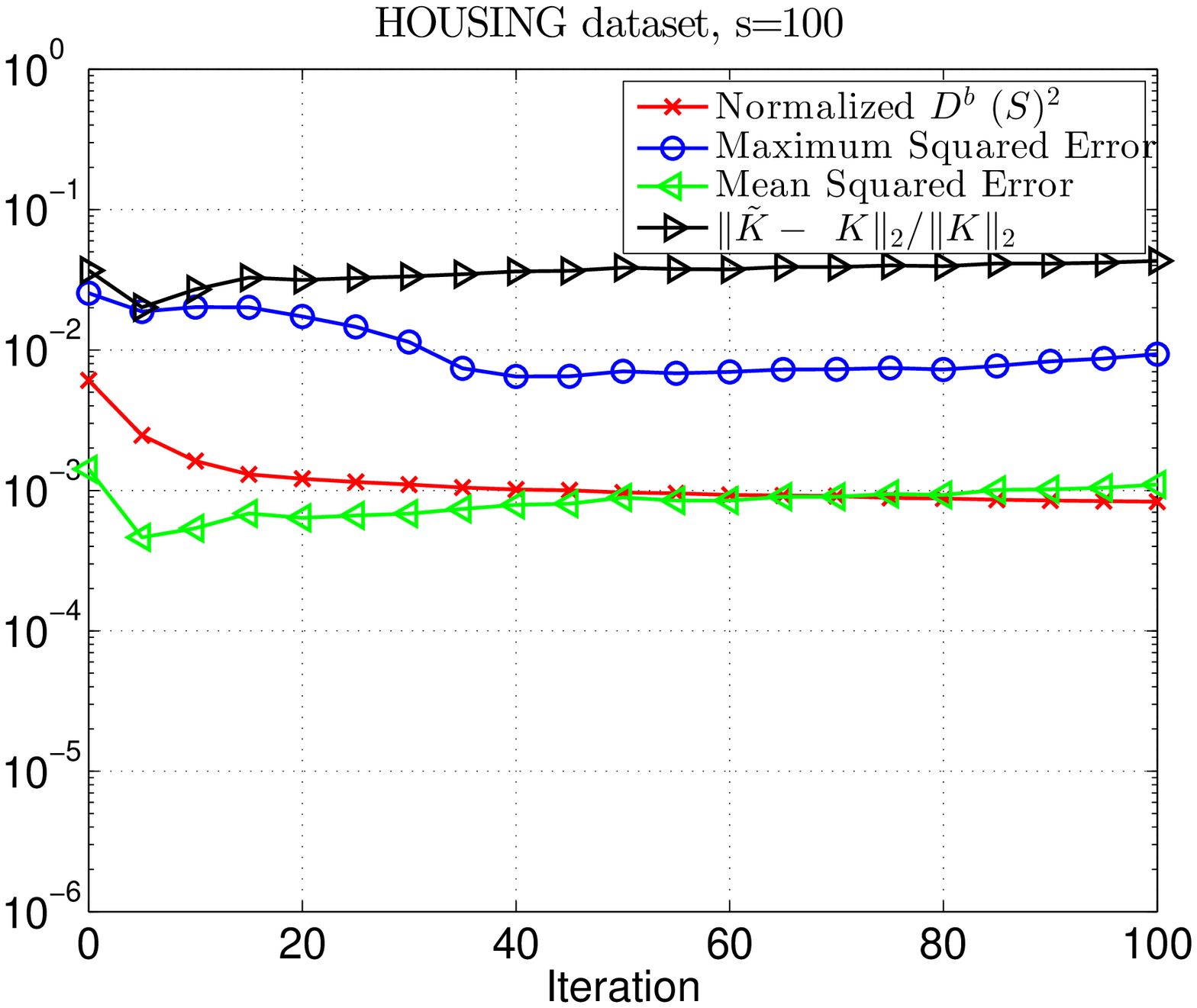}&
\includegraphics[width=0.3\textwidth]{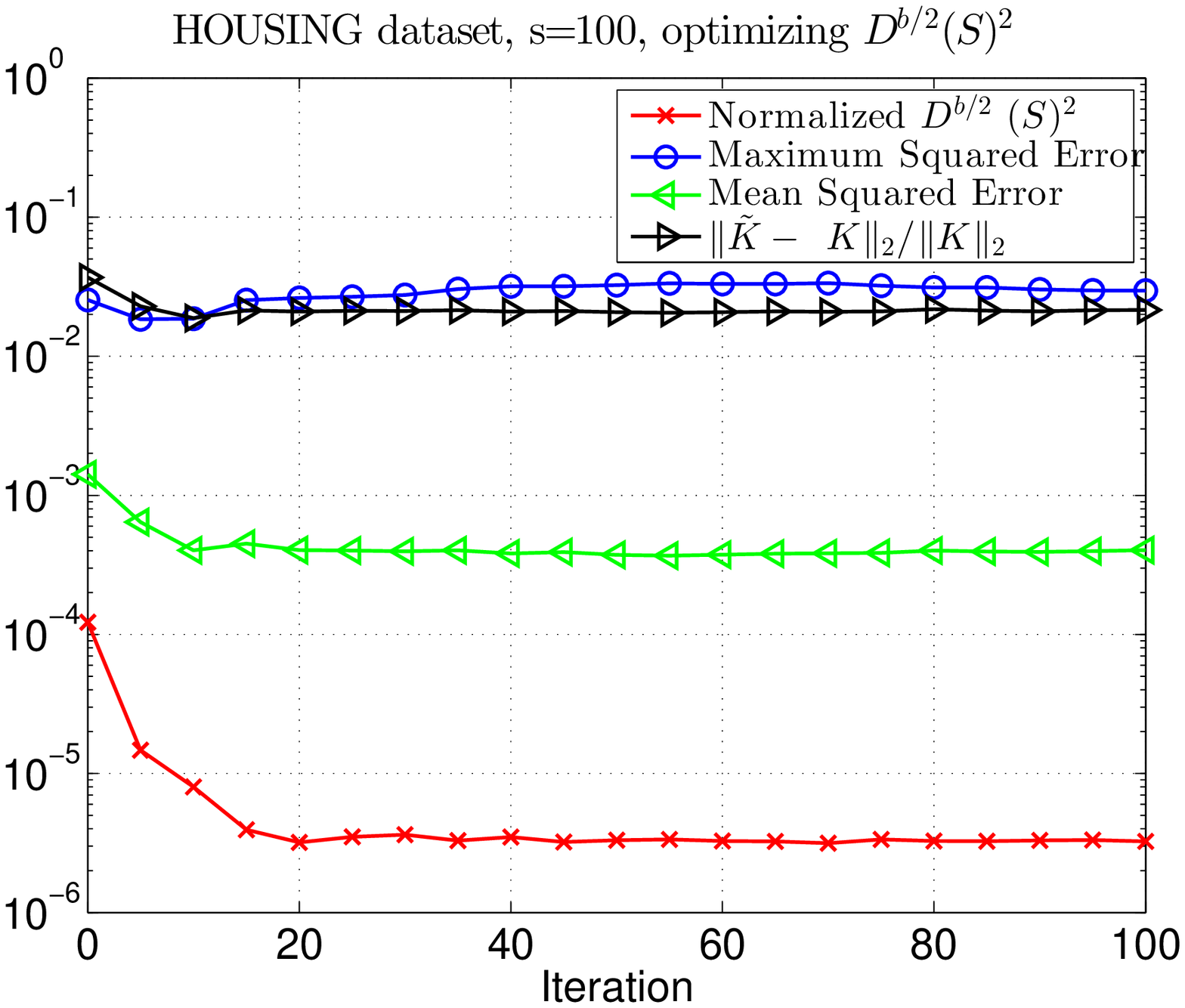}
\tabularnewline \textbf{(a)} & \textbf{(b)} & \textbf{(c)} \tabularnewline
\end{tabular}
\par\end{centering}
\caption{\label{fig:global_adaptive} Examining the behavior of learning {\it Global Adaptive} sequences.
Various metrics on the Gram matrix approximation are plotted.
}
\end{figure*}

\begin{figure*}[t]
\noindent \begin{centering}
\begin{tabular}{ccc}
\includegraphics[width=0.3\textwidth]{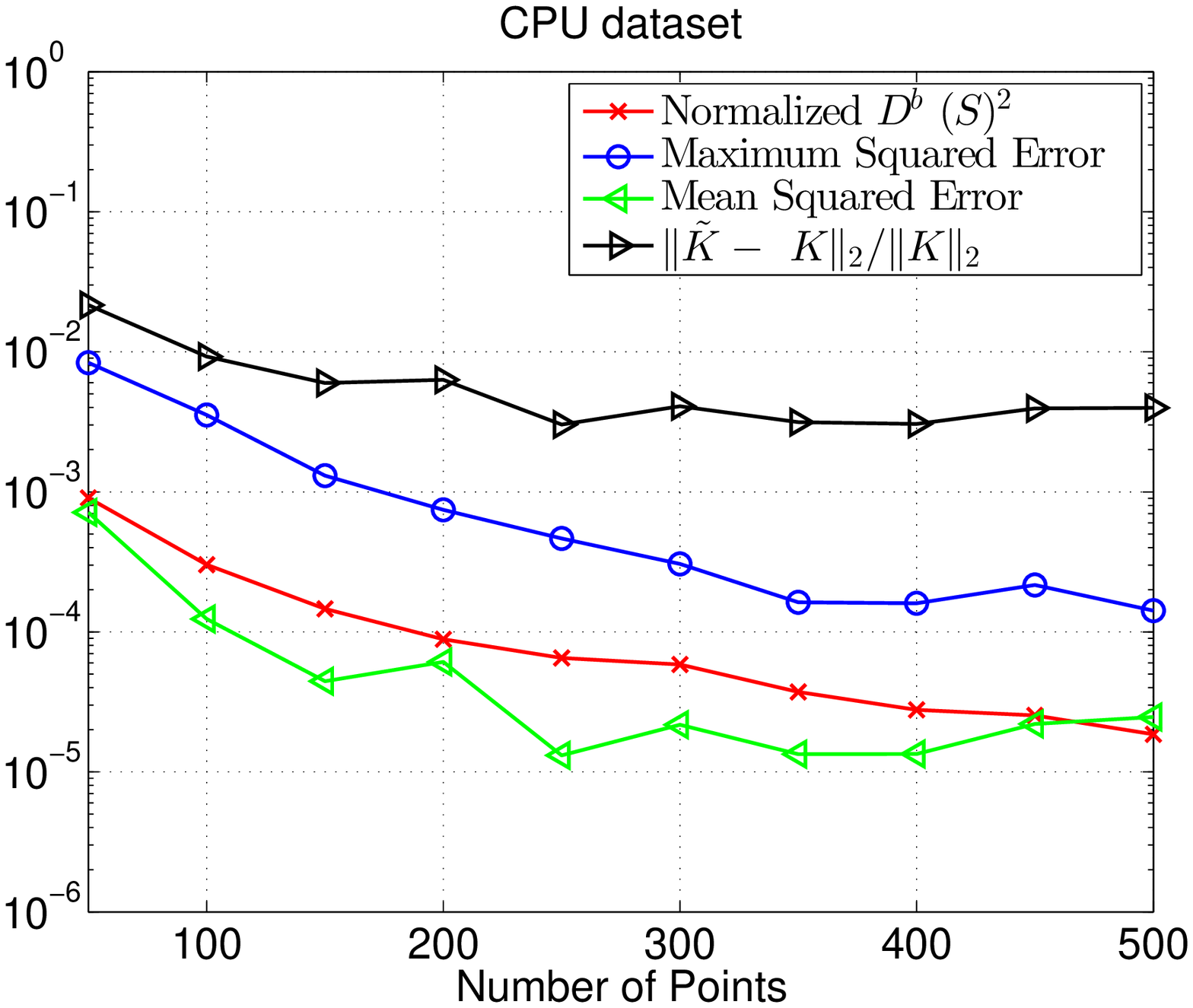} &
\includegraphics[width=0.3\textwidth]{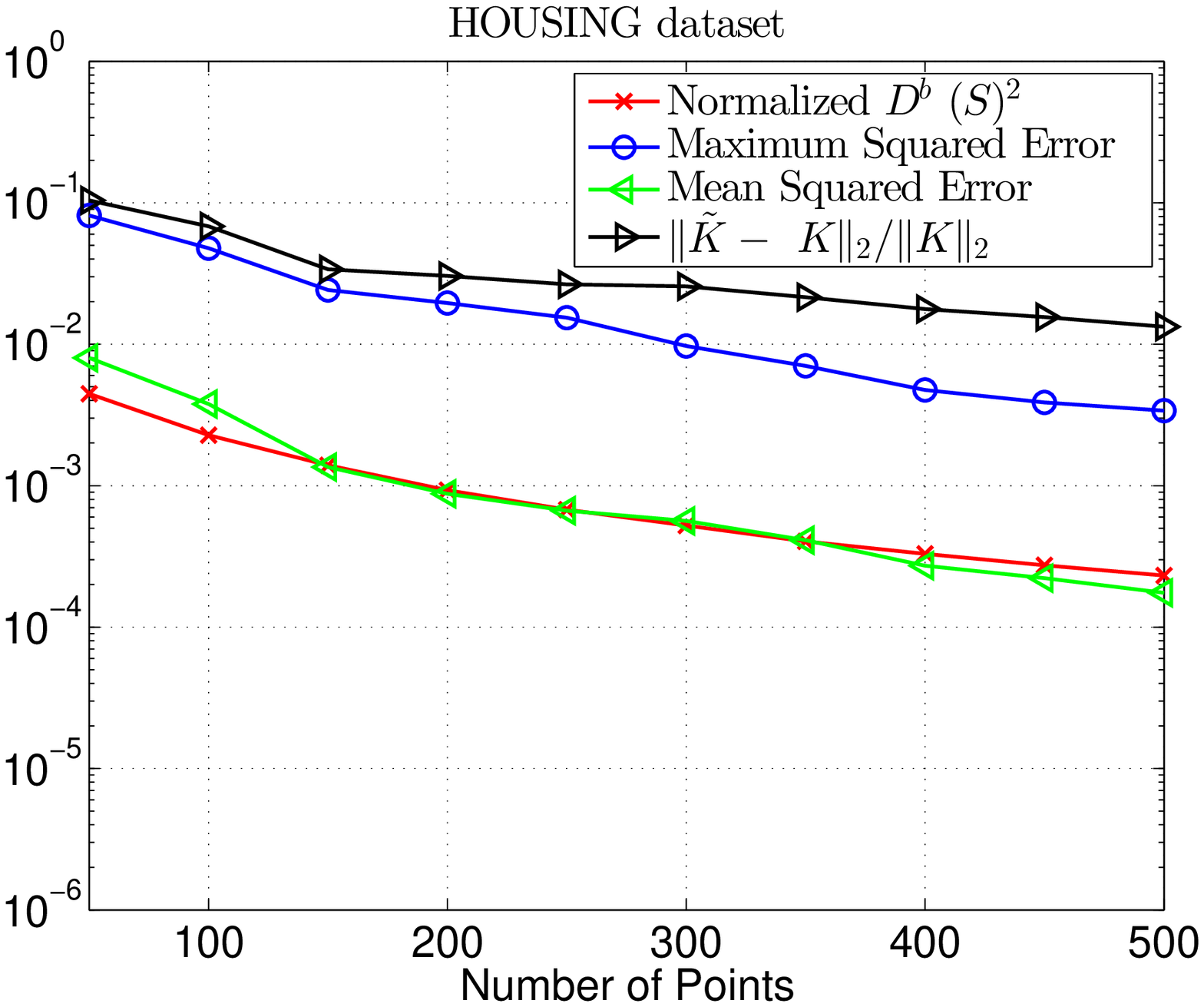}&
\includegraphics[width=0.3\textwidth]{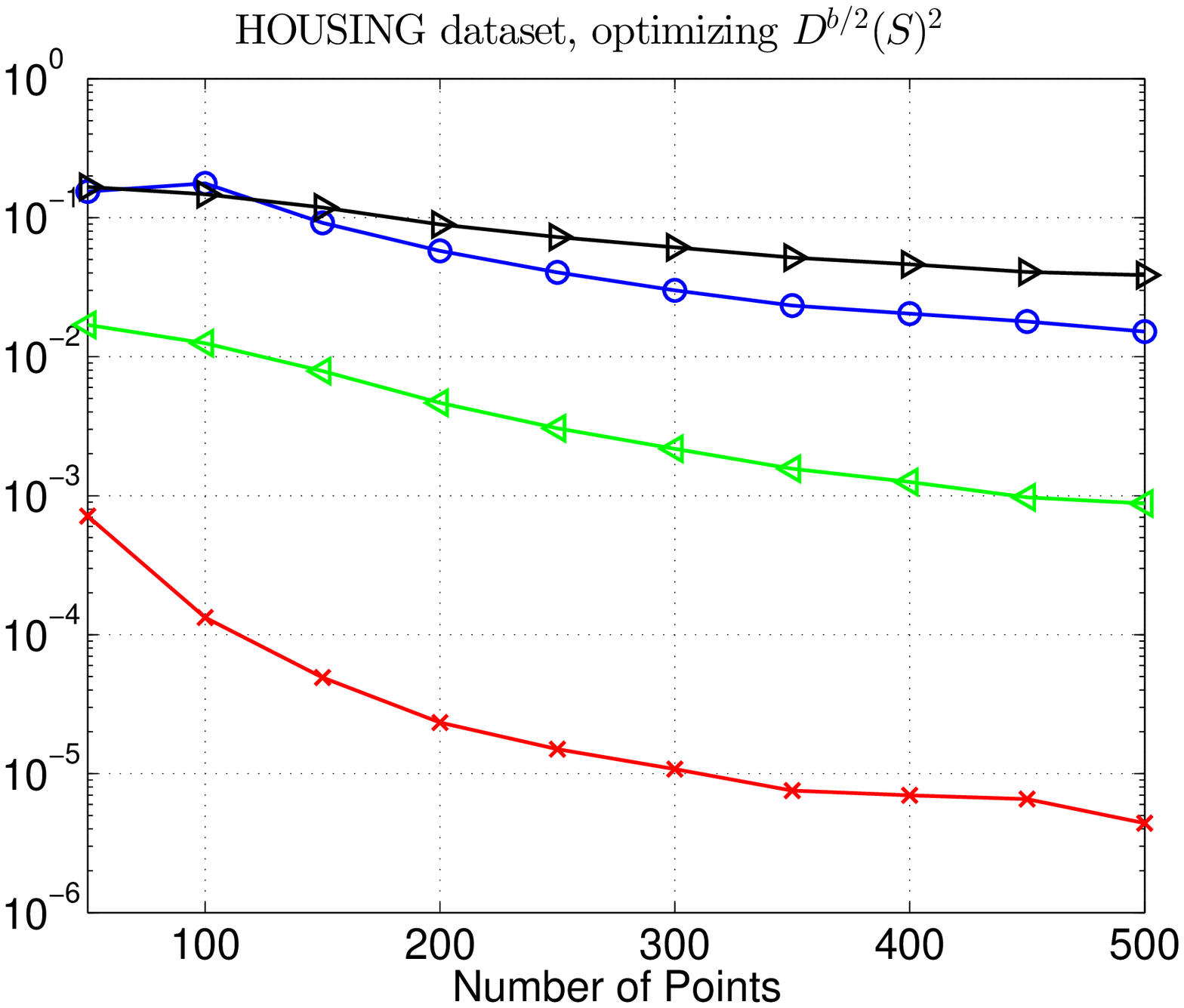}
\tabularnewline \textbf{(a)} & \textbf{(b)} & \textbf{(c)} \tabularnewline
\end{tabular}
\par\end{centering}
\caption{\label{fig:greedy_adaptive} Examining the behavior of learning {\it Greedy Adaptive} sequences.
Various metrics on the Gram matrix approximation are plotted.
}
\end{figure*}

In Figure~\ref{fig:global_adaptive} and Figure~\ref{fig:greedy_adaptive} we examine how various metrics (discrepancy,
maximum squared error, mean squared error, norm of the error) on the Gram matrix
approximation evolve during the optimization process for both adaptive sequences.
Since learning the adaptive sequences on dataset with low dimensional features is more affordable, the experiment is performed on two such datasets, namely, \texttt{cpu} and \texttt{housing}.

For {\em Global Adaptive}, we fixed $s=100$ and examine
how the performance evolves as the number of iterations grows.
 In Figure~\ref{fig:global_adaptive} (a)
we examine the behavior on \texttt{cpu}. We see that all metrics go down as the
iteration progresses. This supports our hypothesis that by optimizing the box
discrepancy we can improve the approximation of the Gram matrix.  Figure~\ref{fig:global_adaptive} (b), which
examines the same metrics on the scaled version of the \texttt{housing} dataset,
has some interesting behaviors.
Initially all metrics go down, but eventually all the metrics except the box-discrepancy
start to go up; the box-discrepancy continues to go down. One plausible
explanation is that the integrands are not uniformly distributed in the bounding
box, and that by optimizing the expectation over the entire box we start to
overfit it, thereby increasing the error in those regions of the box where
integrands actually concentrate.
One possible way to handle this is to optimize closer to
the center of the box (e.g., on $\Box \b/2$), under the assumption that integrands
concentrate there. In Figure~\ref{fig:global_adaptive} (c) we try this on
the \texttt{housing} dataset. We see that now the mean error and the norm error are much
improved, which supports the interpretation above.
 But the maximum error eventually goes up. This is quite reasonable as
the outer parts of the bounding box are harder to approximate, so the maximum
error is likely to originate from there. Subsequently, we stop the adaptive learning
of the QMC sequences early, to avoid the actual error from going up due to
averaging.

For {\em Greedy Adaptive}, we examine its behavior as the number of points increases.
In Figure~\ref{fig:greedy_adaptive} (a) and (b),
as expected, as the number of points in the sequence increases,
the box discrepancy goes down.
This is also translated to non-monotonic decrease in the other metrics of Gram matrix approximation.
However, unlike the global case, we see in Figure~\ref{fig:greedy_adaptive} (c),
when the points are generated by optimizing on a smaller box $\Box \b/2$,
the resulting metrics become higher for a fixed number of points.
Although the {\em Greedy Adaptive} sequence can be computed faster than the adaptive sequence,
potentially it might need a large number of points to achieve certain low magnitude of discrepancy.
Hence, as shown in the plots, when the number of points is below 500,
the quality of the optimization is not good enough to provide a good approximation the Gram matrix.
For example, one can check when the number of points is 100,
the discrepancy value of the {\em Greedy Adaptive} sequence is higher than that of the {\em Global Adaptive} sequence
with more than 10 iterations.

\begin{table}[h!tpb]
\scriptsize
\begin{center}
\begin{sc}
\begin{tabular}{c|c|cccc|cccc}
  & & \multicolumn{4}{c|}{$D^{\Box \b}$} & \multicolumn{4}{c}{$D^{\Box \b/4}$} \\ \hline
  & $s$ & Halton & Global$_{\b}$ & Greedy$_{\b}$ & Weighted$_{\b}$ & Halton & Global$_{\b/4}$ & Greedy$_{\b/4}$ & Weighted$_{\b/4}$ \\ \hline
 \multirow{3}{*}{\rotatebox{90}{{cpu}}} & 100 & 3.41e-3 & 1.29e-6 & 3.02e-4 & 7.84e-5 & 9.44e-5 & 5.57e-8 & 2.62e-5 & 1.67e-8 \\
 & 300 & 8.09e-4 & 5.14e-6 & 5.85e-5 & 1.45e-6 & 2.57e-5 & 1.06e-7 & 3.08e-6 & 2.93e-9 \\
 & 500 & 2.39e-4 & 2.83e-6 & 1.86e-5 & 3.39e-7 & 7.91e-6 & 2.62e-8 & 1.04e-6 & 2.43e-9 \\
 \hline
 \multirow{5}{*}{\rotatebox{90}{{census}}} & 400 & 2.61e-3 & 9.32e-4 & 7.47e-4 & 8.83e-4 & 5.73e-4 & 2.79e-5 & 2.20e-5 &2.45e-5 \\
 & 800 & 1.21e-3 & 5.02e-4 & 3.33e-4 & 4.91e-4 & 2.21e-4 & 1.12e-5 & 8.04e-6 & 5.46e-6 \\
 & 1200 & 8.27e-4 & 3.41e-4 & 2.06e-4 & 3.39e-4 & 1.39e-4 & 8.15e-6 & 4.23e-6 & 2.29e-6 \\
 & 1800 & 5.31e-4 & 2.17e-4 & 1.27e-4 & 2.31e-4 & 3.79e-5 & 5.59e-6 & 2.63e-6 & 8.37e-7 \\
 & 2200 & 4.33e-4 & 1.73e-4 & 1.01e-4 & 1.87e-4 & 2.34e-5 & 3.35e-6 & 1.95e-6 & 4.93e-7 
\end{tabular}
\end{sc}
\end{center}
\caption{
Discrepancy values, measured on the full bounding box and its central part, i.e., $D^{\Box \b}$ and $D^{\Box \b/4}$. }
 \label{table:downstream3}
\end{table}

Table~\ref{table:downstream3} also shows the discrepancy values of various sequences on \texttt{cpu} and \texttt{census}. 
Using adaptive sequences improves the discrepancy  values by orders-of-magnitude. 
We note that a significant reduction in terms of discrepancy values can be achieved using only weights, sometimes yielding discrepancy
values that are better than the hard-to-compute global or greedy sequences.

\paragraph{Generalization Error\\}
We use the three algorithms for learning adaptive sequences as described in the previous subsections,
and use them for doing approximate kernel ridge regression.
The ridge parameter is set by the value which is near-optimal for both sequences in 5-fold cross-validation on the training set.
Table~\ref{table:downstream2} summarizes the results.

\begin{table}[h!tpb]
\scriptsize
\begin{center}
\begin{sc}
\begin{tabular}{c|c|ccccccc}
    & $s$ & Halton & Global$_{\b}$ & Global$_{\b/4}$ & Greedy$_{\b}$ & Greedy$_{\b/4}$ & Weighted$_{\b}$ & Weighted$_{\b/4}$ \\
    \hline
  \multirow{3}{*}{\rotatebox{90}{{cpu}}} &  100  & 0.0304 & 0.0315 & {\bf 0.0296} & 0.0307 & {\bf 0.0296} & 0.0366 & 0.0305  \\
    & 300 & 0.0303 & 0.0278 & 0.0293 & 0.0274 & {\bf 0.0269} & 0.0290 & 0.0302  \\
    & 500 & 0.0348 & 0.0347 & 0.0348 & 0.0328 & {\bf 0.0291} & 0.0342 & 0.0347 \\
   \hline
 \multirow{5}{*}{\rotatebox{90}{{census}}} &  400  & 0.0529 & 0.1034 & 0.0997 & 0.0598 & 0.0655 & {\bf 0.0512} & 0.0926 \\
    &  800  & 0.0545 & 0.0702 & 0.0581 & 0.0522 & 0.0501 & {\bf 0.0476} & 0.0487 \\
    &  1200  & 0.0553 & 0.0639 & {\bf 0.0481} & 0.0525 & 0.0498 & 0.0496 & 0.0501 \\
    & 1800 & 0.0498 & 0.0568 & {\bf 0.0476} & 0.0685 & 0.0548 & 0.0498 & 0.0491 \\
    & 2200 & 0.0519 & {\bf 0.0487} & 0.0515 & 0.0694 & 0.0504 & 0.0529 & 0.0499 
\end{tabular}
\end{sc}
\end{center}
\caption{Regression error, i.e., $\|\hat{\y} - \y\|_2 / \|\y\|_2$ where $\hat{\y}$ is the predicted value
and $\y$ is the ground truth. }
 \label{table:downstream2}
\end{table}

For both \texttt{cpu} and \texttt{census},
at least one of the adaptive sequences sequences can yield lower test error for each sampling size (since the test error is already low, around 3\% or 5\%, such improvement in accuracy is not trivial).
For \texttt{cpu}, greedy approach seems to give slightly better results.
When $s=500$ or even larger (not reported here),
the performance of the sequences are very close.
For \texttt{census},
the weighted sequence yields the lowest generalization error when $s=400,800$.
Afterwards we can see global adaptive sequence outperforms the rest of the sequences, even though it has 
better discrepancy values. In some cases, adaptive sequences sometimes produce errors that
are bigger than the unoptimized sequences.

In most cases, the adaptive sequence on the central part of the bounding box outperforms the adaptive sequence
on the entire box. This is likely due to the non-uniformity phenomena discussed
earlier.


\section{Conclusion and Future Work}
Recent work on applying kernel methods to very large datasets, has shown their ability 
to achieve state-of-the-art accuracies that sometimes match those attained
by Deep Neural Networks (DNN) \citep{KernelTIMIT}. Key to these results is the ability 
to apply kernel method to such datasets. The random features approach, originally
due to~\citet{RahimiRecht}, as emerged as a key technology for scaling up 
kernel methods~\citep{KernelsHilbertJSM}.

Close examination of those empirical results reveals that to achieve state-of-the-art
accuracies, a very large number of random features was needed. For example, on 
\texttt{TIMIT}, a classical speech recognition dataset, over 200,000 random features 
were used in order to match DNN performance~\citep{KernelTIMIT}.
It is clear that improving the efficiency of random features can have a significant
impact on our ability to scale up kernel methods, and potentially get even 
higher accuracies. 

This paper is the first to exploit high-dimensional approximate integration
techniques from the QMC literature in this context, with
promising empirical results backed by rigorous theoretical analyses. 
Avenues for future work include incorporating
stronger data-dependence in the estimation of adaptive sequences and analyzing
how resulting Gram matrix approximations translate into downstream performance
improvements for a variety of large-scale learning tasks.

\section*{Acknowledgements}
The authors would like to thank Josef Dick for useful pointers to literature about improvement of the QMC sequences;
Ha Quang Minh for several discussions on Paley-Wiener spaces and RKHS theory;
the anonymous ICML reviewers for pointing out the connection to herding and other
helpful comments; the anonymous JMLR reviewers for suggesting weighted sequences and other
helpful comments.
This research was supported by the XDATA program of the Defense Advanced Research Projects Agency (DARPA), administered through Air Force Research Laboratory contract FA8750-12-C-0323.
This work was done while J. Yang was a summer intern at IBM Research.

\appendix
\section{Technical Details} \label{proofs}
In this section we give detailed proofs of the assertions made in Section~\ref{sec:qmc_theory} and~\ref{sec:qmc_adaptive}.

\subsection{Proof of Proposition~\ref{prop:unbounded-vhk}}
Recall,
for any $\vv{t} \in \reals^d$, for $\Phi^{-1}(\vv{t})$, we mean $\left(\Phi_1^{-1}(t_1), \ldots, \Phi_d^{-1}(t_d) \right) \in \reals^d $,
where $\Phi_j(\cdot)$ is the CDF of $p_j(\cdot)$.

From $f_{\vv{u}}(\vv{t}) = e^{-i \vv{u}^T \Phi^{-1}(\vv{t})}$,
for any $j = 1,\ldots,d$, we have
$$  \frac{\partial f(\vv{t})}{\partial \vv{t}_j} = (-i) \frac{u_j}{p_j(\Phi_j^{-1}(t_j))} e^{-i \vv{u}^T \Phi_j^{-1}(\vv{t})}~. $$
Thus,
$$  \frac{\partial^d f(\vv{t})}{\partial t_1 \cdots \partial t_d} = \prod_{j=1}^{d} \left( (-i) \frac{u_j}{p_j(\Phi_j^{-1}(t_j))} \right) e^{-i \vv{u}^T \Phi_j^{-1}(\vv{t})}~. $$
In \eqref{eq:Vhk}, when $I = [d]$,
\begin{eqnarray}
 \int_{[0,1]^{|I|}}\left| \frac{\partial f}{\partial \vv{u}_I} \right| d\vv{t}_I
 &=& \int_{[0,1]^{d}}\left|  \frac{\partial^d f(\vv{t})}{\partial t_1 \cdots \partial t_d} \right| d t_1\cdots d t_d \nonumber \\
 &=&  \int_{[0,1]^{d}}\left| \prod_{j=1}^{d} \left( (-i) \frac{u_j}{p_j(\Phi_j^{-1}(t_j))} \right) e^{-i \vv{u}^T \Phi_j^{-1}(\vv{t})} \right| d t_1\cdots d t_d \nonumber \\
 &=&  \int_{[0,1]^{d}} \prod_{j=1}^{d}  \left|  \frac{u_j}{p_j(\Phi_j^{-1}(t_j))} \right| d t_1\cdots d t_d \nonumber \\
 &=&  \prod_{j=1}^d \left( \int_{[0,1]}\left| \frac{u_j}{p_j(\Phi_j^{-1}(t_j))}  \right| d t_j \right)~. \label{eq:unbdd_tmp}
\end{eqnarray}
With a change of variable, $\Phi_j(t_j) = v_j$, for $j=1,\ldots,d$, \eqref{eq:unbdd_tmp} becomes
 \begin{equation*}
  \prod_{j=1}^d \left( \int_{[0,1]}\left| \left( \frac{u_j}{p_j(\Phi_j^{-1}(t_j))} \right) \right| d t_j \right)
 =    \prod_{j=1}^d \left( \int_{\reals} | u_j | d v_j \right) = \infty~.
 \end{equation*}
As this is a term in \eqref{eq:Vhk}, we know that $V_{HK}[ f_{\vv{u}}(\vv{t})]$ is unbounded.

\subsection{Proof of Proposition~\ref{prop:int-err-rkhs}}

We need the following lemmas, across which we share some notation.
\def\EfX{\Expect{f(X)}}
\begin{lemma}
\label{lem:rkhs_finite_integral}
Assuming that $\kappa = \sup_{\x\in \reals^d} h(\x, \x) < \infty$, if $f\in \H$, where $\H$ is an RKHS with
kernel $h(\cdot, \cdot)$, the integral $\int_{\reals^d} f(\x) p(\x) d\x$ is
finite.
\end{lemma}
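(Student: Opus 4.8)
The plan is to reduce the claim to a uniform pointwise bound on $|f|$ and then integrate against $p(\cdot)$, using the fact that $p$ is a probability density. The crucial observation is that a bounded kernel diagonal, $\kappa = \sup_{\x} h(\x,\x) < \infty$, forces every function in the RKHS to be uniformly bounded, after which integrability against any probability measure is automatic.

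First I would invoke the reproducing property to write, for every $\x \in \reals^d$,
$$
f(\x) = \langle f, h(\x, \cdot) \rangle_\H,
$$
and then apply the Cauchy--Schwarz inequality in $\H$ to get $|f(\x)| \leq \Vert f \Vert_\H \Vert h(\x, \cdot) \Vert_\H$. To control the second factor, I would apply the reproducing property a second time, now to the function $h(\x, \cdot) \in \H$ itself, which gives
$$
\Vert h(\x, \cdot) \Vert_\H^2 = \langle h(\x, \cdot), h(\x, \cdot) \rangle_\H = h(\x, \x) \leq \kappa.
$$
Combining these two steps yields the uniform bound $|f(\x)| \leq \sqrt{\kappa}\,\Vert f \Vert_\H$ for all $\x \in \reals^d$.

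With this bound in hand, the conclusion follows from a single estimate. Since $p(\cdot)$ is a probability density,
$$
\left| \int_{\reals^d} f(\x) p(\x)\, d\x \right| \leq \int_{\reals^d} |f(\x)| p(\x)\, d\x \leq \sqrt{\kappa}\,\Vert f \Vert_\H \int_{\reals^d} p(\x)\, d\x = \sqrt{\kappa}\,\Vert f \Vert_\H < \infty.
$$
I do not expect a genuine obstacle here; the argument is routine once one recognizes that boundedness of the diagonal is exactly what converts a merely square-summable RKHS element into a uniformly bounded function. The only point requiring a word of care is that the integral be well defined, i.e., that $f$ be measurable; this is immediate in our setting because the kernel of interest (e.g., $\sinc_\b$) is continuous, so every $f \in \H$ is continuous, and the uniform bound above then guarantees its integrability against the probability measure with density $p(\cdot)$.
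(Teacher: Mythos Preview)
Your proof is correct and follows essentially the same route as the paper: apply the reproducing property, then Cauchy--Schwarz, then the bound $\Vert h(\x,\cdot)\Vert_\H = \sqrt{h(\x,\x)} \leq \sqrt{\kappa}$, and finally integrate against the probability density $p(\cdot)$. The only cosmetic difference is that the paper carries out these steps inside an expectation (writing $\left|\Expect{f(X)}\right| \leq \Expect{|f(X)|} \leq \Vert f\Vert_\H \Expect{\sqrt{h(X,X)}} \leq \sqrt{\kappa}\,\Vert f\Vert_\H$), whereas you first isolate the uniform pointwise bound $|f(\x)| \leq \sqrt{\kappa}\,\Vert f\Vert_\H$ and then integrate; the content is identical.
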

\begin{proof}
For notational convenience, we  note that $$\int_{\reals^d} f(\x) p(\x) d\x =
\Expect{f(X)},$$  where $\Expect{\cdot}$ denotes expectation and  $X$ is a
random variable distributed according to the probability density $p(\cdot)$ on
$\reals^d$.

Now consider a linear functional $T$ that maps $f$ to $\Expect{f(X)}$,  i.e.,
\begin{equation} T[f] = \Expect{f(X)}. \label{eq:T}\end{equation}  The linear
functional $T$ is a bounded linear functional on the RKHS $\H$. To see this:
\begin{eqnarray}
|  \EfX | &\leq& \Expect{|f (X) |}~~~~\textrm{(Jensen's Inequality)}\nonumber\\
&=& \Expect {| \langle f, h(X, \cdot)\rangle_\H|}~~~~\textrm{(Reproducing
Property)} \nonumber\\
&\leq& \|f\|_{\H} \Expect{\| h(X, \cdot) \|_{\H}}
~~~~\textrm{(Cauchy-Schwartz)}\nonumber\\
&\leq & \|f\|_{\H} \Expect{\sqrt{h(X,X)}} = \|f\|_{\H} \sqrt{\kappa} <
\infty~.\nonumber
\end{eqnarray} This shows that the integral $\int_{\reals^d} f(\x) p(\x) d\x$
exists.
\end{proof}

\begin{lemma}
\label{lem:rkhs_expe}
The mean $\mu_{h,p}(\u) = \int_{\reals^d} h(\u, \x) p(\x) d\x$ is in $\H$. In
addition, for any $f\in \H$,
\begin{equation}
\Expect{f(X)} =  \int_{\reals^d} f(\x) p(\x) d\x = \langle f, \mu_{h,p}\rangle_{\H}~.\label{eq:swap}
\end{equation}
\end{lemma}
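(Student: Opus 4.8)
The plan is to build directly on Lemma~\ref{lem:rkhs_finite_integral}, which already established that the functional $T[f] = \Expect{f(X)} = \int_{\reals^d} f(\x) p(\x) d\x$ is a \emph{bounded} linear functional on $\H$, with $|T[f]| \le \sqrt{\kappa}\,\|f\|_\H$. First I would invoke the Riesz representation theorem: since $T$ is bounded and linear, there is a unique element $g \in \H$ with $T[f] = \langle f, g\rangle_\H$ for every $f \in \H$. This immediately produces an element of $\H$ that represents the expectation, so the entire remaining task is to identify $g$ with the candidate $\mu_{h,p}$; doing so simultaneously proves $\mu_{h,p}\in\H$ and establishes~\eqref{eq:swap}.

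To carry out the identification I would evaluate $g$ pointwise using the reproducing property. For each fixed $\u \in \reals^d$ the function $h(\u,\cdot)$ lies in $\H$, so $T$ may be applied to it, giving on one hand $T[h(\u,\cdot)] = \int_{\reals^d} h(\u,\x) p(\x) d\x = \mu_{h,p}(\u)$ straight from the definition of the kernel mean, and on the other hand $T[h(\u,\cdot)] = \langle h(\u,\cdot), g\rangle_\H = \overline{g(\u)}$ by the reproducing property together with the Hermitian symmetry of the inner product. Comparing the two expressions pins down $g$ at every point, and since the kernel relevant here (the $\sinc_\b$ kernel) is real-valued and symmetric, $\mu_{h,p}$ is real and we obtain $g = \mu_{h,p}$. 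Because elements of an RKHS are determined by their pointwise values, this identifies the abstract Riesz representer with the concrete function $\mu_{h,p}$, yielding both $\mu_{h,p}\in\H$ and $\Expect{f(X)} = \langle f, \mu_{h,p}\rangle_\H$.

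An equivalent route, which proves membership more transparently, is to define $\mu_{h,p}$ as the Bochner integral $\int_{\reals^d} h(\cdot,\x) p(\x) d\x$ taken in $\H$: its existence follows from $\|h(\cdot,\x)\|_\H = \sqrt{h(\x,\x)} \le \sqrt{\kappa}$ being integrable against the probability density $p$, and~\eqref{eq:swap} then follows by interchanging the continuous functional $\langle f, \cdot\rangle_\H$ with the integral. I expect the only genuine subtlety, and hence the main point to get right, to be the bookkeeping of complex conjugates: one must track the conjugate-linearity of the inner product in its second argument and the Hermitian symmetry of $h$, and confirm that for the real symmetric kernel used in the paper these conjugations are harmless. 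Everything else (boundedness of $T$, existence of the representer, and the reproducing identity) is either already supplied by Lemma~\ref{lem:rkhs_finite_integral} or is a direct application of standard Hilbert-space facts.
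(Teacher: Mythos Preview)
Your proposal is correct and follows essentially the same route as the paper: invoke Riesz (using the boundedness established in Lemma~\ref{lem:rkhs_finite_integral}) to obtain a representer, then identify it pointwise with $\mu_{h,p}$ by plugging $f=h(\u,\cdot)$ into the reproducing property. The paper is slightly less careful than you about the complex conjugation (it writes $\mu_{h,p}(\z)=\langle h(\z,\cdot),\mu_{h,p}\rangle_\H$ directly), but as you correctly note, for the real symmetric $\sinc_\b$ kernel this is harmless; your Bochner-integral alternative is a valid extra route the paper does not mention.
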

\begin{proof}
From the Riesz Representation Theorem, every bounded linear functional on $\H$
admits an inner product representation. Therefore, for $T$ defined in~\eqref{eq:T}, 
there exists $\mu_{h,p} \in \H$ such that,
$$
T[f] = \Expect{f(X)}  = \langle f, \mu_{h,p} \rangle_{\H}~.
$$
Therefore we have,  $\langle f, \mu_{h,p} \rangle_{\H} = \int f(\x) p(\x) d\x$ for
all $f \in \H$. For any $\z$, choosing $f(\cdot) = h(\z, \cdot)$, where $h(\cdot, \cdot)$ is
the kernel associated with $\H$,  and invoking the reproducing property we see that,
$$
\mu_{h,p}(\z) = \langle h(\z, \cdot), \mu_{h,p} \rangle_{\H} = \int_{\reals^d} h(\z, \x) p(\x) d\x~.
$$
\end{proof}

The proof of Proposition~\ref{prop:int-err-rkhs} follows from the existence
Lemmas above, and the following steps.

 \begin{eqnarray*}
 \epsilon_{S,p}[f] & = &  \left| \int_{\reals^d} f(\x) p(\x) d\x -
                             \frac{1}{s} \sum_{l=1}^s f(\w_l) \right| \\
 					& = & \left| \langle f, \mu_{h,p} \rangle_{\H} - \frac{1}{s} \sum_{l=1}^s \langle
 					f, h(\w_l, \cdot) \rangle_{\H} \right|\\
 					& = & \left| \langle f, \mu_{h,p} - \frac{1}{s} \sum_{l=1}^s h(\w_l, \cdot)
 					\rangle_{\H} \right| \\
 					& \leq & \left\|f\right\|_{\H} \left\|\mu_{h,p} - \frac{1}{s} \sum_{l=1}^s h(\w_l,
 					\cdot)\right\|_{\H} = \left\|f\right\|_{\H} D_{h,p} (S)~,
 \end{eqnarray*}
where $D_{h,p}(S)$ is given as follows, \begin{eqnarray*} D_{h,p}(S)^2
&=& \left\|\mu_{h,p} - \frac{1}{s} \sum_{l=1}^s h(\w_l, \cdot)\right\|^2_{\H}\nonumber\\
&=& \langle \mu_{h,p}, \mu_{h,p} \rangle_{\H} - \frac{2}{s} \sum_{l=1}^s \langle \mu_{h,p}, h(\w_l,
\cdot) \rangle_{\H}  + \frac{1}{s^2} \sum_{l=1}^s \sum_{j=1}^s \langle h(\w_l,
\cdot), h(\w_j, \cdot)
\rangle_{\H}\nonumber\\
&=& \Expect{\mu_{h,p}(X)} -   \frac{2}{s} \sum_{l=1}^s \Expect{h(\w_l,
\cdot)} + \frac{1}{s^2} \sum_{l=1}^s \sum_{j=1}^s h(\w_l, \w_j) \nonumber\\
&=& \int _{\reals^d}\int_{\reals^d} h(\omega, \phi) p(\omega)
p(\phi) d\omega d\phi -
\frac{2}{s}\sum^s_{l=1} \int_{\reals^d} h(\w_l, \omega) p(\omega) d\omega  \nonumber \\
 & & \qquad + \frac{1}{s^2}\sum_{l=1}^s \sum_{j=1}^s
h(\w_l,\w_j)~.
\end{eqnarray*}

\subsection{Proof of Theorem~\ref{thm:discr-pw}}
We apply \eqref{eq:worstcase_error} to the particular case of $h=\sinc_\b$. We have
\begin{eqnarray*}
\int_{\reals^d}\int_{\reals^d} h(\omega, \phi) p(\omega) p(\phi) d\omega d\phi & = &
    \pi^{-d} \int_{\reals^d}\int_{\reals^d} \prod^d_{j=1} \frac{\sin(b_j (\omega_j - \phi_j))}{\omega_j - \phi_j}
    p_j(\omega_j)p_j(\phi_j) d\omega d\phi \\
    & = &
    \pi^{-d}  \prod^d_{j=1}\int _\reals\int_\reals \frac{\sin(b_j (\omega_j - \phi_j))}{\omega_j - \phi_j}
    p_j(\omega_j)p_j(\phi_j) d\omega_j d\phi_j~,
\end{eqnarray*}
and
\begin{eqnarray*}
\sum^s_{l=1} \int_{\reals^d} h(w_l, \omega) p(\omega) d\omega & = &
    \pi^{-d} \sum^s_{l=1} \int_{\reals^d} \prod^d_{j=1} \frac{\sin(b_j (w_{lj} - \omega_j))}{w_{lj} - \omega_j} p_j(\omega_j) d\omega \\
    & = &
    \pi^{-d}  \sum^s_{l=1} \prod^d_{j=1} \int_{\reals^d} \frac{\sin(b_j (w_{lj} - \omega_j))}{w_{lj} - \omega_j} p_j(\omega_j) d\omega_j~.
\end{eqnarray*}
So we can consider each coordinate on its own.

Fix $j$. We have
\begin{eqnarray*}
\int_\reals \frac{\sin(b_j x)}{x}p_j(x) dx & = & \int_\reals \int_0^{b_j} \cos(\beta x)p_j(x) d\beta dx \\
                                    & = & \frac{1}{2} \int_{-b_j}^{b_j} \int_\reals  e^{i \beta x}p(x) dx d\beta \\
                                    & = & \frac{1}{2} \int_{-b_j}^{b_j}\varphi_j(\beta) d\beta~.
\end{eqnarray*}
The interchange in the second line is allowed since the $p_j(x)$ makes the function integrable (with respect to $x$).

Now fix $w\in\reals$ as well. Let $h_j(x,y)=\sin(b_j(x-y))/\pi(x-y)$. We have
\begin{eqnarray*}
\int_\reals h_j(\omega, w) p_j(\omega) d\omega & = & \pi^{-1} \int_\reals \frac{\sin(b_j(\omega-w))}{\omega-w} p_j(\omega) d\omega \\
                                          & = & \pi^{-1} \int_\reals \frac{\sin(b_j x)}{x} p_j(x+w) dx \\
                                          & = & (2\pi)^{-1} \int_{-b_j}^{b_j}\varphi_j(\beta) e^{iw\beta}d\beta~,
\end{eqnarray*}
where the last equality follows from first noticing that the characteristic function associated with the density function
$x \mapsto p_j(x+w)$ is $\beta \mapsto\varphi(\beta) e^{iw\beta}$, and then applying the previous inequality.

We also have,
\begin{eqnarray*}
\int_\reals \int_\reals \frac{\sin(b_j(x-y))}{x-y}p_j(x)p_j(y) dx dy & = &
       \int_\reals \int_\reals \int_0^{b_j}\cos(\beta(x-y))p_j(x)p_j(y) d\beta dx dy \\
& = &  \frac{1}{2} \int_\reals \int_\reals \int_{-b_j}^{b_j} e^{i\beta(x-y)}p_j(x)p_j(y) d\beta dx dy \\
& = &  \frac{1}{2} \int_{-b_j}^{b_j}\int_\reals \int_\reals e^{i\beta(x-y)}p_j(x)p_j(y) dx dy d\beta \\
& = &  \frac{1}{2} \int_{-b_j}^{b_j} \left( \int_\reals e^{i \beta x}p_j(x) dx \right) \left( \int_\reals e^{-i\beta y}p_j(y)  dy \right) d\beta \\
& = &  \frac{1}{2} \int_{-b_j}^{b_j}\varphi_j(\beta) \varphi_j(\beta)^* d\beta \\
& = &  \frac{1}{2} \int_{-b_j}^{b_j}\lvert \varphi_j(\beta) \rvert^2 d\beta~.
\end{eqnarray*}
The interchange at the third line is allowed because of $p_j(x)p_j(y)$. In the last line we use the fact that the $\varphi_j(\cdot)$ is Hermitian.

\subsection{Proof of Theorem~\ref{thm:average-case}}
Let $b >0$ be a scalar, and let $u \in [-b, b]$ and $z \in \reals$. We have,
\begin{eqnarray*}
\int^{\infty}_{-\infty} e^{-iux} \frac{\sin(b(x-z))}{\pi(x-z)} dx & = &
    e^{-iuz} \int^{\infty}_{-\infty} e^{-i2\pi\frac{u}{2b}y} \frac{\sin(\pi y)}{\pi y} dy \\
    & = & e^{-iuz} \rect(u/2b) \\
    & = &  e^{-iuz}~.
\end{eqnarray*}
In the above, $\rect$ is the function that is 1 on $[-1/2, 1/2]$ and zero elsewhere.

The last equality implies that for every $\u \in \Box \b$ and every $\x \in \reals^d$ we have
$$
f_\u(\x) = \int_{\reals^d} f_\u(\y)\sinc_\b(\y,\x) d\y~.
$$

We now have for every $\u \in \Box \b$,
\begin{eqnarray*}
\epsilon_{S,p}[f_\u] & = &  \left| \int_{\reals^d} f_\u(\x) p(\x) d\x -
                            \frac{1}{s} \sum_{i=1}^s f(\w_i) \right| \\
& = & \left| \int_{\reals^d} \int_{\reals^d} f_\u(\y)\sinc_\b(\y,\x) d\y p(\x) d\x -
                            \frac{1}{s} \sum_{i=1}^s \int_{\reals^d} f_\u(\y)\sinc_\b(\y,\w_i) d\y \right| \\
& = & \left| \int_{\reals^d} f_\u(\y) \left[ \int_{\reals^d} \sinc_\b(\y,\x) p(\x) d\x -
                            \frac{1}{s} \sum_{i=1}^s  \sinc_\b(\y,\w_i) \right] d\y \right|~.
\end{eqnarray*}
Let us denote
$$
r_S (\y) = \int_{\reals^d} \sinc_\b(\y,\x) p(\x) d\x - \frac{1}{s} \sum_{i=1}^s  \sinc_\b(\y,\w_i)~.
$$
So,
$$
\epsilon_{S,p}[f_\u] = \left| \int_{\reals^d} f_\u(\y) r_S(\y) d\y \right|~.
$$

The function $r_S(\cdot)$ is square-integrable, so it has a Fourier transform $\hat{r}_S(\cdot)$. The above formula is exactly
the value of $\hat{r}_S(\u)$. That is,
$$
\epsilon_{S,p}[f_\u] = \left| \hat{r}_S(\u) \right|~.
$$
Now,
\begin{eqnarray*}
\ExpectSub{f\sim {\cal U}(\F_{\Box \b})}{\epsilon_{S,
p} [f]^2} & = & \ExpectSub{\u\sim {\cal U}(\Box \b)}{\epsilon_{S,
p} [f_\u]^2}\\
& = & \int_{\u \in \Box \b} \left| \hat{r}_S(\u) \right|^2 \left( \prod^d_{j=1} 2b_j \right)^{-1} d\u \\
& = & \left( \prod^d_{j=1} 2b_j \right)^{-1} \| \hat{r}_S \|^2_{L2} \\
& = & \frac{(2\pi)^{d}}{\prod^d_{j=1} 2b_j} \| r_S \|^2_{PW_\b} \\
& = & \frac{\pi^{d}}{\prod^d_{j=1} b_j} \Dbox_p(S)^2~.
\end{eqnarray*}
The equality before the last follows from Plancherel formula and the equality of the norm in $PW_\b$ to the $L2$-norm. The last equality follows from the fact that $r_S$ is exactly the expression used in the proof of Proposition~\ref{prop:int-err-rkhs}
to derive $\Dbox_p$.

\subsection{Proof of Corollary~\ref{cor:discr-gaussian}}
In this case, $p(\vv{x}) = \prod_{j=1}^d p_j(x_j)$ where $p_j(\cdot)$ is the density function of $\mathcal{N}(0,1/\sigma_j)$.
The characteristic function associated with $p_j(\cdot)$ is $\varphi_j(\beta) = e^{-\frac{\beta^2}{2\sigma_j^2}}$.
We apply \eqref{eq:discr-sinc} directly.

For the first term, since
\begin{eqnarray*}
    \int_0^{b_j} \lvert \varphi_j(\beta) \rvert^2 d\beta
    &=& \int_0^{b_j} e^{-\frac{\beta^2}{\sigma_j^2}} d\beta \\
    &=& \sigma_j \int_0^{b_j/\sigma_j} e^{-y^2} dy  \\
    &=& \frac{\sigma_j \sqrt{\pi}}{2} \erf\left(\frac{ b_j}{\sigma_j} \right)~,
\end{eqnarray*}
we have
\begin{equation}
     \label{eq:discr_gaussian_1}
   \pi^{-d} \prod_{j=1}^d \int_0^{b_j} \lvert\varphi_j(\beta) \rvert^2 d\beta =  \prod_{j=1}^d \frac{\sigma_j}{2 \sqrt{\pi}}  \erf\left(\frac{ b_j}{\sigma_j} \right)~.
 \end{equation}

For the second term, since
 \begin{eqnarray*}
   \int_{-b_j}^{b_j}\varphi_j(\beta) e^{i w_{lj}\beta} d\beta
   &=&  \int_{-b_j}^{b_j}  e^{-\frac{\beta_j^2}{2\sigma^2} + i w_{lj}\beta} d\beta  \\
   &=&  e^{-\frac{\sigma_j w_{lj}}{2}} \int_{-b_j}^{b_j} e^{ -\left( \frac{\beta}{\sqrt{2}\sigma_j} - i \frac{\sigma_j w_{lj}}{\sqrt{2}} \right)^2} d\beta \\
   &=& \sqrt{2}\sigma_j e^{-\frac{\sigma_j^2 w_{lj}^2}{2}} \int_{-\frac{b_j}{\sqrt{2}\sigma_j}}^{\frac{b_j}{\sqrt{2}\sigma_j}} e^{- \left(y - i\frac{\sigma_j w_{lj}}{\sqrt{2}} \right)^2} dy \\
   &=& \sqrt{2}\sigma_j e^{-\frac{\sigma_j^2 w_{lj}^2}{2}} \int_{-\frac{b_j}{\sqrt{2}\sigma_j} - i\frac{\sigma_j w_{lj}}{\sqrt{2} }}^{\frac{b_j}{\sqrt{2}\sigma_j} - i\frac{\sigma_j w_{lj}}{\sqrt{2}}} e^{-z^2} dz  \\
   &=& \frac{\sqrt{\pi}\sigma_j}{\sqrt{2}} e^{-\frac{\sigma_j^2 w_{lj}^2}{2}} \left( \erf\left(-\frac{b_j}{\sqrt{2}\sigma_j} - i\frac{\sigma_j w_{lj}}{\sqrt{2} }\right)
       - \erf\left(\frac{b_j}{\sqrt{2}\sigma_j} - i\frac{\sigma_j w_{lj}}{\sqrt{2} }\right) \right) \\
  &=& \sqrt{2\pi}\sigma_j e^{-\frac{\sigma_j^2 w_{lj}^2}{2}} \Real\left( \erf\left(-\frac{b_j}{\sqrt{2}\sigma_j} - i\frac{\sigma_j w_{lj}}{\sqrt{2} }\right) \right)~,
 \end{eqnarray*}
 we have
 \begin{equation}
   \label{eq:discr_gaussian_2}
   \frac{2}{s} (2\pi)^{-d}  \sum_{l=1}^s \prod_{j=1}^d \int_{-b_j}^{b_j}\varphi_j(\beta) e^{i\w_{lj}\beta}d\beta
   = \frac{2}{s} \sum_{l=1}^s \prod_{j=1}^d \frac{ \sigma_j}{\sqrt{2\pi}} e^{-\frac{\sigma_j^2 w_{lj}^2}{2}} \Real\left( \erf\left(-\frac{b_j}{\sqrt{2}\sigma_j} - i\frac{\sigma_j w_{lj}}{\sqrt{2} }\right) \right)~.
  \end{equation}

Combining \eqref{eq:discr_gaussian_1}, \eqref{eq:discr_gaussian_2} and \eqref{eq:discr-sinc}, \eqref{eq:discrepancy_gaussian} follows.

\subsection{Proof of Corollary~\ref{cor:disc_mc_general}}
The proof is similar to the proof of Theorem 3.6 of \citet{DKS13}.
Notice that since $\sup_{\x \in \reals^d} h(\x, \x) < \infty$, we have $\int_{\reals^d} h(\x,\x) p(\x) d\x < \infty$.
From Lemma~\ref{lem:rkhs_expe} we know that
$ \int_{\reals^d} h(\cdot, \y) p(\y) d\y \in \H$, hence from Lemma~\ref{lem:rkhs_finite_integral},
we have $\int_{\reals^d} \int_{\reals^d} h(\x,\y) p(\x) p(\y) d\x d\y < \infty$.

By \eqref{eq:worstcase_error}, we have
\begin{eqnarray*}
D_{h,p}(S)^2 &=& \int _{\reals^d}\int_{\reals^d} h(\omega, \phi) p(\omega)
p(\phi) d\omega d\phi \\
& &  - \frac{2}{s}\sum^s_{l=1} \int_{\reals^d}
h(\w_l, \omega) p(\omega) d\omega\\
& &  + \frac{1}{s^2}\sum_{l=1}^s
h(\w_l,\w_l) + \frac{1}{s^2}\sum_{l,j=1, l\neq j}^s h(\w_l,\w_j)~.
\end{eqnarray*}
Then,
\begin{equation*}
\begin{split}
\Expect{D_{h,p}(S)^2} =
\int_{[0,1]^d} \cdots \int_{[0,1]^d} \left(
\int _{\reals^d}\int_{\reals^d} h(\omega, \phi) p(\omega)
p(\phi) d\omega d\phi
- \frac{2}{s}\sum^s_{l=1} \int_{\reals^d}
h(\Phi^{-1}(\t_l), \omega) p(\omega) d\omega
\right. \\
\left.
+ \frac{1}{s^2}\sum_{l=1}^s
h(\Phi^{-1}(\t_l), \Phi^{-1}(\t_l)) + \frac{1}{s^2}\sum_{l,j=1, l\neq j}^s h(\Phi^{-1}(\t_l),\Phi^{-1}(\t_j)) \right) d\t_1 \cdots d\t_s
~.
\end{split}
\end{equation*}

Obviously, the first is a constant which is independent to $\t_1,\ldots,\t_s$.
Since all the terms are finite, we can interchange the integral and the sum among rest terms.
In the second term, for each $l$, the only dependence on $\t_1, \ldots, \t_s$ is $\t_l$, hence all the other $\t_j$ can be integrated out.
That is,
\begin{eqnarray*}
 \int_{[0,1]^d} \cdots \int_{[0,1]^d} \int_{\reals^d}
 h(\Phi^{-1}(\t_l), \omega) p(\omega) d\omega d\t_1 \cdots d\t_s &=& \int_{[0,1]^d} \int_{\reals^d}  h(\Phi^{-1}(\t_l), \omega) p(\omega) d\omega d\t_l \\
 &=& \int_{\reals^d} \int_{\reals^d} h(\phi, \omega) p(\phi) p(\omega) d\phi d\omega.
\end{eqnarray*}
Above, the last equality comes from a change of variable, i.e., $\t_l = \left(\Phi_1(\phi_1), \ldots, \Phi_d(\phi_d) \right)$.

Similar operations can be done for the third and fourth term.
Combining all of these, we have the following,
\begin{eqnarray*}
\Expect{D_{h,p}(S)^2} &=&
\int _{\reals^d}\int_{\reals^d} h(\omega, \phi) p(\omega)
p(\phi) d\omega d\phi
- 2 \int _{\reals^d}\int_{\reals^d} h(\omega, \phi) p(\omega)
p(\phi) d\omega d\phi  \\
&& + \frac{1}{s} \int _{\reals^d} h(\omega, \omega) p(\omega) d\omega
+ \frac{s-1}{s} \int _{\reals^d}\int_{\reals^d} h(\omega, \phi) p(\omega)
p(\phi) d\omega d\phi \\
&=& \frac{1}{s} \int _{\reals^d} h(\omega, \omega) p(\omega) d\omega - \frac{1}{s} \int _{\reals^d}\int_{\reals^d} h(\omega, \phi) p(\omega)
p(\phi) d\omega d\phi~.
\end{eqnarray*}

\subsection{Proof of Proposition~\ref{prop:gradient}}
Before we compute the derivative, we prove two auxiliary lemmas.
\begin{lemma}
\label{lem:gradient_1}
Let $\vv{x} \in \reals^d$ be a variable and $\vv{z} \in \reals^d$ be fixed vector.
Then,
  \begin{equation}
     \frac{\partial \sinc_\b(\x, \z)}{\partial x_j} = b_j \sinc'_{b_j}(x_j,z_j) \prod_{q\neq j} \sinc_{b_q}(x_q,z_q)~.
  \end{equation}
\end{lemma}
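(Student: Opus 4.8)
The plan is to exploit the product structure of the kernel and reduce the whole computation to a single one-dimensional differentiation. Recall that $\sinc_\b(\x,\z) = \pi^{-d}\prod_{q=1}^d \frac{\sin(b_q(x_q - z_q))}{x_q - z_q}$, which factorizes across coordinates as $\sinc_\b(\x,\z) = \prod_{q=1}^d \sinc_{b_q}(x_q, z_q)$ with the per-coordinate factor $\sinc_{b_q}(x_q,z_q) = \pi^{-1}\frac{\sin(b_q(x_q - z_q))}{x_q - z_q}$. Since $\z$ is fixed and only the $j$-th factor depends on $x_j$, the product rule collapses immediately: $\frac{\partial}{\partial x_j}\sinc_\b(\x,\z) = \left(\frac{\partial}{\partial x_j}\sinc_{b_j}(x_j,z_j)\right)\prod_{q\neq j}\sinc_{b_q}(x_q,z_q)$. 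This already produces the product term $\prod_{q\neq j}\sinc_{b_q}(x_q,z_q)$ on the right-hand side, so it remains only to identify the scalar derivative of the single factor.

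Next I would compute that one-variable derivative. Setting $r = x_j - z_j$ (so that $\partial/\partial x_j = d/dr$) and applying the quotient rule to $\pi^{-1}\sin(b_j r)/r$ yields $\frac{\partial}{\partial x_j}\sinc_{b_j}(x_j,z_j) = \pi^{-1}\left(\frac{b_j \cos(b_j r)}{r} - \frac{\sin(b_j r)}{r^2}\right)$. The goal is then to recognize this expression as $b_j\,\sinc'_{b_j}(x_j,z_j)$, where (consistent with the single-argument convention implicit in Proposition~\ref{prop:gradient}) $\sinc'_{b_j}(x_j,z_j)$ denotes $\sinc'_{b_j}(r) = \frac{b_j}{\pi}\sinc'(b_j r)$ evaluated at the difference $r = x_j - z_j$.

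The matching step is a direct substitution of the definition $\sinc'(z) = \frac{\cos z}{z} - \frac{\sin z}{z^2}$. Expanding,
\[
b_j\,\sinc'_{b_j}(r) = b_j\cdot\frac{b_j}{\pi}\left(\frac{\cos(b_j r)}{b_j r} - \frac{\sin(b_j r)}{(b_j r)^2}\right) = \pi^{-1}\left(\frac{b_j\cos(b_j r)}{r} - \frac{\sin(b_j r)}{r^2}\right),
\]
which coincides exactly with the derivative computed above, completing the identification. The only genuine subtlety, and the part requiring care rather than difficulty, is the singular point $r = 0$, where the kernel is defined by the conventions $\sin(b\cdot 0)/0 = b$ and $\sinc'(0) = 0$. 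Here I would argue that $g(r) = \pi^{-1}\sin(b_j r)/r$ is even and smooth, with Taylor expansion $g(r) = \pi^{-1}b_j - \pi^{-1}b_j^3 r^2/6 + \cdots$, so $g'(0) = 0$; on the right-hand side $b_j\sinc'_{b_j}(0) = \frac{b_j^2}{\pi}\sinc'(0) = 0$ as well, so both sides agree at $r=0$ and the formula holds everywhere. The main obstacle is thus purely bookkeeping—keeping the $\pi$ and $b_j$ factors straight when unwinding the definition of $\sinc'_b$ and verifying consistency at the removable singularity—rather than any conceptual hurdle.
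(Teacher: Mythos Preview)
Your proposal is correct and matches the paper's approach: the paper itself omits the proof entirely, remarking only that it is ``a simple computation that follows from the definition of $\sinc_\b$,'' and your write-up is precisely that computation carried out in full, including the appropriate handling of the notational convention $\sinc'_{b_j}(x_j,z_j)=\sinc'_{b_j}(x_j-z_j)$ and the removable singularity at $x_j=z_j$.
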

We omit the proof as it is a simple computation that follows from the definition of $\sinc_\b$.

\begin{lemma}
\label{lem:gradient_2}
The derivative of the scalar function $f(x) = \Real\left[ e^{-a x^2}
\erf\left( c + i d x\right) \right]$, for real scalars $a,c,d$ is given
by, $$\frac{\partial f}{\partial x} = -2ax e^{-ax^2} \Real\left[\erf\left( c + i d x\right) \right]
              + \frac{2d}{\sqrt{\pi}} e^{-ax^2} e^{d^2x^2 - c^2} \sin(2cdx)~. $$
\end{lemma}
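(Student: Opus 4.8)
The plan is to reduce everything to the product rule together with a single chain-rule differentiation of $\erf$ in the complex plane. First I would note that, since $a$ and $x$ are real, the factor $e^{-ax^2}$ is real, so the real part can be pulled out of the product and $f(x) = e^{-ax^2}\,\Real\left[\erf(c+idx)\right]$. The ordinary product rule then splits $f'(x)$ into two pieces. Differentiating the real factor gives $\frac{d}{dx}e^{-ax^2} = -2ax\,e^{-ax^2}$, which immediately accounts for the first term $-2ax\,e^{-ax^2}\Real\left[\erf(c+idx)\right]$ of the claimed formula.

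The remaining piece is $e^{-ax^2}\,\frac{d}{dx}\Real\left[\erf(c+idx)\right]$, so I need $\frac{d}{dx}\Real\left[\erf(c+idx)\right]$. Here I would use that, for a complex-valued function of a \emph{real} variable, differentiation in that variable commutes with taking the real part (write the function as $u(x)+iv(x)$ and observe $(\Real\,g)'=u'=\Real(g')$). Since $\erf$ is entire, its derivative $\erf'(z)=\frac{2}{\sqrt{\pi}}e^{-z^2}$ is valid for complex $z$, so the chain rule with $z=c+idx$, $dz/dx=id$, yields $\frac{d}{dx}\erf(c+idx)=\frac{2}{\sqrt{\pi}}\,id\,e^{-(c+idx)^2}$.

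It then remains to simplify $e^{-(c+idx)^2}$ and extract the real part. Expanding $(c+idx)^2=(c^2-d^2x^2)+2icdx$ gives, via Euler's formula, $e^{-(c+idx)^2}=e^{d^2x^2-c^2}\bigl(\cos(2cdx)-i\sin(2cdx)\bigr)$. Multiplying by $id$ and taking the real part of $\frac{2}{\sqrt{\pi}}\,id\,e^{d^2x^2-c^2}\bigl(\cos(2cdx)-i\sin(2cdx)\bigr)$ leaves precisely $\frac{2d}{\sqrt{\pi}}e^{d^2x^2-c^2}\sin(2cdx)$; restoring the factor $e^{-ax^2}$ produces exactly the second term of the lemma.

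This computation is essentially routine, and the two assembled pieces reproduce the stated expression. The only points that require care are the justification that differentiation in $x$ commutes with $\Real(\cdot)$, which is immediate from the real/imaginary decomposition, and the validity of the chain rule for $\erf$ at complex arguments, which holds because $\erf$ is entire. No convergence or interchange-of-limit issues arise, so I do not anticipate a genuine obstacle.
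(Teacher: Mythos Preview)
Your proof is correct and follows essentially the same route as the paper: product rule on the Gaussian factor times the $\erf$ factor, chain rule to differentiate $\erf(c+idx)$, and extraction of the real part after expanding $e^{-(c+idx)^2}$. The only cosmetic differences are that the paper writes $\Real(z)=\tfrac12(z+z^*)$ and re-derives $k'(x)$ by parameterizing the contour integral, whereas you pull the real factor $e^{-ax^2}$ outside $\Real(\cdot)$ up front and invoke $\erf'(z)=\tfrac{2}{\sqrt{\pi}}e^{-z^2}$ directly---your version is, if anything, a bit more streamlined.
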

\begin{proof}
Since
  \begin{eqnarray}
    f(x) &=& \frac{1}{2} \left( e^{-ax^2} \erf(c+idx) + \left( e^{-ax^2} \erf(c+idx) \right)^* \right) \nonumber \\
         &=&   \frac{1}{2} \left( e^{-ax^2} \erf(c+idx) +  e^{-ax^2} \erf(c-idx) \right)~,
  \end{eqnarray}
it suffices to compute the the derivative $g(x) = e^{-ax^2} \erf(c+idx)$.

Let $k(x) = \erf(c+idx)$. We have
    \begin{equation}
        g'(x) = -2ax e^{-ax^2} k(x) + e^{-ax^2} k'(x)~.
    \end{equation}
Since
  \begin{eqnarray}
       k(x) &=&  \erf(c+idx) \nonumber \\
             &=& \frac{2}{\sqrt{\pi}} \int^{c+idx}_0 e^{-z^2} dz \nonumber \\
     &=& \frac{2}{\sqrt{\pi}} \left( \int^{c}_0 e^{-z^2} dz + \int^{c+idx}_c e^{-z^2} dz \right) \nonumber \\
     &=&  \frac{2}{\sqrt{\pi}} \left( \int^{c}_0 e^{-y^2} dy + (id) \int^x_0 e^{-(c+idt)^2} dt \right)~,
  \end{eqnarray}
  we have
   \begin{equation}
     k'(x) = \frac{2}{\sqrt{\pi}} e^{-(c+idx)^2} =  \frac{2d}{\sqrt{\pi}} e^{d^2x^2 - c^2} ( \sin(2cdx) + i \cos(2cdx) )~.
  \end{equation}

We now have
     \begin{eqnarray}
      f'(x) &=& \frac{1}{2} \left( g'(x) + (g^*(x))' \right)  \nonumber \\
         &=& \frac{1}{2} \left(  g'(x) + (g'(x))^* \right)  \nonumber \\
         &=& \frac{1}{2} \left( -2ax e^{-ax^2} (k(x) + k^*(x)) + e^{-ax^2} ( k'(x) + (k'(x))^* ) \right) \nonumber \\
         &=& \frac{1}{2} \left( -4ax e^{-ax^2} \Real\left[\erf\left( c + i d x\right) \right]
              + e^{-ax^2} \frac{4d}{\sqrt{\pi}} e^{d^2x^2 - c^2} \sin(2cdx)   \right)  \nonumber \\
         &=& -2ax e^{-ax^2} \Real\left[\erf\left( c + i d x\right) \right]
              + \frac{2d}{\sqrt{\pi}} e^{-ax^2} e^{d^2x^2 - c^2} \sin(2cdx)~.
     \end{eqnarray}
\end{proof}

\begin{proof}[Proof of Proposition~\ref{prop:gradient}]
For the first term in \eqref{eq:discrepancy_gaussian},
that is $\frac{1}{s^2}\sum_{m=1}^s \sum_{r=1}^s \sinc_\b(\w_m,\w_r)$,
to compute the partial derivative of $w_{lj}$, we only have to consider
when at least $m$ or $r$ is equal to $l$.
If $m = j =l$, by definition, the corresponding term in the summation is one.
Hence, we only have to consider the case when $m \neq r$.
By symmetry, it is equivalent to compute the partial derivative of the following
function $\frac{2}{s^2} \sum_{m=1,m\neq l}^s \sinc_\b(\w_l,\w_m)$.
Applying Lemma~\ref{lem:gradient_1}, we get the first term in \eqref{eq:gradient}.

Next, for the last term in \eqref{eq:discrepancy_gaussian},
we only have consider the term associated with one in the summation
and the term associated with $j$ in the product.
Since $\left(\frac{\sigma_j}{\sqrt{2\pi}}\right)e^{-\frac{\sigma_j^2 w^2_{lj}}{2}}
\Real\left(\erf\left(\frac{b_j}{\sigma_j\sqrt{2}} - i\frac{\sigma_j
w_{lj}}{\sqrt{2}}\right)\right)$ satisfies the formulation in Lemma~\ref{lem:gradient_2},
we can simply apply Lemma~\ref{lem:gradient_2} and get its derivative with respect to $w_{lj}$.

Equation \eqref{eq:gradient} follows by combining these terms.
\end{proof}

\bibliographystyle{plainnat}
\bibliography{qmc}

\begin{thebibliography}{55}
\providecommand{\natexlab}[1]{#1}
\providecommand{\url}[1]{\texttt{#1}}
\expandafter\ifx\csname urlstyle\endcsname\relax
  \providecommand{\doi}[1]{doi: #1}\else
  \providecommand{\doi}{doi: \begingroup \urlstyle{rm}\Url}\fi

\bibitem[Avron et~al.(2014)Avron, Nguyen, and Woodruff]{AHW14}
H.~Avron, H.~Nguyen, and D.~Woodruff.
\newblock Subspace embeddings for the polynomial kernel.
\newblock In Z.~Ghahramani, M.~Welling, C.~Cortes, N.D. Lawrence, and K.Q.
  Weinberger, editors, \emph{Advances in Neural Information Processing Systems
  (NIPS) 27}, pages 2258--2266. 2014.

\bibitem[Bach(2013)]{Bach_sharp}
F.~Bach.
\newblock Sharp analysis of low-rank kernel matrix approximations.
\newblock In \emph{Proceedings of the 26th {C}onference on {L}earning {T}heory
  ({COLT})}, pages 185--209, 2013.

\bibitem[Bach et~al.(2012)Bach, Lacoste{-}Julien, and Obozinski]{BachHerding}
F.~Bach, S.~Lacoste{-}Julien, and G.~Obozinski.
\newblock On the equivalence between herding and conditional gradient
  algorithms.
\newblock In \emph{Proceeding of the 29th International Conference in Machine
  Learning (ICML)}, 2012.

\bibitem[Belkin et~al.(2006)Belkin, Niyogi, and Sindhwani]{MR}
M.~Belkin, P.~Niyogi, and V.~Sindhwani.
\newblock Manifold regularization: A geometric framework for learning from
  labeled and unlabeled examples.
\newblock \emph{J. Mach. Learn. Res.}, 7:\penalty0 2399--2434, December 2006.

\bibitem[Berlinet and Thomas-Agnan(2004)]{BerlinetAgnanBook}
A.~Berlinet and C.~Thomas-Agnan.
\newblock \emph{{R}eproducing {K}ernel {H}ilbert {S}paces in Probability and
  Statistics}.
\newblock Kluwer Academic Publishers, 2004.

\bibitem[Bochner(1933)]{Bochner1933}
S.~Bochner.
\newblock Monotone funktionen, {S}tieltjes integrale und harmonische analyse.
\newblock \emph{Math. Ann.}, 108:\penalty0 378--410, 1933.

\bibitem[Boots et~al.(2013)Boots, Gretton, and Gordon]{RKHSPSR}
B.~Boots, A.~Gretton, and G.~J. Gordon.
\newblock Hilbert space embeddings of predictive state representations.
\newblock In \emph{Conference Uncertainty in Artificial Intelligence (UAI)},
  2013.

\bibitem[Bottou et~al.(2007)Bottou, Chapelle, DeCoste, and
  (Editors)]{KernelScalabilityBook}
L.~Bottou, O.~Chapelle, D.~DeCoste, and J.~Weston (Editors).
\newblock \emph{Large-scale Kernel Machines}.
\newblock MIT Press, 2007.

\bibitem[Caflisch(1998)]{Caflisch98}
R.~E. Caflisch.
\newblock Monte {C}arlo and {Q}uasi-{M}onte {C}arlo methods.
\newblock \emph{Acta Numerica}, 7:\penalty0 1--49, 1 1998.

\bibitem[Chen et~al.(2010)Chen, Welling, and Smola]{KernelHerding}
Y.~Chen, M.~Welling, and A.~Smola.
\newblock Super-samples from kernel herding.
\newblock In \emph{Conference on Uncertainty in Artificial Intelligence (UAI)},
  2010.

\bibitem[Cucker and Smale(2001)]{CS01}
F.~Cucker and S.~Smale.
\newblock On the mathematical foundations of learning.
\newblock \emph{Bull. Amer. Math. Soc.}, 39:\penalty0 1--49, 2001.

\bibitem[Dai et~al.(2014)Dai, Xie, He, Liang, Raj, Balcan, and Song]{DaiEtAl14}
B.~Dai, B.~Xie, N.~He, Y.~Liang, A.~Raj, M.-F. Balcan, and L.~Song.
\newblock Scalable kernel methods via doubly stochastic gradients.
\newblock In Z.~Ghahramani, M.~Welling, C.~Cortes, N.D. Lawrence, and K.Q.
  Weinberger, editors, \emph{Advances in Neural Information Processing Systems
  27}, pages 3041--3049. 2014.

\bibitem[Dick et~al.(2013)Dick, Kuo, and Sloan]{DKS13}
J.~Dick, F.~Y. Kuo, and I.~H. Sloan.
\newblock High-dimensional integration: The {Q}uasi-{M}onte {C}arlo way.
\newblock \emph{Acta Numerica}, 22:\penalty0 133--288, 5 2013.

\bibitem[{El Alaoui} and {Mahoney}(2014)]{EM14}
A.~{El Alaoui} and M.~W. {Mahoney}.
\newblock {Fast Randomized Kernel Methods With Statistical Guarantees}.
\newblock \emph{ArXiv e-prints}, November 2014.

\bibitem[Ghahramani and Rasmussen(2003)]{BMC}
Z.~Ghahramani and C.~E. Rasmussen.
\newblock Bayesian {M}onte {C}arlo.
\newblock In S.~Becker, S.~Thrun, and K.~Obermayer, editors, \emph{Advances in
  Neural Information Processing Systems 15}, pages 505--512. MIT Press, 2003.

\bibitem[Gittens and Mahoney(2013)]{GM13}
A.~Gittens and M.~W. Mahoney.
\newblock Revisiting the {N}ystr\"{o}m method for improved large-scale machine
  learning.
\newblock In \emph{Proc. of the 30th International Conference on Machine
  Learning (ICML)}, pages 567--575, 2013.
\newblock To appear in the Journal of Machine Learning Research.

\bibitem[Grant and Boyd(2014)]{cvx}
M.~Grant and S.~Boyd.
\newblock {CVX}: Matlab software for disciplined convex programming, version
  2.1.
\newblock \url{http://cvxr.com/cvx}, March 2014.

\bibitem[Grant and Boyd(2008)]{gb08}
M.~Grant and S.~Boyd.
\newblock Graph implementations for nonsmooth convex programs.
\newblock In V.~Blondel, S.~Boyd, and H.~Kimura, editors, \emph{Recent Advances
  in Learning and Control}, Lecture Notes in Control and Information Sciences,
  pages 95--110. Springer-Verlag Limited, 2008.

\bibitem[Hamid et~al.(2014)Hamid, Gittens, Xiao, and DeCoste]{HGXD14}
R.~Hamid, A.~Gittens, Y.~Xiao, and D.~DeCoste.
\newblock Compact random feature maps.
\newblock In \emph{Proc. of the 31st International Conference on Machine
  Learning (ICML)}, pages 19--27, 2014.

\bibitem[Harchaoui et~al.(2013)Harchaoui, Bach, Cappe, and
  Moulines]{KernelsHypothesisTesting}
Z.~Harchaoui, F.~Bach, O.~Cappe, and E.~Moulines.
\newblock Kernel-based methods for hypothesis testing: A unified view.
\newblock \emph{Signal Processing Magazine, IEEE}, 30\penalty0 (4):\penalty0
  87--97, July 2013.
\newblock \doi{10.1109/MSP.2013.2253631}.

\bibitem[Huang et~al.(2014)Huang, Avron, Sainath, Sindhwani, and
  Ramabhadran]{KernelTIMIT}
P.~Huang, H.~Avron, T.~Sainath, V.~Sindhwani, and B.~Ramabhadran.
\newblock Kernel methods match {D}eep {N}eural {N}etworks on {TIMIT}.
\newblock In \emph{IEEE International Conference on Acoustics, Speech, and
  Signal Processing (ICASSP)}, pages 205--209, 2014.

\bibitem[Husz{\'a}r and Duvenaud(2012)]{HerdingBayesian}
F.~Husz{\'a}r and D.~Duvenaud.
\newblock Optimally-weighted herding is {B}ayesian quadrature.
\newblock In \emph{Conference on Uncertainty in Artificial Intelligence (UAI)},
  2012.

\bibitem[Kar and Karnick(2012)]{KK12}
P.~Kar and H.~Karnick.
\newblock Random feature maps for dot product kernels.
\newblock In \emph{Proceedings of the Fifteenth International Conference on
  Artificial Intelligence and Statistics (AISTATS-12)}, pages 583--591, 2012.

\bibitem[Keerthi et~al.(2006)Keerthi, Chapelle, and DeCoste]{KCD06}
S.~S. Keerthi, O.~Chapelle, and D.~DeCoste.
\newblock Building support vector machines with reduced classifier complexity.
\newblock \emph{J. Mach. Learn. Res.}, 7:\penalty0 1493--1515, 2006.

\bibitem[Le et~al.(2013)Le, Sarl{\'o}s, and Smola]{LSS13}
Q.~Le, T.~Sarl{\'o}s, and A.~Smola.
\newblock Fastfood -- {A}pproximating kernel expansions in loglinear time.
\newblock In \emph{Proc. of the 30th International Conference on Machine
  Learning (ICML)}, pages 244--252, 2013.

\bibitem[Leobacher and Pillichschammer(2014)]{QMCBook}
G.~Leobacher and F.~Pillichschammer.
\newblock \emph{Introduction to Quasi-Monte Carlo Integration and
  Applications}.
\newblock Springer International Publishing, 2014.

\bibitem[Li et~al.(2010)Li, Ionescu, and Sminchisescu]{LIS10}
F.~Li, C.~Ionescu, and C.~Sminchisescu.
\newblock Random {F}ourier approximations for skewed multiplicative histogram
  kernels.
\newblock \emph{Pattern Recognition}, 6376:\penalty0 262--271, 2010.

\bibitem[{Lu} et~al.(2014){Lu}, {May}, {Liu}, {Bagheri Garakani}, {Guo},
  {Bellet}, {Fan}, {Collins}, {Kingsbury}, {Picheny}, and {Sha}]{LuEtAl14}
Z.~{Lu}, A.~{May}, K.~{Liu}, A.~{Bagheri Garakani}, D.~{Guo}, A.~{Bellet},
  L.~{Fan}, M.~{Collins}, B.~{Kingsbury}, M.~{Picheny}, and F.~{Sha}.
\newblock {How to Scale Up Kernel Methods to Be As Good As Deep Neural Nets}.
\newblock \emph{ArXiv e-prints}, November 2014.

\bibitem[Maji and Berg(2009)]{MB09}
S.~Maji and A.~C. Berg.
\newblock Max-margin additive classifiers for detection.
\newblock In \emph{IEEE 12th International Conference on Computer Vision
  (ICCV)}, pages 40--47, 2009.

\bibitem[Mori(1983)]{Mori83}
M.~Mori.
\newblock A method for evaluation of the error function of real and complex
  variable with high relative accuracy.
\newblock \emph{Publ. RIMS, Kyoto Univ.}, 19:\penalty0 1081--1094, 1983.

\bibitem[Niederreiter(1992)]{NiederreiterBook}
H.~Niederreiter.
\newblock \emph{Random number generation and {Q}uasi-{M}onte {C}arlo methods}.
\newblock Society for Industrial and Applied Mathematics, Philadelphia, PA,
  USA, 1992.

\bibitem[Parzen(1970)]{KernelsTimeSeries}
E.~Parzen.
\newblock Statistical inference on time series by {RKHS} methods.
\newblock In \emph{Proceedings of 12th Biennial Seminar Canadian Mathematical
  Congress on Time Series and Stochastic Processes: convexity and
  combinatorics}, 1970.

\bibitem[Peloso(2011)]{holomorphic}
M.~M. Peloso.
\newblock Classical spaces of holomorphic functions.
\newblock Technical report, Universit` di Milano, 2011.

\bibitem[Pham and Pagh(2013)]{PP13}
N.~Pham and R.~Pagh.
\newblock Fast and scalable polynomial kernels via explicit feature maps.
\newblock In \emph{Proceedings of the 19th ACM SIGKDD International Conference
  on Knowledge Discovery and Data Mining (KDD)}, pages 239--247, 2013.

\bibitem[Rahimi and Recht(2008)]{RahimiRecht}
A.~Rahimi and B.~Recht.
\newblock Random features for large-scale kernel machines.
\newblock In J.C. Platt, D.~Koller, Y.~Singer, and S.T. Roweis, editors,
  \emph{Advances in Neural Information Processing Systems 20}. 2008.

\bibitem[Raykar and Duraiswami(2007)]{RD06}
V.~C. Raykar and R.~Duraiswami.
\newblock Fast large scale gaussian process regression using approximate
  matrix-vector products, 2007.

\bibitem[Schoenberg(1942)]{Sch42}
I.~J. Schoenberg.
\newblock Positive definite functions on spheres.
\newblock \emph{Duke Mathematical Journal}, 9\penalty0 (1):\penalty0 96--108,
  03 1942.

\bibitem[Sch{\"o}lkopf and Smola(2002)]{KernelsBook}
B.~Sch{\"o}lkopf and A.~Smola, editors.
\newblock \emph{Learning with Kernels: Support Vector Machines, Regularization,
  Optimization and Beyond}.
\newblock MIT Press, 2002.

\bibitem[Sindhwani and Avron(2014)]{KernelsHilbertJSM}
V.~Sindhwani and H.~Avron.
\newblock High-performance kernel machines with implicit distributed
  optimization and randomization.
\newblock In \emph{JSM Proceedings, Tradeoffs in Big Data Modeling - Section on
  Statistical Computing}, 2014.
\newblock To appear in Technometrics.

\bibitem[Sloan and Wozniakowski(1998)]{QMCEfficiency}
I.~H. Sloan and H.~Wozniakowski.
\newblock When are {Q}uasi-{M}onte {C}arlo algorithms efficient for high
  dimensional integrals.
\newblock \emph{Journal of Complexity}, 14\penalty0 (1):\penalty0 1--33, 1998.

\bibitem[Smola et~al.(2007)Smola, Gretton, Song, and
  Schölkopf]{SmolaEmbeddings07}
A.~Smola, A.~Gretton, L.~Song, and B.~Schölkopf.
\newblock A {H}ilbert space embedding for distributions.
\newblock In \emph{Algorithmic Learning Theory}, volume 4754 of \emph{Lecture
  Notes in Computer Science}, pages 13--31. Springer Berlin Heidelberg, 2007.
\newblock ISBN 978-3-540-75224-0.

\bibitem[Song et~al.(2010)Song, Boots, Siddiqi, Gordon, and Smola]{RKHSHMM}
L.~Song, B.~Boots, S.~Siddiqi, G.~Gordon, and A.~Smola.
\newblock Hilbert space embeddings of {H}idden {M}arkov {M}odels.
\newblock In \emph{Proceedings of the 27th International Conference in Machine
  Learning (ICML)}, 2010.

\bibitem[Sreekanth et~al.(2010)Sreekanth, Vedaldi, Jawahar, and
  Zisserman]{SVJZ10}
V.~Sreekanth, A.~Vedaldi, C.~V. Jawahar, and A.~Zisserman.
\newblock Generalized {RBF} feature maps for efficient detection.
\newblock In \emph{Proceedings of the British Machine Vision Conference
  (BMVC)}, 2010.

\bibitem[Sriperumbudur et~al.(2010)Sriperumbudur, Gretton, Fukumizu, Scholkopf,
  and Lanckriet]{Sriperumbudur10}
B.~Sriperumbudur, A.~Gretton, K.~Fukumizu, B.~Scholkopf, and G.~Lanckriet.
\newblock Hilbert space embeddings and metrics on probability measures.
\newblock \emph{J. Mach. Learn. Res.}, 11:\penalty0 1517--1561, 2010.

\bibitem[Traub and Wozniakowski(1994)]{BreakingIntractability}
J.~F. Traub and H.~Wozniakowski.
\newblock Breaking intractability.
\newblock \emph{Scientific American}, pages 102--107, 1994.

\bibitem[Tsang et~al.(2005)Tsang, Kwok, and Cheung]{CVM05}
I.~W. Tsang, J.~T. Kwok, and P.~Cheung.
\newblock Core vector machines: Fast svm training on very large data sets.
\newblock \emph{J. Mach. Learn. Res.}, 6:\penalty0 363--392, December 2005.

\bibitem[Vedaldi and Zisserman(2012)]{VZ11}
A.~Vedaldi and A.~Zisserman.
\newblock Efficient additive kernels via explicit feature maps.
\newblock \emph{Pattern Analysis and Machine Intelligence, IEEE Transactions
  on}, 34\penalty0 (3):\penalty0 480--492, March 2012.

\bibitem[Wahba(1990)]{WahbaBook}
G.~Wahba, editor.
\newblock \emph{Spline Models for Observational Data}.
\newblock Society for Industrial and Applied Mathematics, Philadelphia, PA,
  USA, 1990.

\bibitem[Weideman(1994)]{Wei94}
J.~A.~C. Weideman.
\newblock Computation of the complex error function.
\newblock \emph{SIAM Journal of Numerical Analysis}, 31\penalty0 (5):\penalty0
  1497--1518, 10 1994.

\bibitem[Welling(2009)]{Welling2009}
M.~Welling.
\newblock Herding dynamical weights to learn.
\newblock In \emph{Proceedings of the 26th Annual International Conference on
  Machine Learning (ICML)}, pages 1121--1128, 2009.

\bibitem[Williams and Seeger(2001)]{WG01}
C.~K.~I. Williams and M.~Seeger.
\newblock Using the {N}ystr\"{o}m method to speed up kernel machines.
\newblock In \emph{Advances in Neural Information Processing Systems 13}, pages
  682--688. MIT Press, 2001.

\bibitem[Wozniakowski(1991)]{Wozniakowski}
H.~Wozniakowski.
\newblock Average case complexity of multivariate integration.
\newblock \emph{Bull. Amer. Math. Soc.}, 24:\penalty0 185--194, 1991.

\bibitem[Yang et~al.(2014)Yang, Sindhwani, Fan, Avron, and Mahoney]{YSFAM14}
J.~Yang, V.~Sindhwani, Q.~Fan, H.~Avron, and M.~Mahoney.
\newblock Random {L}aplace feature maps for semigroup kernels on histograms.
\newblock In \emph{Proc. of the IEEE Conference on Computer Vision and Pattern
  Recognition (CVPR)}, 2014.

\bibitem[Yao(1967)]{Yao67}
K.~Yao.
\newblock Applications of {R}eproducing {K}ernel {H}ilbert {S}paces -
  bandlimited signal models.
\newblock \emph{Inform. Control}, 11:\penalty0 429--444, 1967.

\bibitem[Zhang et~al.(2011)Zhang, Peters, Janzing, and
  Scholkopf]{KernelsCausality}
K.~Zhang, J.~Peters, D.~Janzing, and B.~Scholkopf.
\newblock Kernel based conditional independence test and application in causal
  discovery.
\newblock In \emph{Confernece on Uncertainty in Artificial Intelligence (UAI)},
  2011.

\end{thebibliography}

\end{document}